\documentclass[conference]{IEEEtran}

\usepackage{amsmath}
\usepackage{amsthm}
\usepackage[T1]{fontenc}
\usepackage{newtxmath}
\usepackage{diagbox}
\usepackage{slashbox}
\usepackage{graphicx}
\usepackage{multirow}
\usepackage{multicol}
\usepackage{booktabs}
\usepackage{tabularx}
\usepackage{adjustbox}
\usepackage{mathtools}
\usepackage{enumerate}
\usepackage{dblfloatfix}
\usepackage{algorithm}
\usepackage{color}
\usepackage[usenames,dvipsnames,table]{xcolor}

\usepackage{tikz}
\usepackage{tikz-qtree}
\usetikzlibrary{shapes, arrows, positioning}

\usepackage[sorting=nyt,sortcites=true]{biblatex}
\usepackage{hyperref}
\hypersetup{
    colorlinks=true,
    linkcolor=MidnightBlue,
    filecolor=magenta,      
    urlcolor=MidnightBlue,
    citecolor=MidnightBlue,
} 
\usepackage{bookmark}
\usepackage[font=small,labelfont=bf]{caption}

\newtheorem{assumption}{Assumption}

\newtheorem{theorem}{Theorem}
\newtheorem{proposition}{Proposition}
\newtheorem{corollary}{Corollary}

\definecolor{customblue}{HTML}{156082}
\definecolor{customgreen}{HTML}{196B24}
\definecolor{customorange}{HTML}{E97132}

\begin{document}

\title{{\Huge \textit{\textbf{Force Policy}}}: \\ {\huge Learning Hybrid Force-Position Control Policy under Interaction Frame for Contact-Rich Manipulation}}


\author{
\authorblockN{Hongjie Fang$^{*,1,3}$, Shirun Tang$^{*,1}$, Mingyu Mei$^{*,1,4}$, Haoxiang Qin$^3$, Zihao He$^3$, Jingjing Chen$^3$, } \authorblockN{Ying Feng$^{3,5}$, Chenxi Wang$^1$, Wanxi Liu$^{1,2}$, Zaixing He$^4$, Cewu Lu$^{1,2,3,5}$, Shiquan Wang$^{1,2,\dagger}$}  
\authorblockA{$^{1}$Noematrix\quad$^{2}$Flexiv\quad$^3$Shanghai Jiao Tong University\quad$^4$Zhejiang University\quad$^5$Shanghai Innovation Institute}  
\authorblockA{$^*$Equal Contribution \quad $^{\dagger}$Corresponding Author}
}

\makeatletter
\let\@oldmaketitle\@maketitle
\renewcommand{\@maketitle}{\@oldmaketitle
\centering\vspace{0.1cm}
\includegraphics[width=\linewidth]{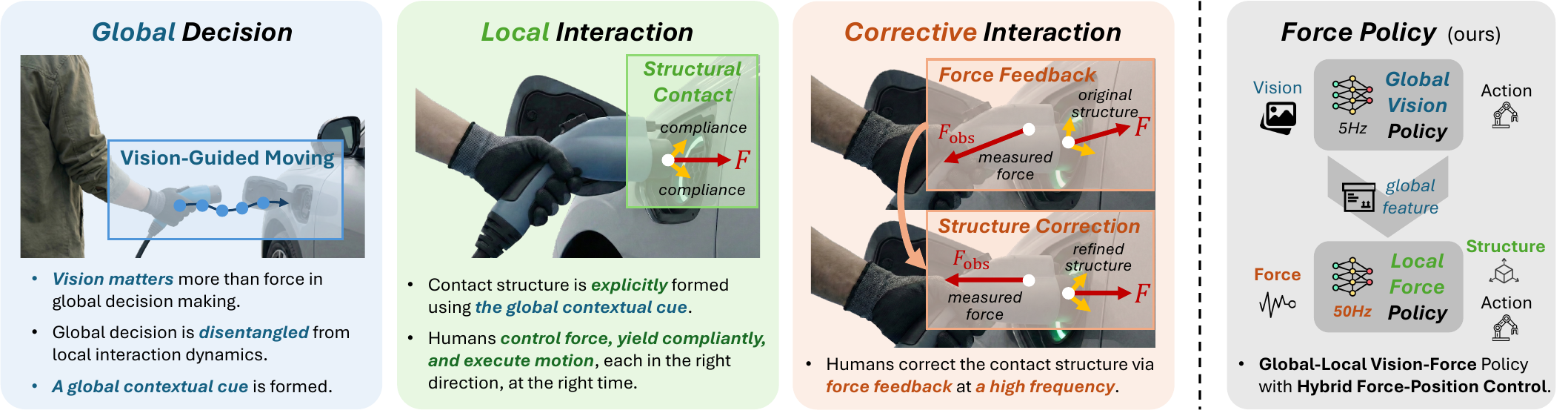}
\vspace{-0.4cm}
\captionof{figure}{\textbf{A Global-Local Vision-Force Policy Inspired by Human Interaction.} \textit{(Left)} During electric vehicle (EV) charging, humans \textbf{\color{customblue}use vision to guide global movement} and coarse alignment, and rely on \textbf{\color{customorange}force feedback} to continuously adjust the \textbf{\color{customgreen}local contact structure}. \textit{(Right)} This global-local organization motivates a vision-force policy with an explicit contact structure for stable hybrid force-position control.}
\label{fig:teaser}
\vspace{-0.3cm}
}%
\makeatother

\maketitle
\addtocounter{figure}{-1}

\begin{abstract}
Contact-rich manipulation demands human-like integration of perception and force feedback: vision should guide task progress, while high-frequency interaction control must stabilize contact under uncertainty. Existing learning-based policies often entangle these roles in a monolithic network, trading off global generalization against stable local refinement, while control-centric approaches typically assume a known task structure or learn only controller parameters rather than the structure itself. In this paper, we formalize a physically grounded interaction frame, an instantaneous local basis that decouples force regulation from motion execution, and propose a method to recover it from demonstrations. Based on this, we address both issues by proposing \textbf{\textit{Force Policy}}, a global-local vision-force policy in which a global policy guides free-space actions using vision, and upon contact, a high-frequency local policy with force feedback estimates the interaction frame and executes hybrid force-position control for stable interaction. Real-world experiments across diverse contact-rich tasks show consistent gains over strong baselines, with more robust contact establishment, more accurate force regulation, and reliable generalization to novel objects with varied geometries and physical properties, ultimately improving both contact stability and execution quality. Project website: \href{force-policy.github.io}{force-policy.github.io}.
\end{abstract}

\IEEEpeerreviewmaketitle

\section{Introduction}

Human dexterity is defined not merely by motion, but by the seamless integration of force modulation and feedback during physical interaction. Replicating this level of competence in robotic systems is the central challenge of contact-rich manipulation. Operations such as tightening a screw~\cite{jia2018survey}, peeling a vegetable~\cite{chen2024vegetable, foar}, or polishing a surface~\cite{acp, rdp} impose strict constraints where the robot must simultaneously control motion and interaction forces. Mastering contact-rich manipulation is essential for deploying robots in real-world settings, from factory assembly to everyday household tasks~\cite{suomalainen2022survey}.

A defining aspect of how humans succeed in contact-rich tasks is \textbf{a global-local organization of perception and control}~\cite{camponogara2021integration, venkadesan2008neural}, as illustrated in Fig.~\ref{fig:teaser} (left). At the global level, humans primarily use vision to decide \textit{what to do next}, \textit{where to go}, and \textit{how to align}~\cite{gharbawie2011cortical}. At the local level, once contact happens, humans rely on high-frequency force feedback to \textit{refine execution in a structured way}, applying or maintaining interaction where needed, yielding compliantly to uncertainty, and executing motion to proceed~\cite{goldring2022functional}. Functionally, this implies a clear division of roles: the global component should generalize across task variations, while the local component should guarantee stable contact. This suggests two core questions for robotics:
\begin{enumerate}
    \item[\textbf{(1)}] \textbf{Global-local organization.} How can we realize this global-local organization, so that global guidance generalizes well while local interaction remains stable?
    \item[\textbf{(2)}] \textbf{Interaction structure for control.} How can we represent the task interaction structure explicitly, so that it transfers across diverse contact skills?
\end{enumerate}

For \textbf{(1)}, many learning-based policies still adopt a monolithic design. Force signals are often appended to the observation~\cite{foar, forcevla} or used as auxiliary supervision~\cite{tavla}, but perception, planning, and contact refinement are learned jointly in a single end-to-end network. This creates an inherent trade-off: entangling local refinement with global perception makes it difficult to maintain a dedicated high-frequency interaction loop while improving global generalization by scaling up models. RDP~\cite{rdp} is one of the few works that explicitly introduces a slow-fast policy design, with fast reactive corrections handled by a force-aware tokenizer. However, its slow policy depends on the fast tokenizer rather than treating reactivity as a plug-and-play component, which limits reuse with arbitrary slow policies and complicates upgrading the slow policy due to the dependency of the fast reactive part.

For \textbf{(2)}, a substantial body of prior work approaches contact-rich manipulation through the control interface. Classical compliant control~\cite{mason_hybrid} provides principled tools such as hybrid position/force control~\cite{raibert_hybrid} and impedance control~\cite{hogan1985impedance}. Conventional approaches apply these tools through task-specific environment/contact modeling and manual parameter tuning to match a pre-defined interaction structure. Learning-based variants aim to reduce manual tuning by predicting controller parameters from data, such as admittance gains~\cite{equicontact, tacdiffusion}, or contact forces~\cite{conkey2019learning, forcemimic}. However, the interaction structure is usually assumed or implicit, and learning is applied mainly to tuning parameters rather than to representing the structure itself. ACP~\cite{acp} is a partial exception by predicting a stiffness matrix, but its directional formulation is insufficient for multi-axis constrained interactions such as peg insertion.

In this paper, we answer both questions by making the missing structure explicit and building a global-local vision-force policy around it. For \textbf{(2)}, we formalize the interaction frame, an instantaneous local basis that captures task-relevant interaction structure, from a physical perspective, and we present a principled approach to recover it directly from non-ideal real-world demonstrations. For \textbf{(1)}, we propose \textbf{\textit{Force Policy}}, a global-local vision-force policy with phase-wise authority, as shown in Fig.~\ref{fig:teaser} (right). A global vision policy drives task progress, while a high-frequency force module estimates the interaction frame and executes hybrid force-position control during contact. This decoupling preserves task-level planning while ensuring stable contact and reliable execution under uncertainty. Extensive real-world experiments show improved robustness and force regulation across diverse contact-rich tasks, while retaining strong generalization to novel objects with varied geometries and physical properties. Overall, \textbf{\textit{Force Policy}} combines the generalization of visuomotor policies with the precision of force control, enabling reliable contact-rich manipulation. 
\section{Related Works}

\subsection{Contact-Rich Manipulation}
Compared to free-space motion, contact-rich manipulation requires precise force regulation and rapid adaptation to contact constraints~\cite{whitney1987historical, suomalainen2022survey}. Uncertainties in contact geometry, friction, and material compliance make interaction dynamics difficult to model reliably~\cite{mason_hybrid, de1988compliant}. Classical robotics addresses these challenges with compliant control frameworks, including impedance control~\cite{hogan1985impedance, buchli2011learning}, admittance control~\cite{admittance_control, keemink2018admittance, kronander2016stability}, and hybrid force-position control~\cite{raibert_hybrid, khatib2003unified, chiaverini2002parallel}. Nonetheless, these approaches often rely on hand-tuned parameters and reasonably accurate environment assumptions, which can limit robustness in unstructured settings~\cite{villani2016force}.

Learning-based methods have recently improved contact-rich manipulation by learning interaction strategies directly from data. Reinforcement learning~\cite{kalakrishnan2011learning, levine2015learning, forge, mimictouch} can acquire complex contact behaviors through exploration, but frequently struggles with sim-to-real transfer due to mismatches in perception and contact dynamics~\cite{chukwurah2024sim}. Hence, imitation learning has increasingly leveraged richer sensing to better ground interaction, incorporating modalities like audio~\cite{maniwav, levine2015learning, seehearfeel, playtothescore,wang2025sound}, tactile~\cite{huang20243d, seehearfeel, playtothescore, eyesight_hand, dexop}, and force/torque signals~\cite{buamanee2024bi, acp, kamijo2024learning, forcemimic, tacdiffusion, zhou2025admittance}, to handle contact-rich manipulations.

\subsection{Force-Aware Manipulation Policies}

Recent advances have explored different strategies to incorporate force sensing into learning-based manipulation. A common strategy treats force as an auxiliary observation, fusing it with visual features to condition motion prediction. FoAR~\cite{foar} dynamically weights visual and force via contact-phase prediction. RDP~\cite{rdp} couples a visual latent diffusion policy with a high-frequency tactile-aware tokenizer for reactive execution. ForceVLA~\cite{forcevla} integrates force through a mixture-of-experts representation, while TA-VLA~\cite{tavla} adds torque prediction as an auxiliary objective to promote physically grounded features. 

A distinct body of work learns imitation policies within compliant control frameworks, spanning variable impedance and admittance formulations~\cite{acp, tacdiffusion, kamijo2024learning, dipcom, equicontact, zhou2025admittance}. Even when driven by predicted wrenches, these controllers often tie force realization to motion through a single compliance model, so what the policy should imitate can drift with contact geometry and local stiffness. In contrast, hybrid force-position control provides a more interpretable interface by explicitly separating constrained and free directions: it lets the policy express ``\textit{what should be enforced}'' (force objectives in constraint directions) and ``\textit{what should be achieved}'' (position objectives in free directions), yielding clearer credit assignment and more stable learning across contact transitions. While \cite{forcemimic} adopts a hybrid scaffold, it applies force control only along a single direction, capturing only a limited form of this factorization.

\subsection{Control Structure Discovery from Demonstrations}

The problem of inferring control structures from demonstration data has progressed from analytical geometric modeling~\cite{mason_hybrid, bruyninckx1996specification, bruyninckx1995kinematic} to data-driven learning approaches~\cite{ureche2015task, kober2015learning, subramani2018inferring}. Across this spectrum, the goal is to recover the underlying control structure of a task, typically a decomposition that enables compliant manipulation. Because this structure is induced by task constraints, inferring the control structure is essentially equivalent to inferring the constraints from demonstration data.

Task frame formalization (TFF)~\cite{mason_hybrid, bruyninckx1996specification} provides the canonical force-motion subspace decomposition, but typically assumes explicit geometry and rigid contact. Subsequent works infer geometric constraints from demonstrations via twist-wrench analysis~\cite{suomalainen2017geometric, suomalainen2016learning, suomalainen2021imitation, suomalainen2018learning} or by fitting a set of pre-defined constraint models~\cite{subramani2018inferring}, with refinements using interactive perception~\cite{li2022learning, perez2017c} or augmentation~\cite{li2023augmentation}. Some works~\cite{conkey2019learning, forcemimic} exploit contact force direction as the force control axis in hybrid force-position control~\cite{raibert_hybrid}. Recent attempts improve robustness by aggregating frame estimates from multiple model-based fits using statistical fusion~\cite{analytic_if}, or by estimating task-aligned frames via kinematics~\cite{kinematic_only} or power-based objectives~\cite{interaction_frame}.

In this work, we generalize the definition of task frame into a physically-grounded interaction frame and propose compact approximations based on energy dissipation, enabling reliable recovery from data for different contact-rich tasks.

\section{Theoretical Formulation}

\subsection{Interaction Frame: A Physical Perspective}

In classical TFF~\cite{bruyninckx1996specification}, the interaction frame is prescribed based on known geometric models. However, in unstructured environments where geometry is unknown or uncertain, such a prescription is infeasible. To bridge this gap, we reformulate \textbf{interaction frame (IF)}, described in~\cite{interaction_frame}, \textit{purely from the physical response}, using stiffness as a proxy to estimate the local geometry. We focus on tasks maintaining \textit{topologically invariant contact}, excluding irreversible changes like fracture and plastic deformations. For these \textit{locally conservative} contacts, the environmental stiffness is \textit{geometry-induced}, as resistance arises directly from physical boundaries. This inherent physical causality implies that the principal axes of stiffness structurally align with the local surface geometry, allowing us to recover the geometric frame from physical observations.

Formally, we model the local environment response at the interaction point $p_I$ by \textit{symmetric} stiffness $\mathbf K_{\mathrm{env}}(p_I)$. We spectrally decompose $\mathbf K_{\mathrm{env}}(p_I)$ into principal stiffnesses $\lambda_i$ and axes $\mathbf q_i$, partitioning the interaction space into the \textbf{constraint subspace} $\mathcal U(p_I)$ with high stiffness $\lambda_i \gg 0$ and \textbf{admissible-motion subspace} $\mathcal T(p_I)$ with negligible stiffness $\lambda_i \approx 0$. 

Crucially, physical interaction is \textit{goal-directed}. We introduce the task intent $\{\boldsymbol{\xi}^*(p_I), \boldsymbol{\mathcal W}^*(p_I)\}$ to represent the \textit{driving factors} of the interaction --- specifically, the intended motion twist $\boldsymbol{\xi}^*(p_I)$ and interaction wrench $\boldsymbol{\mathcal W}^*(p_I)$. This formulation distinguishes our approach from TFF, where ideal variables in~\cite{bruyninckx1996specification} describe the resultant kinematic constraints without differentiating between causal actuation and environmental reaction. In our framework, \textbf{the intent represents the active cause compatible with the physical structure}: the motion intent $\boldsymbol{\xi}^*(p_I)$ acts as the driver within the admissible-motion subspace $\mathcal T(p_I)$, while the wrench intent $\boldsymbol{\mathcal W}^*(p_I)$ acts as the driver against the constraint subspace $\mathcal U(p_I)$. We thus define the IF $\Sigma(p_I)$ by anchoring the spectral axes derived from $\mathbf K_{\mathrm{env}}(p_I)$ to these driving intents:
\begin{equation}
\Sigma(p_I)\triangleq \Psi\left(\mathbf K_{\mathrm{env}}(p_I),\boldsymbol{\xi}^*(p_I), \boldsymbol{\mathcal W}^*(p_I)\right).
\end{equation}
Specifically, we set the $z$-axis of IF to the direction of the dominant wrench component in $\boldsymbol{\mathcal W}^*$. We then define the $x$-axis by projecting the motion intent $\boldsymbol{\xi}^*$ onto the plane orthogonal to $z$. In degenerate cases where this projection is zero, \textit{e.g.}, static holding with negligible motion or screw driving with collinear motion, the interaction is effectively transversely \textit{isotropic} in the constraint plane, so the rotation about $z$ is physically irrelevant. We therefore default $\Psi$ to a canonical reference and complete the frame by the right-hand rule.

To illustrate this formulation in practice, Fig.~\ref{fig:example} visualizes the derived IF for a representative set of contact-rich tasks. This physics-aware formulation retains the orthogonal decomposition power of TFF but replaces geometric priors with environmental compliance, making it intrinsically suitable for unstructured tasks. Having defined the frame theoretically, we next address the practical challenge of recovering the IF directly from interaction data.

\begin{figure}
    \centering
    \includegraphics[width=0.87\linewidth]{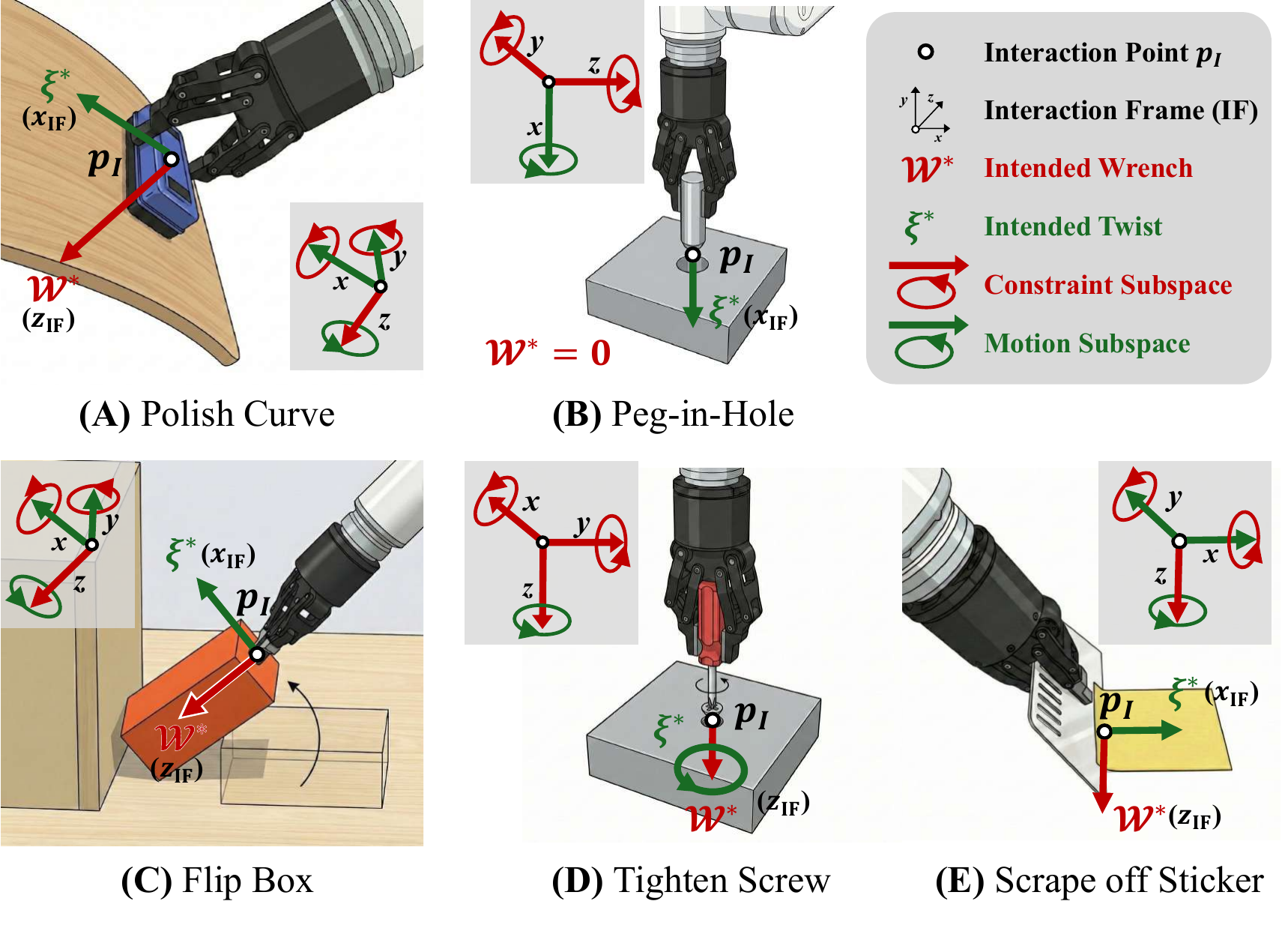}
    \caption{\textbf{Interaction Frame for Example Contact-Rich Tasks.}}
    \label{fig:example}
    \vspace{-0.5cm}
\end{figure}

\subsection{Recovering Interaction Frame from Interaction Signals}\label{sec:recovering_if}

We aim to recover IF directly from the observed interaction signals $\boldsymbol{\xi}$ and $\boldsymbol{\mathcal{W}}$. Assume that the gravity and inertial forces are compensated from the observed wrench. In the following, we omit the dependence of $p_I$ in the notation for brevity. 

Under ideal \textit{rigid and frictionless} contact, the observed twist and wrench perfectly align with the task intent, \textit{i.e.}, $\boldsymbol{\xi} = \boldsymbol{\xi}^*$ and $\boldsymbol{\mathcal W} = \boldsymbol{\mathcal W}^*$, yielding zero power: $\boldsymbol{P} = \boldsymbol{\mathcal{W}}^\top \boldsymbol{\xi} = \textbf{0}$. In the non-ideal situations, the intended twist $\boldsymbol{\xi}^*$ may induce a parasitic wrench $\boldsymbol{\mathcal{W}}_c$ along its direction. Since humans tend to align with the principal axes of the environmental stiffness $\mathbf{K}_\text{env}$ in expert demonstrations, the intended wrench $\boldsymbol{\mathcal W}^*$ may induce a parasitic twist $\boldsymbol{\xi}_c$ along its direction. Hence,
\begin{equation}
    \boldsymbol{P} = (\boldsymbol{\mathcal{W}}^* + \boldsymbol{\mathcal{W}}_c)^\top (\boldsymbol{\xi}^* + \boldsymbol{\xi}_c) = \boldsymbol{\mathcal{W}}_c^\top \boldsymbol{\xi}^* + \boldsymbol{\mathcal{W}}^{*\top} \boldsymbol{\xi}_c.
\end{equation}
The two terms correspond to distinct power sources: 
$\boldsymbol{\mathcal{W}}_c^\top \boldsymbol{\xi}^*$ acts in $\mathcal{T}$ as a \textbf{dissipative residual}, arising from frictional or viscous losses along the intended motion direction, 
while $\boldsymbol{\mathcal{W}}^{*\top} \boldsymbol{\xi}_c$ acts in $\mathcal{U}$ as a \textbf{structural residual}, resulting from environmental stiffness or contact constraints such as compliant deflections along the intended wrench direction. Consequently:
\begin{enumerate}[(1)]
    \item If the structural residual dominates, we obtain $\boldsymbol{\mathcal W}^* \approx \boldsymbol{\mathcal W}$, and the twist can be orthogonalized against the wrench:
    \begin{equation}
        \boldsymbol{\xi}^* = \boldsymbol{\xi} - \mathrm{Proj}_{\boldsymbol{\mathcal W}^*}(\boldsymbol{\xi}) \approx \boldsymbol{\xi} - \mathrm{Proj}_{\boldsymbol{\mathcal W}}(\boldsymbol{\xi}).
    \end{equation}

    \item If the dissipative residual dominates, we obtain $\boldsymbol{\xi}^* \approx \boldsymbol{\xi}$, and the wrench can be orthogonalized against the twist:
    \begin{equation}
        \boldsymbol{\mathcal W}^* = \boldsymbol{\mathcal W} - \mathrm{Proj}_{\boldsymbol{\xi}^*}(\boldsymbol{\mathcal W}) \approx \boldsymbol{\mathcal W} - \mathrm{Proj}_{\boldsymbol{\xi}}(\boldsymbol{\mathcal W}).
    \end{equation}
\end{enumerate}

This framework contextualizes the limitations of prior art: \cite{conkey2019learning, forcemimic} exclusively address structural dominance, effectively ignoring the dissipative dominance regime critical for friction-heavy tasks. \cite{analytic_if} relies on statistical averaging, which lacks physical interpretability and obscures the intrinsic environmental properties. \cite{interaction_frame} directly performs power minimization, which may converge to spurious frames that satisfy orthogonality but fail to capture the true task structure.

\section{Method}

In this section, we first present how to recover the control structure of contact-rich tasks from demonstrations (\S\ref{sec:control_structure}). We then introduce \textbf{\textit{Force Policy}}, a global-local vision-force policy inspired by the human organization of perception and control (\S\ref{sec:forcepolicy}). Finally, we present a dual-policy asynchronous scheduler that manages dual-frequency policy execution during deployment to ensure smooth trajectories (\S\ref{sec:scheduler}).

Let $\tau = (I_{0:T}, s_{0:T}, e_{0:T}, \boldsymbol{\xi}_{0:T}, \boldsymbol{\mathcal{W}}_{0:T})$ denote a demonstration of task $\mathcal{T}$ with horizon $T$, where $I_t$ is the visual observation, $s_t$ the end-effector pose, and $e_t$ the end-effector state at timestep $t$. The robot twist $\boldsymbol{\xi}_t\in \mathfrak{se}(3)$ is composed of its linear velocity $\boldsymbol{v}_t$ and angular velocity $\boldsymbol{\omega}_t$, \textit{i.e.}, $\boldsymbol{\xi}_t = [\boldsymbol{v}_t^\top, \boldsymbol{\omega}_t^\top]^\top$, while the wrench $\boldsymbol{\mathcal{W}}_t  \in \mathfrak{se}^*(3)$ is composed of its force $\boldsymbol{f}_t$ and moment $\boldsymbol{m}_t$, \textit{i.e.}, $\boldsymbol{\mathcal{W}}_t = [\boldsymbol{f}_t^\top, \boldsymbol{m}_t^\top]^\top$. We assume $\boldsymbol{\mathcal{W}}_t$ is gravity-compensated, and inertial effects are negligible given the smooth motion profile of human demonstrations.

\subsection{Control Structure of Contact-Rich Tasks}\label{sec:control_structure}

\textbf{Interaction Frame Identification.} Consider a $\Delta t$ patch of a demonstration starting from timestep $t$. In this work, we do not address the estimation of the exact contact point $p_I$; instead, we assume the IF origin is anchored at the end-effector position. We aim to recover the IF orientation from visual observations $I_{t:t+\Delta t}$, measured signals $\boldsymbol{\xi}_{t:t+\Delta t}$ and $\boldsymbol{\mathcal{W}}_{t:t+\Delta t}$, together with the task description $\mathcal{T}$. Inferring the ideal interaction frame directly from these signals is ill-posed, as identical local power exchange patterns may arise from different physical mechanisms, such as structural stiffness or surface friction. We therefore adopt an adaptive approximation strategy: we prompt Gemini~3~Pro~\cite{gemini3pro} with the initial visual context $I_t$ and task description $\mathcal{T}$ to classify the dominant power source (dissipative residual or structural residual) based on high-level semantics, and apply the corresponding reconstruction formulation detailed in \S\ref{sec:recovering_if}.

\textbf{Task Classification and Control Signal Generation.} Given IF, where the $x$-axis denotes the intended motion twist direction and the $z$-axis denotes the intended interaction wrench direction, we classify interaction patches into four canonical task modes based on the \emph{IF-aligned signal characteristics} of twist $\boldsymbol{\xi}$ and wrench $\boldsymbol{\mathcal W}$:

\begin{itemize}
    \item \textbf{Free}: 
    No sustained contact. The interaction wrench is negligible, \textit{i.e.}, $\|\boldsymbol{\mathcal W}\| \approx 0$.

    \item \textbf{Surface}: 
    Contact with a surface imposing a single dominant normal constraint, \textit{i.e.}, $\|\boldsymbol{f}_z\| \gg \|\boldsymbol{f}_{xy}\|$.

    \item \textbf{Insertion}: 
    Highly-constrained insertions with strong force anisotropy from frictions, \textit{i.e.}, $\|\boldsymbol{f}_x\| \gg \|\boldsymbol{f}_{yz}\|$.

    \item \textbf{Rotation}: Rotation-dominated interaction like fastening screws, \textit{i.e.}, $\|\boldsymbol{\omega}_z\| \gg 0$, and $\|\boldsymbol{f}_z\| \gg 0$.
\end{itemize}

This categorization maps directly to the hybrid control structure via a selection mask $S \in \{0,1\}^6$~\cite{mason_hybrid}, where $S_i=1$ denotes force control and $S_i=0$ denotes position control. By configuring $S$ to align with the spectral axes of each mode, we enable the requisite motion while regulating contact forces. The specific control parameterization is detailed in Table~\ref{tab:control_structure}. 

\begin{table}[h]
\begin{center}
\caption{\textbf{Control Structure Selection based on Task Modes.} The IF $x$-axis aligns with the primary motion/insertion direction, while the $z$-axis aligns with the dominant normal.}
\label{tab:control_structure}
\begin{tabular}{@{}cccc@{}}
\toprule
\textbf{Interaction} & \textbf{Task Mode} & \textbf{Selection Mask $S$} & \textbf{Reference Wrench} \\
\toprule
Free & Free & $[0, 0, 0, 0, 0, 0]$ & $[-, -, -, -, -, -]$ \\
\midrule
\multirow{3}{*}{Contact} 
  & Surface & $[0,0,1,0,0,0]$ & $[-,-,\boldsymbol{f}_z,-,-,-]$ \\
  & Insertion & $[0,1,1,0,1,1]$* & $[-,0,0,-,0,0]$ \\
  & Rotation & $[0,0,1,0,0,0]$* & $[-,-,\boldsymbol{f}_z,-,-,-]$ \\
\bottomrule
\end{tabular}%
\end{center}
{\footnotesize * In practice, for insertion tasks, both axial translation and rotation about the insertion $x$-axis are theoretically inside $\mathcal{T}$, whereas for screw (pure rotation) tasks, only the axial rotation around $z$-axis belongs to the $\mathcal{T}$. Nevertheless, force and torque control are often applied to these degrees of freedom to cope with high friction and prevent jamming.}
\vspace{-0.3cm}
\end{table}

\begin{figure*}
    \centering
    \includegraphics[width=\linewidth]{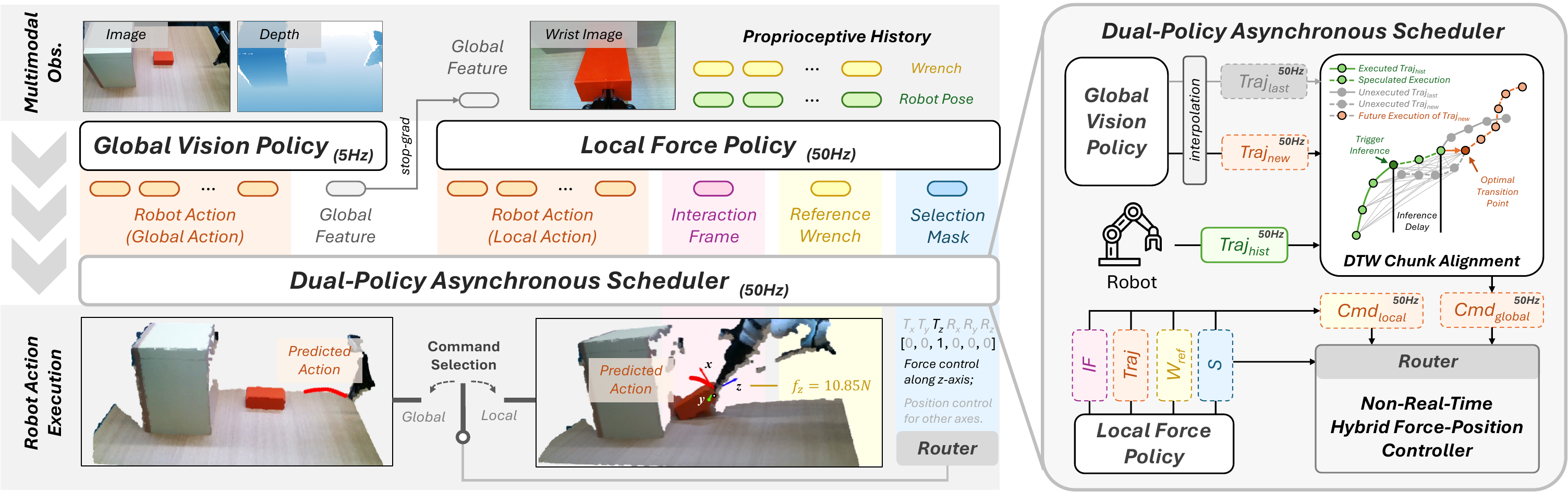}
    \caption{\textbf{\textit{Force Policy} and Dual-Policy Asynchronous Scheduler.} \textit{(Left)} \textbf{\textit{Force Policy}} consists of a global vision policy and a local force policy. The global policy provides task-level visual context and global actions, while the local policy predicts interaction structure and local actions to realize hybrid force-position control during contact. \textit{(Right)} The dual-policy asynchronous scheduler switches between the two policies and reduces latency and jerk via model-agnostic chunk alignment using dynamic time warping (DTW)~\cite{dtw}. }
    \label{fig:policy}\vspace{-0.5cm}
\end{figure*}

\subsection{\textbf{\textit{Force Policy}}: A Global-Local Vision-Force Policy}\label{sec:forcepolicy}

\textbf{Global-Local Vision-Force Decomposition.}
Contact-rich manipulation alternates between two regimes: vision-guided motion in free space and force-governed interaction at contact. Hence, we factor the policy accordingly. 
\emph{Global motion} is handled by a vision policy that reasons over geometry and long-horizon sequencing; outside contact, force readings are largely uninformative and often dominated by sensor noise or incidental touches~\cite{foar}. 
\emph{Local interaction} is handled by a high-frequency force policy that reacts to end-effector proprioception and force/torque signals to regulate contact dynamics. 
\textit{This decomposition assigns each modality to the scale where it is most informative: vision drives task progress, while force enables responsive, stable interaction control}.

\textbf{Modeling Local Force Policy.}
Human interaction with the physical world offers a useful intuition for designing the local interaction policy. Consider the simple act of sliding a finger along the edge of a table. Once contact is established, precise visual perception of the geometry is unnecessary; vision mainly provides \textit{global contextual cues} about where interaction occurs. The interaction itself is regulated through high-frequency proprioception and force feedback: motion sensed by the hand informs how movement evolves, while force feedback reveals contact, resistance, and slip. Therefore, we formulate the fast, local, force policy $\Pi_\text{local}$ as:
\begin{equation}
\mathbf{a}_t^\text{local} = \Pi_\text{local} \Big( \Delta s_{t-T_o+1:t}, \;\boldsymbol{\mathcal{W}}_{t-T_o+1:t}, \;\phi(I_{t'}) \Big),
\end{equation}
where $\mathbf{a}_t^\text{local}$ is the predicted local action, $\Delta s_{t-T_o+1:t}$ and $\boldsymbol{\mathcal{W}}_{t-T_o+1:t}$ denotes the end-effector motion and the force feedback histories of length $T_o$ respectively. $\phi(I_{t'})$ is the global scene context updated by the global vision policy at $t' \le t$, decoupling the local and global policy inference frequencies.

Each predicted local action explicitly parameterizes force-based interaction: it specifies how motion and force should be regulated in IF, rather than directly commanding joint torques or poses. Formally, we define
\begin{equation}
\mathbf{a}_t^\text{local} \triangleq \Big( \Sigma_{t+1},\; S_{t+1},\; \boldsymbol{\mathcal{W}}^\text{ref}_{t+1},\; \Delta s_{t+1:t+T_a}\Big),
\end{equation}
where $S_{t+1}$ selects the controlled motion and force subspaces in IF $\Sigma_{t+1}$, $\Delta s_{t+1:t+T_a}$ specifies the desired motion chunk~\cite{act} of horizon $T_a$, and $\boldsymbol{\mathcal{W}}^\text{ref}_{t+1}$ provides the reference wrench that directly governs force regulation during contact. 

\textbf{Modeling Global Vision Policy.} 
The global vision policy $\Pi_{\text{global}}$ governs task-level progression. In free space, it acts as a standard vision-based policy and directly outputs actions. During interaction, control is delegated to the local force policy $\Pi_\text{local}$, and the global vision policy no longer issues commands. Instead, it provides a global visual feature $\phi(I_{t'})$ to condition local interaction control. This modular design allows $\Pi_{\text{global}}$ to be instantiated from existing visuomotor policies~\cite{act,dp,rise,rise2} or vision-language-action (VLA) models~\cite{pi0,pi05, gr00t} without modifications.

\textbf{Router.} \label{sec:router}
The router determines which policy holds control authority between the global vision policy and the local force policy. Rather than introducing an explicit routing module, we \textit{reuse the predicted selection mask $S_{t+1}$ from the local force policy as an implicit routing signal}. The local force policy dynamically activates force-control axes as needed by assigning non-zero entries in $S_{t+1}$, thereby switching control from the global policy to engage hybrid force-position regulation. In contrast, unintended contacts do not trigger the selection mask, and control remains with the global vision policy.

\textbf{\textit{Force Policy}.}
As shown in Fig.~\ref{fig:policy} (left), the overall policy is decomposed into a global vision policy $\Pi_\text{global}$ and a local force policy $\Pi_\text{local}$. $\Pi_\text{global}$ is responsible for global perception and high-level action generation, while $\Pi_\text{local}$ handles local interaction through force-aware feedback at a high frequency. In our implementation, wrist-mounted camera observations are fed into $\Pi_\text{local}$, as they lie within the local interaction field. We instantiate $\Pi_\text{global}$ with RISE-2~\cite{rise2}, a generalizable visuomotor policy that relies solely on global 3D visual perception. In its pipeline, we treat the action feature used as the conditions in the diffusion head as the global visual feature $\phi(I_{t'})$.
For $\Pi_\text{local}$, the wrist image and proprioceptive history are encoded via a lightweight ResNet~\cite{resnet} and a GRU~\cite{gru}, respectively. Both are conditioned on $\phi(I_t)$ via FiLM~\cite{film}. The resulting features are adaptively fused through a gated mechanism and used to denoise the action sequence with an MIP head~\cite{mip}, as well as to predict the interaction frame, reference wrench, and selection mask via an MLP head.

\subsection{Dual-Policy Asynchronous Scheduler}\label{sec:scheduler}

We introduce a dual-policy asynchronous scheduler during deployment to coordinate a global vision policy with a high-frequency local force policy, while preventing global inference latency from degrading closed-loop control. The scheduler (1) runs both policies asynchronously and uses a router with smooth state transitions to avoid discontinuities during policy switching; and (2) compensates for delayed global outputs by time-aligning and merging them into the execution stream in a latency-aware manner, enforcing trajectory smoothness so executed motions remain stable and physically consistent. This smoothing also reduces jerk-induced inertial force transients, improving force-sensing fidelity during manipulation. An overview is shown in Fig.~\ref{fig:policy} (right).

\begin{figure*}
    \centering
    \includegraphics[width=\linewidth]{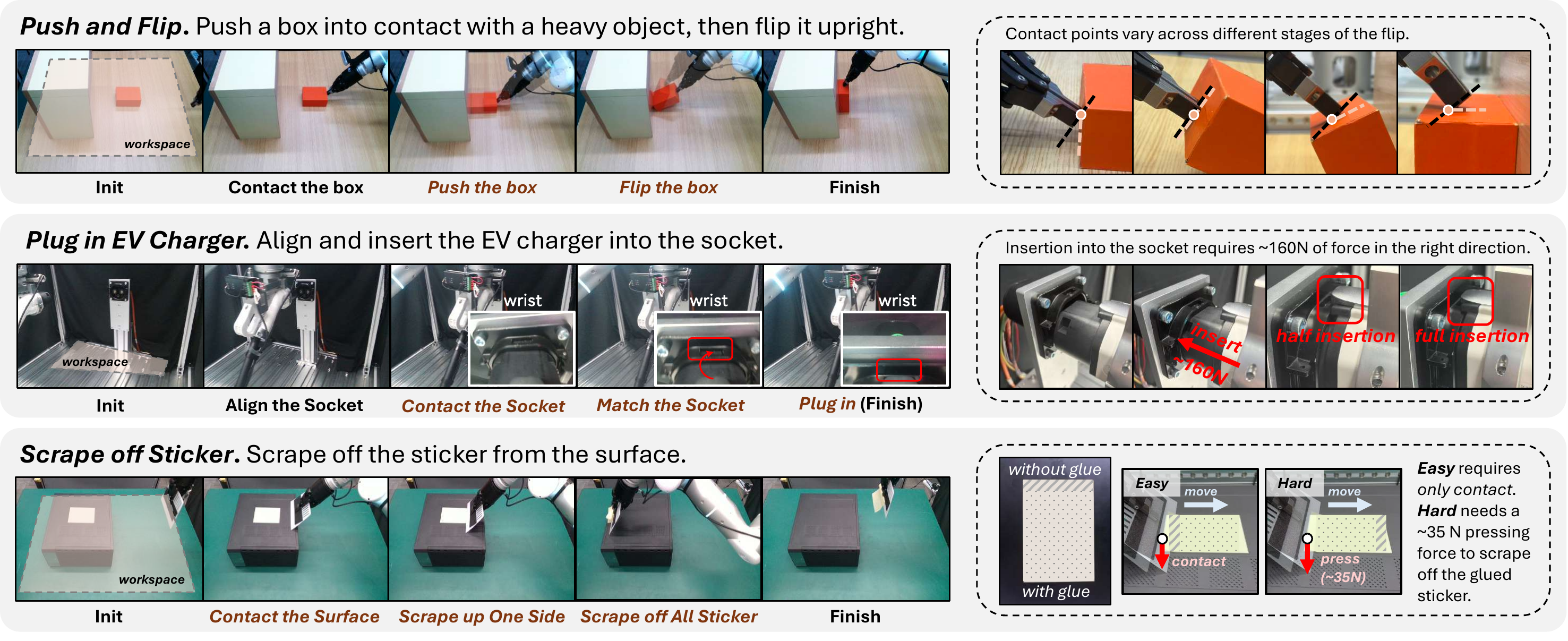}
    \caption{\textbf{Tasks.} We design three tasks spanning two categories (polishing and insertion) to evaluate different policies for contact-rich manipulation. The descriptions on the right highlight the key challenges of each task compared to similar tasks in prior literature. All tasks require highly accurate force regulation to be successfully completed. We randomize object placement within the workspace area during both data collection and evaluation for each task.}
    \label{fig:tasks}\vspace{-0.5cm}
\end{figure*}

\textbf{Asynchronous Inference and Execution.}
To mask inference latency, the scheduler adopts a multi-threaded asynchronous design that overlaps model inference with trajectory execution, preemptively launching inference at fixed intervals. Action selection follows the router in \S\ref{sec:router}, with hysteresis applied to suppress switching jitter. In implementation, outputs from both the global vision policy and the local force policy are first interpolated to a unified frequency of 50Hz. At this common rate, waypoints are selectively dropped to mitigate chunk mismatch induced by inference latency, as detailed in the following paragraph, followed by acceleration-continuous trajectory planning. This planning step ensures smooth execution and high-fidelity force control, as higher-order continuity suppresses jerk and reduces inertial disturbances. The resulting trajectories are then dispatched to a non-real-time hybrid force-position controller~\cite{tang2024partially} for execution.

\textbf{Chunk Alignment with Dynamic Time Warping.}
Latency in asynchronous inference systems typically causes discontinuous jumps between consecutive action chunks. Existing methods~\cite{sail, rtc, training_rtc} address this issue via action inpainting or adaptive conditional guidance, but depend on specific model architectures and introduce additional computational overhead. In contrast, we propose a computationally efficient, model-agnostic waypoint dropout strategy based on Dynamic Time Warping (DTW)~\cite{dtw}. DTW enables nonlinear temporal alignment and has been widely adopted from speech recognition~\cite{1163055, 1163491, issam2025dtwalignbridgingmodalitygap} to motion retargeting~\cite{liu2025immimic}. By aligning the predicted trajectory with the recent execution history, we can identify an optimal entry index and drop preceding waypoints, thereby eliminating latency-induced discontinuities without modifying the training procedure or model architecture. Additional details are provided in the supplementary. 

The proposed scheduler enables latency-aware deployment of dual-policy control by decoupling asynchronous inference from execution while maintaining smooth, force-consistent motion. Unified-frequency resampling, DTW-based chunk alignment, and acceleration-continuous planning together ensure stable execution without constraining policy architecture.
\section{Experiments}

Through experiments, we seek to answer the following questions: \textbf{(Q1)} Can \textbf{\textit{Force Policy}} effectively handle different types of contact-rich tasks? \textbf{(Q2)} Does the \textbf{\textit{Force Policy}} achieve superior force control compared to existing force-aware baselines? \textbf{(Q3)} Does our global-local design generalize better than the baselines? \textbf{(Q4)} Does our IF recovery method produce more accurate IF than prior approaches, and does improved interaction labeling enable the policy to learn more accurate contact behaviors? \textbf{(Q5)} Does the asynchronous scheduler reduce motion and force jerks, leading to smoother trajectories during deployment?

\begin{table*}
    \centering
    \caption{\textbf{Evaluation Success Rates.} We report success rates evaluated in an accumulative, stage-wise manner. \textbf{\textit{Force Policy}} achieves the highest success rates compared to vision-based and force-aware baselines among all tasks. }
    \label{tab:result}
    \begin{tabular}{crrrrrrrrrrr}
    \toprule
         \multirow{2}{*}{\textbf{Policy}} & \multicolumn{2}{c}{\textbf{\textit{Push and Flip}}} & \multicolumn{3}{c}{\textbf{\textit{Plug in EV Charger}}} & \multicolumn{3}{c}{\textbf{\textit{Scrape off Sticker}} (Easy)} & \multicolumn{3}{c}{\textbf{\textit{Scrape off Sticker}} (Hard)} \\ \cmidrule(lr){2-3}\cmidrule(lr){4-6}\cmidrule(lr){7-9}\cmidrule(lr){10-12}
         & \multicolumn{1}{c}{push} & \multicolumn{1}{c}{flip} & \multicolumn{1}{c}{contact} & \multicolumn{1}{c}{match} & \multicolumn{1}{c}{plug in} & \multicolumn{1}{c}{contact} & \multicolumn{1}{c}{one-side} & \multicolumn{1}{c}{full off} & \multicolumn{1}{c}{contact} & \multicolumn{1}{c}{one-side} & \multicolumn{1}{c}{full off}\\
         \midrule
         RISE-2~\cite{rise2} & \textbf{100.0}\% & 42.5\% & \textbf{100.0}\% & \textbf{90.0}\% & 0.0\% & \textbf{100.0}\% & 80.0\% & 80.0\% & \textbf{100.0}\% & 20.0\% & 10.0\%\\
         $\pi_{0.5}$~\cite{pi05} & 80.0\% & 52.5\% & \textbf{100.0}\% & 30.0\% & 0.0\% & 70.0\% & 70.0\% & 70.0\% & \textbf{100.0}\% & 40.0\% & 20.0\% \\ \midrule
         RDP~\cite{rdp} & \textbf{100.0}\% & 57.5\% & \textbf{100.0}\% & 70.0\% & 5.0\% & \textbf{100.0}\% & 90.0\% & 70.0\% & \textbf{100.0}\% & 30.0\% & 20.0\% \\ 
         FoAR~\cite{foar} & \textbf{100.0}\% & 60.0\% & \textbf{100.0}\% & 80.0\% & 10.0\% & \textbf{100.0}\% & 70.0\% & 40.0\% & \textbf{100.0}\% & 60.0\% & 20.0\% \\
         ForceVLA~\cite{forcevla} & 50.0\% & 30.0\% & 20.0\% & 0.0\% & 0.0\% & 10.0\% & 0.0\% & 0.0\% & 10.0\% & 0.0\% & 0.0\%\\
         TA-VLA~\cite{tavla} & 85.0\% & 62.5\% & 80.0\% & 10.0\% & 0.0\% & 50.0\% & 50.0\% & 50.0\% & 40.0\% & 20.0\% & 10.0\%\\
         \midrule
         \cellcolor[HTML]{f2f2f2}\textbf{\textit{Force Policy}} (ours) & \cellcolor[HTML]{f2f2f2}\textbf{100.0}\% & \cellcolor[HTML]{f2f2f2}\textbf{95.0}\% & \cellcolor[HTML]{f2f2f2}\textbf{100.0}\% & \cellcolor[HTML]{f2f2f2}\textbf{90.0}\% & \cellcolor[HTML]{f2f2f2}\textbf{65.0}\% & \cellcolor[HTML]{f2f2f2}\textbf{100.0}\% & \cellcolor[HTML]{f2f2f2}\textbf{100.0}\% & \cellcolor[HTML]{f2f2f2}\textbf{100.0}\% & \cellcolor[HTML]{f2f2f2}\textbf{100.0}\% & \cellcolor[HTML]{f2f2f2}\textbf{90.0}\% & \cellcolor[HTML]{f2f2f2}\textbf{90.0}\% \\ \bottomrule
    \end{tabular}
    \vspace{-0.25cm}
\end{table*}

\subsection{Setup}

\textbf{Platform.} The robot platform comprises a Flexiv Rizon 4 arm equipped with a Flexiv GN-02 gripper and a 6-DoF force-torque sensor at the flange. Two Intel RealSense D415 RGB-D cameras are used to capture visual observations, serving as a global camera and a wrist-mounted camera, respectively.

\textbf{Tasks.} As shown in Fig.~\ref{fig:tasks}, we design three tasks from two major applications: polishing and insertion, including scenarios with continuously varying IFs and tasks that demand reactive control and accurate force regulation.

\textbf{Data Collection.} We use the arm-to-arm teleoperation toolkit with force feedback~\cite{tdk} to collect 50 high-quality demonstrations for each task. The system provides high-fidelity force feedback during teleoperation, enabling precise force regulation across tasks and task phases. Inspection of all demonstrations shows that the recorded force signals consistently fall within the required range for each task stage.

\textbf{Baselines.} For vision-only baselines, we consider two state-of-the-art policies: RISE-2~\cite{rise2} and $\pi_{0.5}$~\cite{pi05}. For force-aware baselines, we include the representative RDP~\cite{rdp}, FoAR~\cite{foar}, ForceVLA~\cite{forcevla}, and TA-VLA~\cite{tavla} in evaluations.

\textbf{Metrics.} We report success rate as the primary metric for each task. For the ``flip'' phase in \textbf{\textit{Push and Flip}} and the ``plug in'' phase in \textbf{\textit{Plug in EV Charger}}, we also count partial completion as a half success (0.5) per trial.

\textbf{Evaluation.} The $\pi$-based policies run on a server with an NVIDIA A800 GPU due to memory limitations, while others run on a workstation with an RTX 3090. Following~\cite{umi, cage}, all policies are evaluated consistently: test positions are randomly generated beforehand, the workspace is identical, and metrics are counted over 20 trials for \textbf{\textit{Push and Flip}}, and 10 for others.

\subsection{Results}

\textbf{\textit{Force Policy} exhibits significant effectiveness in handling a diverse range of contact-rich tasks (Q1).} As shown in Tab.~\ref{tab:result}, \textbf{\textit{Force Policy}} consistently outperforms both vision-only and force-aware baselines across a wide range of contact-rich tasks, with the largest gains appearing in phases that require precise and rapid force regulation, \textit{e.g.}, during the insertion stage in the \textbf{\textit{Plug in EV Charger}} task, and the scraping stage in the \textbf{\textit{Scrape off Sticker}} task. The underlying insight is that \textit{force feedback is genuinely useful, but only when integrated in a way that matches its intermittent, regime-dependent nature}.

\begin{figure}[t]
    \centering
    \includegraphics[width=\linewidth]{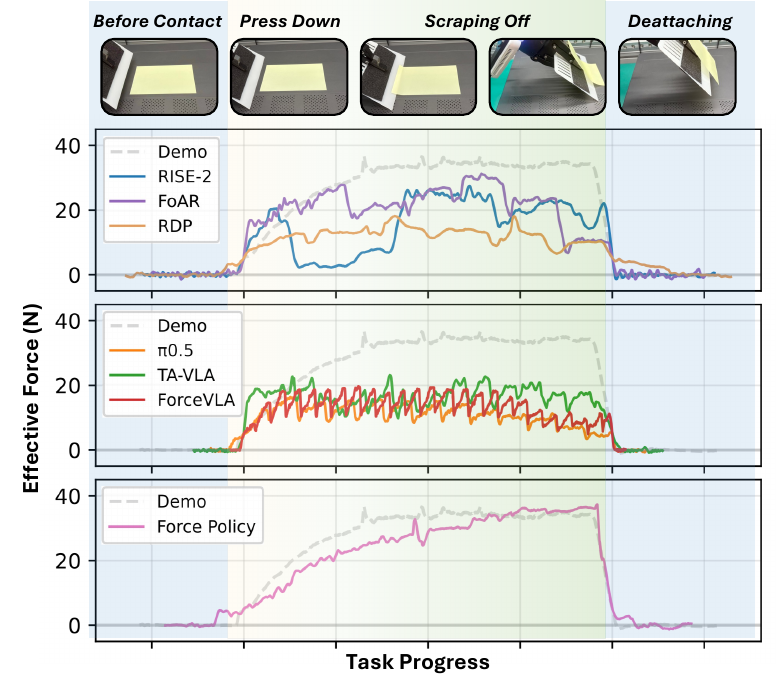}\vspace{-0.1cm}
    \caption{\textbf{Visualization of Effective Forces during Deployment and from Demonstrations on the \textit{Scrape off Sticker} (Hard) Task.} All baselines fail to replicate the force behavior from demonstrations, resulting in degraded performance. \textit{\textbf{Force Policy}} closely imitates the effective force in demonstrations, achieving higher success rates.}
    \label{fig:force_scrape}\vspace{-0.5cm}
\end{figure}

Many prior designs misuse force in ways that reduce robustness. (1) Monolithic vision-force policies that simply concatenate force into observations, such as ForceVLA and TA-VLA, are prone to failure, because \textit{force is noisy or uninformative during non-contact motion} and can pollute global decision-making~\cite{foar}, sometimes even underperforming their vision-only backbone. The effect is particularly pronounced in the evaluation results of ForceVLA. (2) Low-frequency force-aware pipelines such as FoAR \textit{lack sufficient reactivity to handle fast contact transients}. (3) RDP employs a hierarchically-coupled policy design, in which low-level force-aware action decoding is directly conditioned on the high-level latent action chunk. When the high-level policy encodes a no-contact latent action but unexpected contact occurs during execution, the low-level tokenizer may map out-of-distribution contact force signals to erroneous actions. \textit{This coupling limits the ability of low-level policy to correct wrong high-level latent action}.

In contrast to all baselines, our global-local vision-force design decouples high-level intent from low-level execution: the global vision module provides a stable long-horizon reference, \textit{insulated from noisy force signals}, while the high-frequency local force policy \textit{independently determines actions} based on this reference and real-time force feedback, enabling robust performance under sensor noise and fast contact transients.

\begin{figure}[t]
    \centering
    \captionof{table}{\textbf{Force Regulation Evaluation on the \textit{Push and Flip} Task.} \textit{(Left)} Pushing the heavy object requires approximately 45N, while demonstrations apply about 15N pushing force to flip the target object; we then measure its pushed distance $d$ as an indicator of force regulation. \textit{(Right)} Statistics of the pushed distance for each method. }
    \label{tab:distance}
    \begin{minipage}{0.4\linewidth}
    \centering
    \includegraphics[width=\linewidth]{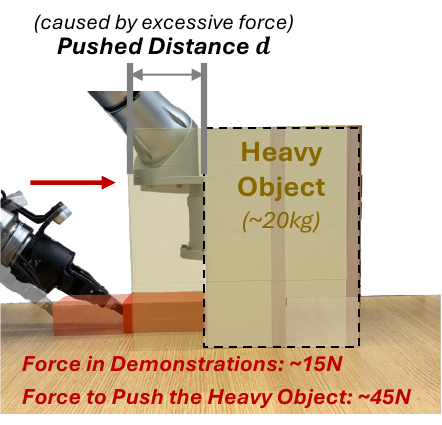}
    \end{minipage}
    \hfill
    \begin{minipage}{0.55\linewidth}
    \centering
    \setlength\tabcolsep{2pt}
    \footnotesize
    \begin{tabular}{cc}
    \toprule
       \textbf{Policy}  &  \textbf{Avg.} $d$ (cm) $\downarrow$ \\ \midrule
       RISE-2~\cite{rise2} & $0.86$\\ 
       $\pi_{0.5}$~\cite{pi05} & $\textbf{0.00}$\\ \midrule
       RDP~\cite{rdp} & $1.13$\\ 
       FoAR~\cite{foar} & $0.25$\\ 
       ForceVLA~\cite{forcevla} & $0.50$\\ 
       TA-VLA~\cite{tavla} & $1.25$\\ \midrule
       \textbf{\textit{Force Policy}} (ours) & $\textbf{0.00}$\\ \bottomrule
    \end{tabular}
    \end{minipage}\vspace{-0.4cm}
\end{figure}

\textbf{\textit{Force Policy} achieves significantly superior force control compared to existing force-aware baselines (Q2).} As visualized in Fig.~\ref{fig:force_scrape} and Tab.~\ref{tab:distance}, our method closely tracks the force profile of human demonstrations, maintaining the necessary effective force magnitude throughout the critical phases. On the contrary, baselines typically exhibit severe force oscillations, fail to exert sufficient downward force, or apply excessive force, resulting in task failure. This performance advantage stems from our adoption of a hybrid force-position control framework. Unlike baselines that regulate force implicitly or treat it as a sensory input, our approach explicitly controls interaction forces, enabling stable execution that faithfully follows demonstrations, which is essential for contact-rich tasks requiring precise force control.

\textbf{\textit{Force Policy} demonstrates decent generalization ability compared to baselines (Q3).} We aim to evaluate two aspects of generalization: visual generalization to \textit{unseen objects}, and force-regulation generalization to \textit{unseen geometries} during contact. The evaluation results of the policies on unseen objects with varying colors, geometries, and stiffnesses in the \textbf{\textit{Push and Flip}} task are demonstrated in Tab.~\ref{tab:generalization}. Force Policy consistently achieves high success rates, whereas baselines frequently suffer catastrophic failure. Crucially, we must clarify that \textit{although RISE-2, our global vision policy, often yields a 0\% success rate on novel objects, it successfully navigates to the near-contact region in most trials}. Its failure mainly arises from relying on vision alone to handle the critical transition from free motion to contact, causing timing errors or improper contact establishment. \textit{Our design successfully inherits this visual generalization capability from the global vision backbone to reliably guide the end-effector to the target vicinity}. Once in the near-contact region, the local force policy takes over, leveraging force feedback and hybrid force-position control to robustly establish contact and execute the actions. By decoupling visual reaching from physical interaction, our method effectively combines the broad generalization of visuomotor policies with the precision of local force control.

\begin{table}[t]
\vspace{0.15cm}
\centering
\caption{\textbf{Generalization Evaluation on the \textit{Push and Flip} Task.} We use unseen objects of different colors, geometries, and stiffnesses to evaluate the generalization ability of force-aware policies. Please refer to the supplementary material for detailed object comparisons.}
\label{tab:generalization}
\setlength\tabcolsep{2pt}
\begin{tabular}{ccccccc}
\toprule
\diagbox[width=2cm,height=0.9cm]{\textbf{Policy}}{\textbf{Unseen Obj.}}
& \raisebox{-0.5\height}{\includegraphics[width=0.9cm]{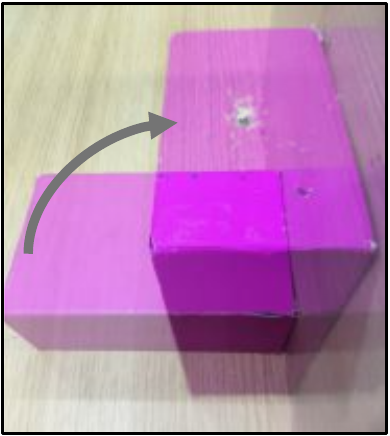}}
& \raisebox{-0.5\height}{\includegraphics[width=0.9cm]{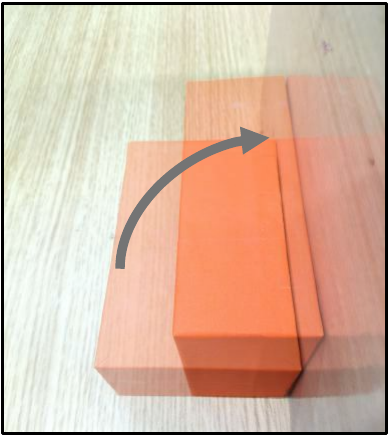}}
& \raisebox{-0.5\height}{\includegraphics[width=0.9cm]{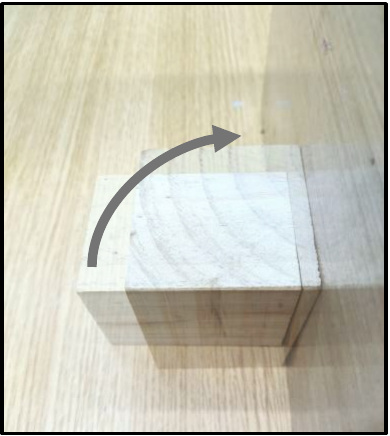}}
& \raisebox{-0.5\height}{\includegraphics[width=0.9cm]{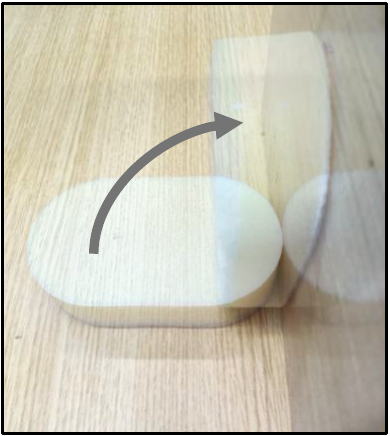}}
& \raisebox{-0.5\height}{\includegraphics[width=0.9cm]{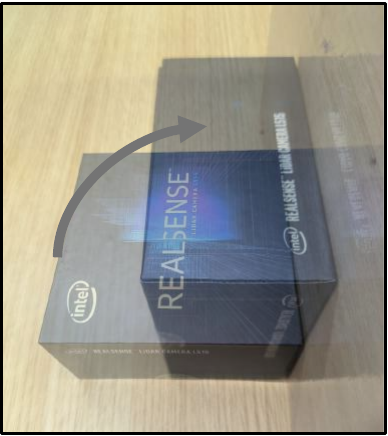}}
& \raisebox{-0.5\height}{\includegraphics[width=0.9cm]{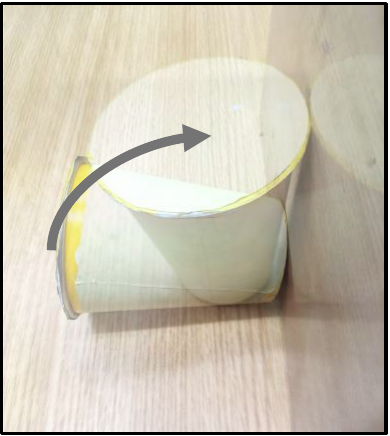}} \\
\midrule
 RISE-2~\cite{rise2} & 3/5 & 0/5 & 0/5 & 3/5 & 0/5 & 0/5\\
$\pi_{0.5}$~\cite{pi05} & 2/5 & 2/5 & 1/5 & 3/5 & \textbf{3/5} & 1/5 \\ \midrule
RDP~\cite{rdp} &  2/5 & 0/5 & 0/5 & 0/5 & 0/5 & 0/5\\
FoAR~\cite{foar} & 1/5 & 0/5 & 0/5 & 0/5 & 0/5 & 0/5\\
ForceVLA~\cite{forcevla} & 2/5 & 0/5 & 0/5 & 1/5 & 0/5 & 0/5\\
TA-VLA~\cite{tavla} & 3/5 & 2/5 & 1/5 & 1/5 & \textbf{3/5} & 1/5\\ \midrule
\cellcolor[HTML]{f2f2f2} \textbf{\textit{Force Policy}} (ours) & \cellcolor[HTML]{f2f2f2}\textbf{5/5} & \cellcolor[HTML]{f2f2f2}\textbf{5/5} & \cellcolor[HTML]{f2f2f2}\textbf{4/5} & \cellcolor[HTML]{f2f2f2}\textbf{4/5} & \cellcolor[HTML]{f2f2f2}\textbf{3/5} & \cellcolor[HTML]{f2f2f2}\textbf{2/5} \\
\bottomrule
\end{tabular}\vspace{-0.4cm}
\end{table}

\subsection{Ablations}

\begin{figure}[b]
    \centering
    \includegraphics[width=\linewidth]{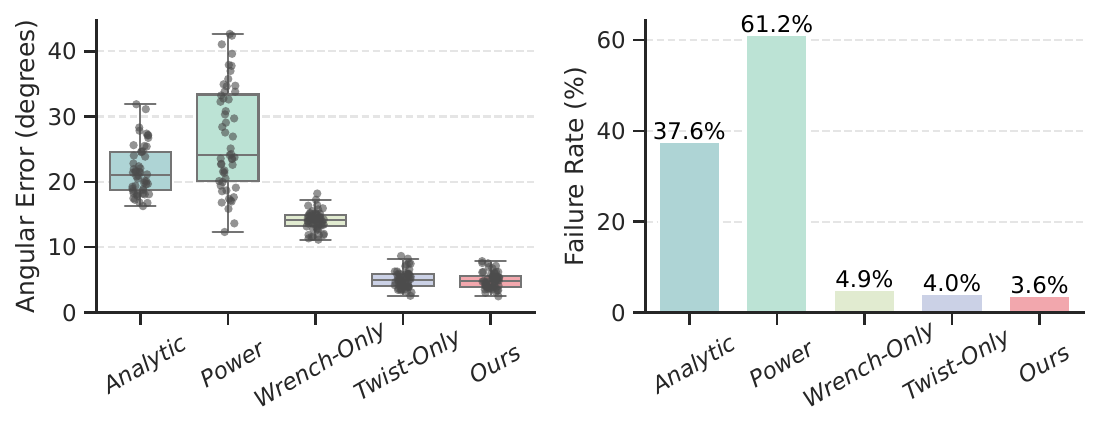}
    \caption{\textbf{IF Recovery Evaluation on the \textit{Scrape off Sticker} Task.} Angular error is computed between the recovered and ground-truth force control axes; a failure is counted if it exceeds 20$^\circ$.}
    \label{fig:if_evaluation}
\end{figure}

\textbf{Our IF recovery method achieves more accurate interaction frames from demonstrations than previous approaches, improving interaction quality (Q4).} We evaluate several IF recovery approaches on the \textit{Scrape off Sticker} task, including analytic~\cite{analytic_if}, power-based~\cite{interaction_frame}, wrench-only~\cite{forcemimic,conkey2019learning}, twist-only, and our adaptive method. The $z$-axis of the ground-truth IF aligns with the world vertical axis in this task, so we measure angular error and count failures above 20$^\circ$. The results are shown in Fig.~\ref{fig:if_evaluation}. Analytic and power-based approaches exhibit larger errors, while twist-only and our method outperform wrench-only due to dissipative residuals like frictions dominating power in this task. Our adaptive method slightly improves over twist-only by capturing brief pressing periods before scraping, where structural residuals like scraper deformation dominate. We further evaluate the \textbf{\textit{Force Policy}} trained with interaction frames labeled by wrench-only methods~\cite{forcemimic} on this task. It achieves only a 50\% success rate, dropping from 90\% with our IF recovery strategy.

\textbf{The semantic classification in our IF recovery method is robust in classifying different contact behaviors in the real world (Q4).} The semantic classification accuracies for the three tasks are 100.0\%, 92.0\%, and 98.0\%, respectively. We argue that \textbf{\textit{Force Policy}} is not sensitive to these errors for the following two reasons: (1) Misclassified samples are infrequent, so their effect on policy learning is limited. (2) Even if an incorrect semantic prior produces a suboptimal IF and the policy learns to predict the suboptimal IF, the high-frequency local policy can still detect abnormal force signals and correct the IF online using the feedback.

\textbf{The asynchronous scheduler substantially reduces jerks and yields smoother trajectories during deployment (Q5).} We analyze both force and motion jerks and measure trajectory smoothness using spectral arc length (SPARC, $\uparrow$)~\cite{sparc} following~\cite{sail}. As shown in Fig.~\ref{fig:scheduler}, the temporal profiles indicate that the scheduler effectively attenuates signal discontinuities, resulting in smoother trajectories during the inference chunk alignment phase. Additional analysis of the asynchronous scheduler is provided in the supplementary material.

\begin{figure}[h]
\vspace{-0.1cm}
    \centering
    \includegraphics[width=\linewidth]{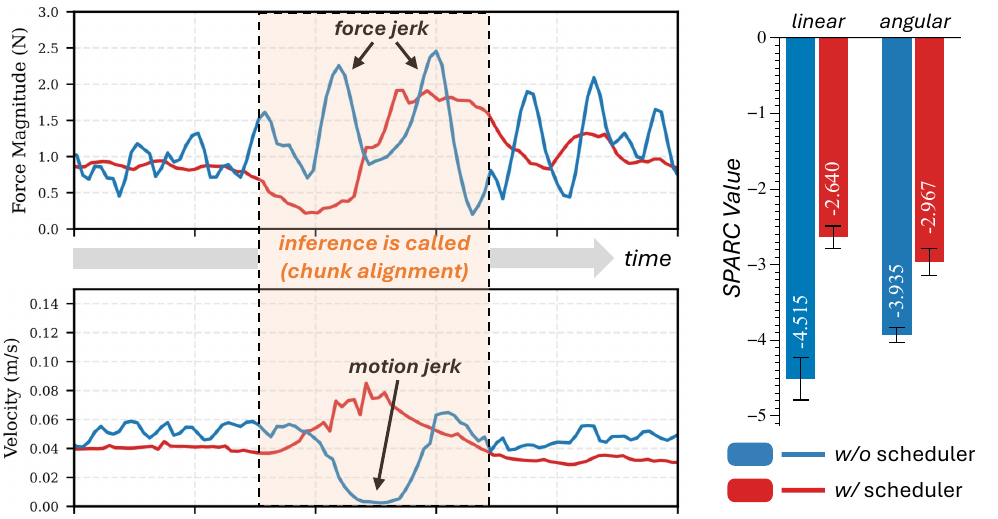}
    \caption{\textbf{Asynchronous Scheduler Evaluation on the \textit{Push and Flip} Task.} The scheduler effectively reduces both motion and force jerks \textit{(left)}, resulting in more smoother executed trajectories \textit{(right)}.}
    \label{fig:scheduler}\vspace{-0.4cm}
\end{figure}

\section{Limitations and Future Works}

\textbf{IF Scope.} We clarify that the present scope of our IF theory encompasses \textit{stable-contact manipulation}, including \textit{most frictional and compliant contact settings}. In these prevalent regimes, explicitly modeling structural residuals and continuous dissipative forces offers a distinct practical advantage over prior methods. Our formulation is grounded in \textit{locally conservative} and \textit{topologically invariant} assumptions. Consequently, highly non-smooth scenarios, such as fracture or non-conservative contact transitions, fall outside this scope and are left for future work. Multi-point and surface contact cases are discussed in the remark to Theorem~\ref{thm:diag} in the supplementary. 

\textbf{Evaluation Scope}. Our current evaluation covers two broad classes of contact-rich tasks, \textit{peg-in-hole} and \textit{surface interaction}, spanning common contact-rich manipulation scenarios. In particular, \textbf{\textit{Push and Flip}} involves a \textit{continuously changing IF}, serving as a dynamic test of both IF recovery and IF prediction. Broader evaluations of complex contact-rich tasks is indeed valuable, and we leave it to future work.

\textbf{Torque Control.} Our current formulation focuses more on recovering interaction orientation for force control, rather than explicitly estimating the true contact point; fully modeling the contact structure would be necessary to support torque control.

\textbf{Contact for High-Level Decision-Making.} For some tasks, force/torque signals might help high-level decision-making, \textit{e.g.}, briefly tugging to check if the connector is fully inserted; a promising direction is to extend the architecture to enable such force-driven judgments.

\section{Conclusion}

Motivated by the biological division of labor where vision guides global decision-making and reaching while haptics governs local contact, we propose \textbf{\textit{Force Policy}}, a global-local vision-force framework that decouples the control hierarchy. The global vision policy decides ``\textit{where to act}'' with strong generalization, while a high-frequency local controller decides ``\textit{how to act}'' by regulating contact via hybrid force-position control. This design is enabled by a physically grounded interaction-frame formulation and a practical method to recover intrinsic frames from demonstrations. Empirically, we show (1) substantially improved robustness and precise force regulation on diverse contact-rich manipulation tasks by explicitly modeling contact structure and interaction forces, and (2) strong generalization across various object geometries and physical properties, driven by the complementary roles of the global vision policy that handles visual and geometric variation and the local force policy that handles contact dynamics. Overall, \textbf{\textit{Force Policy}} bridges the generalization of visuomotor policies with the precision of classical force control, providing a scalable approach to contact-rich manipulation.

\section*{Acknowledgement}

This work is supported by the New Generation Artificial Intelligence --- National Science and Technology Major Project (No. 2025ZD0122901), the Science and Technology Major Project of Jiangsu Province (No. BG2024041), Shanghai Artificial Intelligence Laboratory, and XPLORER PRIZE grants.

We would like to thank Yiming Wang from Shanghai Jiao Tong University for insightful discussions on theoretical formulation; Zhipeng Zhang from Flexiv for his support on the arm-to-arm teleoperation system with force feedback; Wenbo Tang from Flexiv for discussions about the robot controller; Peishen Yan and Chunyu Xue from Shanghai Jiao Tong University for writing advice and proofreading; Lijia Yao from Noematrix for suggestions on figure design; Junchao Zhang from Noematrix for his help on evaluations; and Shangning Xia and Linhao Chen from Noematrix for discussions.

\textit{\textbf{Author Contributions.} H. Fang led the project under the guidance of S. Wang and C. Lu. H. Fang, S. Tang, and M. Mei formulated the core idea of the project together. H. Fang, S. Tang, M. Mei, and S. Wang discussed and derived the theoretical framework of the interaction frame. H. Fang, S. Tang, and M. Mei designed the framework of Force Policy together. H. Fang and S. Tang implement the Force Policy and the asynchronous scheduler, respectively. C. Wang helped with the policy design and scheduler implementation. H. Fang, S. Tang and M. Mei designed the tasks together. M. Mei, Z. He, and H. Qin implemented the baselines. M. Mei, S. Tang, H. Qin, Z. He, J. Chen, Y. Feng, and H. Fang conducted the experiments. H. Fang, S. Tang, M. Mei, J. Chen wrote the paper. S. Wang, C. Lu, W. Liu and Z. He supervised the project and provided hardware support. This work was done while H. Fang was a research collaborator with Noematrix, and M. Mei was a research intern at Noematrix.}

\printbibliography

@article{equicontact,
  title={EquiContact: A Hierarchical SE(3) Vision-To-Force Equivariant Policy for Spatially Generalizable Contact-Rich Tasks},
  author={Seo, Joohwan and Kruthiventy, Arvind and Lee, Soomi and Teng, Megan and Zhang, Xiang and Choi, Seoyeon and Choi, Jongeun and Horowitz, Roberto},
  journal={arXiv preprint arXiv:2507.10961},
  year={2025}
}

@inproceedings{rdp,
  title     = {Reactive Diffusion Policy: Slow-Fast Visual-Tactile Policy Learning for Contact-Rich Manipulation},
  author    = {Xue, Han and Ren, Jieji and Chen, Wendi and Zhang, Gu and Fang, Yuan and Gu, Guoying and Xu, Huazhe and Lu, Cewu},
  booktitle = {Robotics: Science and Systems},
  year      = {2025}
}

@inproceedings{oxe,
  title={Open X-Embodiment: Robotic Learning Datasets and RT-X Models},
  author={Open X-Embodiment Collaboration and O’Neill, Abby and Rehman, Abdul and Maddukuri, Abhiram and Gupta, Abhishek and Padalkar, Abhishek and Lee, Abraham and Pooley, Acorn and Gupta, Agrim and Mandlekar, Ajay and Jain, Ajinkya and others},
  booktitle={IEEE International Conference on Robotics and Automation},
  pages={6892--6903},
  year={2024}
}

@inproceedings{rh20t,
  author       = {Hao-Shu Fang and
                  Hongjie Fang and
                  Zhenyu Tang and
                  Jirong Liu and
                  Chenxi Wang and
                  Junbo Wang and
                  Haoyi Zhu and
                  Cewu Lu},
  title        = {{RH20T:} {A} Comprehensive Robotic Dataset for Learning Diverse Skills in One-Shot},
  booktitle    = {IEEE International Conference on Robotics and Automation},
  pages        = {653--660},
  year         = {2024},
}

@inproceedings{rt1,
  author       = {Anthony Brohan and
                  Noah Brown and
                  Justice Carbajal and
                  Yevgen Chebotar and
                  Joseph Dabis and
                  Chelsea Finn and
                  Keerthana Gopalakrishnan and
                  Karol Hausman and
                  Alexander Herzog and
                  Jasmine Hsu and
                  Julian Ibarz and
                  Brian Ichter and
                  Alex Irpan and
                  Tomas Jackson and
                  Sally Jesmonth and
                  Nikhil J. Joshi and
                  Ryan Julian and
                  Dmitry Kalashnikov and
                  Yuheng Kuang and
                  Isabel Leal and
                  Kuang{-}Huei Lee and
                  Sergey Levine and
                  Yao Lu and
                  Utsav Malla and
                  Deeksha Manjunath and
                  Igor Mordatch and
                  Ofir Nachum and
                  Carolina Parada and
                  Jodilyn Peralta and
                  Emily Perez and
                  Karl Pertsch and
                  Jornell Quiambao and
                  Kanishka Rao and
                  Michael S. Ryoo and
                  Grecia Salazar and
                  Pannag R. Sanketi and
                  Kevin Sayed and
                  Jaspiar Singh and
                  Sumedh Sontakke and
                  Austin Stone and
                  Clayton Tan and
                  Huong T. Tran and
                  Vincent Vanhoucke and
                  Steve Vega and
                  Quan Vuong and
                  Fei Xia and
                  Ted Xiao and
                  Peng Xu and
                  Sichun Xu and
                  Tianhe Yu and
                  Brianna Zitkovich},
  title        = {{RT-1:} Robotics Transformer for Real-World Control at Scale},
  booktitle    = {Robotics: Science and Systems},
  year         = {2023}
}

@inproceedings{pi0,
  title={$\pi_0$: A Vision-Language-Action Flow Model for General Robot Control},
  author={Black, Kevin and Brown, Noah and Driess, Danny and Esmail, Adnan and Equi, Michael and Finn, Chelsea and Fusai, Niccolo and Groom, Lachy and Hausman, Karol and Ichter, Brian and others},
  booktitle={Robotics: Science and Systems},
  year={2025}
}

@inproceedings{rot6d,
  title={On the Continuity of Rotation Representations in Neural Networks},
  author={Zhou, Yi and Barnes, Connelly and Lu, Jingwan and Yang, Jimei and Li, Hao},
  booktitle={IEEE/CVF Conference on Computer Vision and Pattern Recognition},
  pages={5745--5753},
  year={2019}
}

@article{moe,
  title={Adaptive mixtures of local experts},
  author={Jacobs, Robert A and Jordan, Michael I and Nowlan, Steven J and Hinton, Geoffrey E},
  journal={Neural computation},
  volume={3},
  number={1},
  pages={79--87},
  year={1991},
  publisher={MIT Press}
}

@inproceedings{rise,
    title     = {RISE: 3D Perception Makes Real-World Robot Imitation Simple and Effective},
    author    = {Wang, Chenxi and Fang, Hongjie and Fang, Hao-Shu and Lu, Cewu},
    booktitle = {IEEE/RSJ International Conference on Intelligent Robots and Systems}, 
    year      = {2024},
    pages     = {2870-2877}
}

@inproceedings{dp,
	title={Diffusion Policy: Visuomotor Policy Learning via Action Diffusion},
	author={Chi, Cheng and Feng, Siyuan and Du, Yilun and Xu, Zhenjia and Cousineau, Eric and Burchfiel, Benjamin and Song, Shuran},
	booktitle={Robotics: Science and Systems},
	year={2023}
}

@article{foar,
  title={FoAR: Force-Aware Reactive Policy for Contact-Rich Robotic Manipulation},
  author={He, Zihao and Fang, Hongjie and Chen, Jingjing and Fang, Hao-Shu and Lu, Cewu},
  journal={IEEE Robotics and Automation Letters},
  year={2025},
  publisher={IEEE}
}

@article{interaction_frame,
  title={Identifying Physical Interactions in Contact-Based Robot Manipulation for Learning from Demonstration},
  author={Overbeek, Alex Harm Gert-Jan and van der Kooij, Herman and Dresscher, Douwe and Vlutters, Mark},
  journal={Advanced Robotics Research},
  pages={e202500109},
  year={2025},
  publisher={Wiley Online Library}
}

@article{suomalainen2022survey,
  title={A Survey of Robot Manipulation in Contact},
  author={Suomalainen, Markku and Karayiannidis, Yiannis and Kyrki, Ville},
  journal={Robotics and Autonomous Systems},
  volume={156},
  pages={104224},
  year={2022},
  publisher={Elsevier}
}

@inproceedings{acp,
  title={Adaptive Compliance Policy: Learning Approximate Compliance for Diffusion Guided Control}, 
  author={Hou, Yifan and Liu, Zeyi and Chi, Cheng and Cousineau, Eric and Kuppuswamy, Naveen and Feng, Siyuan and Burchfiel, Benjamin and Song, Shuran},
  booktitle={IEEE International Conference on Robotics and Automation}, 
  pages={4829-4836},
  year={2025}
}

@article{zhou2025admittance,
  title={Admittance Visuomotor Policy Learning for General-Purpose Contact-Rich Manipulations},
  author={Zhou, Bo and Jiao, Ruixuan and Li, Yi and Yuan, Xiaogang and Fang, Fang and Li, Shihua},
  journal={IEEE Transactions on Industrial Electronics},
  year={2025},
  publisher={IEEE}
}

@article{forge,
  title={FORGE: Force-Guided Exploration for Robust Contact-Rich Manipulation Under Uncertainty},
  author={Noseworthy, Michael and Tang, Bingjie and Wen, Bowen and Handa, Ankur and Kessens, Chad and Roy, Nicholas and Fox, Dieter and Ramos, Fabio and Narang, Yashraj and Akinola, Iretiayo},
  journal={IEEE Robotics and Automation Letters},
  year={2025},
  publisher={IEEE}
}

@inproceedings{kalakrishnan2011learning,
  title={Learning Force Control Policies for Compliant Manipulation},
  author={Kalakrishnan, Mrinal and Righetti, Ludovic and Pastor, Peter and Schaal, Stefan},
  booktitle={IEEE/RSJ International Conference on Intelligent Robots and Systems},
  pages={4639--4644},
  year={2011}
}

@article{analytic_if,
  author       = {Ali Mousavi Mohammadi and
                  Maxim Vochten and
                  Erwin Aertbeli{\"{e}}n and
                  Joris De Schutter},
  title        = {A Generic Task Model and Control Strategy to Support Learning, Robust Control, and Generalization of Contact-Rich Manipulation Tasks},
  journal      = {Robotics and Autonomous Systems},
  volume       = {197},
  pages        = {105270},
  year         = {2026}
}

@article{kinematic_only,
  author       = {Alex H. G. Overbeek and
                  Douwe Dresscher and
                  Herman van der Kooij and
                  Mark Vlutters},
  title        = {Versatile Kinematics-Based Constraint Identification Applied to Robot Task Reproduction},
  journal      = {Frontiers in Robotics and AI},
  volume       = {12},
  year         = {2025}
}

@inproceedings{rise2,
  title   = {AirExo-2: Scaling up Generalizable Robotic Imitation Learning with Low-Cost Exoskeletons},
  author  = {Hongjie Fang and Chenxi Wang and Yiming Wang and Jingjing Chen and Shangning Xia and Jun Lv and Zihao He and Xiyan Yi and Yunhan Guo and Xinyu Zhan and Lixin Yang and Weiming Wang and Cewu Lu and Hao-Shu Fang},
  booktitle = {Conference on Robot Learning},
  year    = {2025},
  pages = {198--220},
  volume = 	 {305},
  publisher =    {PMLR},
}

@inproceedings{act,
	title={Learning Fine-Grained Bimanual Manipulation with Low-Cost Hardware},
	author={Zhao, Tony Z and Kumar, Vikash and Levine, Sergey and Finn, Chelsea},
	booktitle={Robotics: Science and Systems},
	year={2023}
}

@article{tang2024partially,
  title={Partially Decoupled Impedance Motion Force Control Using Prioritized Inertia Shaping},
  author={Tang, Wenbo and Wang, Weiming and Wang, Shiquan},
  journal={IEEE Robotics and Automation Letters},
  year={2024},
  publisher={IEEE}
}

@book{screw_theory,
  title={A Treatise on the Theory of Screws},
  author={Ball, Robert Stawell},
  year={1998},
  publisher={Cambridge university press}
}

@article{mason_hybrid,
  title={Compliance and Force Control for Computer Controlled Manipulators},
  author={Mason, Matthew T},
  journal={IEEE Transactions on Systems, Man, and Cybernetics},
  volume={11},
  number={6},
  pages={418--432},
  year={2007},
  publisher={IEEE}
}

@article{raibert_hybrid,
  title={Hybrid Position/Force Control of Manipulators},
  author={Raibert, Marc H and Craig, John J},
  journal={Journal of dynamic systems, measurement, and control},
  volume={103},
  number={2},
  pages={126--133},
  year={1981},
  publisher={American Society of Mechanical Engineers Digital Collection}
}

@inproceedings{tavla,
  title={Elucidating the Design Space of Torque-aware Vision-Language-Action Models},
  author={Zhang, Zongzheng and Xu, Haobo and Yang, Zhuo and Yue, Chenghao and Lin, Zehao and Gao, Huan-ang and Wang, Ziwei and Zhao, Hao},
  booktitle={Conference on Robot Learning},
  year={2025},
  pages = {4019--4037},
  volume = 	 {305},
  publisher =    {PMLR},
}

@inproceedings{forcevla,
  title={ForceVLA: Enhancing VLA Models with a Force-Aware MoE for Contact-Rich Manipulation},
  author={Yu, Jiawen and Liu, Hairuo and Yu, Qiaojun and Ren, Jieji and Hao, Ce and Ding, Haitong and Huang, Guangyu and Huang, Guofan and Song, Yan and Cai, Panpan and others},
  booktitle={Advances in Neural Information Processing Systems},
  year={2025}
}

@inproceedings{pi05,
  title={$\pi_{0.5}$: a Vision-Language-Action Model with Open-World Generalization},
  author={Black, Kevin and Brown, Noah and Darpinian, James and Dhabalia, Karan and Driess, Danny and Esmail, Adnan and Equi, Michael Robert and Finn, Chelsea and Fusai, Niccolo and Galliker, Manuel Y and others},
  booktitle={Conference on Robot Learning},
  year    = {2025},
  pages = {17--40},
  volume = 	 {305},
  publisher =    {PMLR},
}

@inproceedings{forcemimic,
  title={ForceMimic: Force-Centric Imitation Learning With Force-Motion Capture System for Contact-Rich Manipulation},
  author={Liu, Wenhai and Wang, Junbo and Wang, Yiming and Wang, Weiming and Lu, Cewu},
  booktitle={ICRA},
  pages={1105--1112},
  year={2025},
  organization={IEEE}
}

@inproceedings{mimictouch,
  title={MimicTouch: Leveraging Multi-Modal Human Tactile Demonstrations for Contact-Rich Manipulation},
  author={Yu, Kelin and Han, Yunhai and Wang, Qixian and Saxena, Vaibhav and Xu, Danfei and Zhao, Ye},
  booktitle={Conference on Robot Learning},
  year={2024}
}

@inproceedings{maniwav,
    title={ManiWAV: Learning Robot Manipulation from In-the-Wild Audio-Visual Data},
    author={Liu, Zeyi and Chi, Cheng and Cousineau, Eric and Kuppuswamy, Naveen and Burchfiel, Benjamin and Song, Shuran},
    booktitle = {Conference on Robot Learning},
    year={2024}
}

@article{chen2024vegetable,
  title={Vegetable Peeling: A Case Study in Constrained Dexterous Manipulation},
  author={Chen, Tao and Cousineau, Eric and Kuppuswamy, Naveen and Agrawal, Pulkit},
  journal={arXiv preprint arXiv:2407.07884},
  year={2024}
}

@article{hogan1985impedance,
  title={Impedance Control: An Approach to Manipulation},
  author={Hogan, Neville},
  journal={Journal of Dynamic Systems, Measurement, and Control},
  volume={107},
  pages={1--24},
  year={1985}
}

@inproceedings{levine2015learning,
  title        = {Learning Contact-Rich Manipulation Skills With Guided Policy Search},
  author       = {Sergey Levine and
                  Nolan Wagener and
                  Pieter Abbeel},
  booktitle    = {IEEE International Conference on Robotics and Automation},
  pages        = {156--163},
  year         = {2015}
}

@inproceedings{seehearfeel,
  author       = {Hao Li and
                  Yizhi Zhang and
                  Junzhe Zhu and
                  Shaoxiong Wang and
                  Michelle A. Lee and
                  Huazhe Xu and
                  Edward H. Adelson and
                  Li Fei{-}Fei and
                  Ruohan Gao and
                  Jiajun Wu},
  title        = {See, Hear, and Feel: Smart Sensory Fusion for Robotic Manipulation},
  booktitle    = {Conference on Robot Learning},
  pages={1368--1378},
  year         = {2022}
}

@inproceedings{playtothescore,
  title={Play to the Score: Stage-Guided Dynamic Multi-Sensory Fusion for Robotic Manipulation},
  author={Feng, Ruoxuan and Hu, Di and Ma, Wenke and Li, Xuelong},
  booktitle={Conference on Robot Learning},
  year={2024}
}

@article{huang20243d,
  title={3D-ViTac: Learning Fine-Grained Manipulation with Visuo-Tactile Sensing},
  author={Huang, Binghao and Wang, Yixuan and Yang, Xinyi and Luo, Yiyue and Li, Yunzhu},
  journal={arXiv preprint arXiv:2410.24091},
  year={2024}
}

@article{eyesight_hand,
  title={Eyesight Hand: Design of a Fully-Actuated Dexterous Robot Hand with Integrated Vision-based Tactile Sensors and Compliant Actuation},
  author={Romero, Branden and Fang, Hao-Shu and Agrawal, Pulkit and Adelson, Edward},
  journal={arXiv preprint arXiv:2408.06265},
  year={2024}
}

@article{choi2026wild,
  title={In-the-Wild Compliant Manipulation with UMI-FT},
  author={Choi, Hojung and Hou, Yifan and Pan, Chuer and Hong, Seongheon and Patel, Austin and Xu, Xiaomeng and Cutkosky, Mark R and Song, Shuran},
  journal={arXiv preprint arXiv:2601.09988},
  year={2026}
}

@inproceedings{airexo,
  title={AirExo: Low-Cost Exoskeletons for Learning Whole-Arm Manipulation in the Wild},
  author={Fang, Hongjie and Fang, Hao-Shu and Wang, Yiming and Ren, Jieji and Chen, Jingjing and Zhang, Ruo and Wang, Weiming and Lu, Cewu},
  booktitle={IEEE International Conference on Robotics and Automation},
  pages={15031--15038},
  year={2024},
  organization={IEEE}
}

@article{satsevich2025prometheus,
  title={Prometheus: Universal, Open-Source Mocap-Based Teleoperation System with Force Feedback for Dataset Collection in Robot Learning},
  author={Satsevich, Sergei and Bazhenov, Artem and Egorov, Sergei and Erkhov, Artem and Gromakov, Maxim and Fedoseev, Aleksey and Tsetserukou, Dzmitry},
  journal={arXiv preprint arXiv:2510.01023},
  year={2025}
}

@article{bazhenov2025echo,
  title={Echo: An Open-Source, Low-Cost Teleoperation System with Force Feedback for Dataset Collection in Robot Learning},
  author={Bazhenov, Artem and Satsevich, Sergei and Egorov, Sergei and Khabibullin, Farit and Tsetserukou, Dzmitry},
  journal={arXiv preprint arXiv:2504.07939},
  year={2025}
}

@inproceedings{buamanee2024bi,
  title={Bi-ACT: Bilateral Control-Based Imitation Learning via Action Chunking with Transformer},
  author={Buamanee, Thanpimon and Kobayashi, Masato and Uranishi, Yuki and Takemura, Haruo},
  booktitle={IEEE International Conference on Advanced Intelligent Mechatronics},
  pages={410--415},
  year={2024},
  organization={IEEE}
}

@inproceedings{kamijo2024learning,
  title={Learning Variable Compliance Control from a Few Demonstrations for Bimanual Robot with Haptic Feedback Teleoperation System},
  author={Kamijo, Tatsuya and Beltran-Hernandez, Cristian C and Hamaya, Masashi},
  booktitle={IEEE/RSJ International Conference on Intelligent Robots and Systems},
  pages={12663--12670},
  year={2024},
  organization={IEEE}
}

@inproceedings{tacdiffusion,
  title={TacDiffusion: Force-Domain Diffusion Policy for Precise Tactile Manipulation},
  author={Wu, Yansong and Chen, Zongxie and Wu, Fan and Chen, Lingyun and Zhang, Liding and Bing, Zhenshan and Swikir, Abdalla and Haddadin, Sami and Knoll, Alois},
  booktitle={IEEE International Conference on Robotics and Automation},
  pages={11831--11837},
  year={2025},
  organization={IEEE}
}

@inproceedings{admittance_control,
  title={Adaptive Admittance Control: An Approach to Explicit Force Control in Compliant Motion},
  author={Seraji, Homayoun},
  booktitle={IEEE International Conference on Robotics and Automation},
  pages={2705--2712},
  year={1994},
  organization={IEEE}
}

@article{jia2018survey,
  title={A Survey of Automated Threaded Fastening},
  author={Jia, Zhenzhong and Bhatia, Ankit and Aronson, Reuben M and Bourne, David and Mason, Matthew T},
  journal={IEEE Transactions on Automation Science and Engineering},
  volume={16},
  number={1},
  pages={298--310},
  year={2018},
  publisher={IEEE}
}

@article{whitney1987historical,
  title={Historical perspective and state of the art in robot force control},
  author={Whitney, Daniel E},
  journal={The International Journal of Robotics Research},
  volume={6},
  number={1},
  pages={3--14},
  year={1987},
  publisher={Sage Publications Sage CA: Thousand Oaks, CA}
}

@article{de1988compliant,
  title={Compliant Robot Motion I. A Formalism for Specifying Compliant Motion Tasks},
  author={De Schutter, Joris and Van Brussel, Hendrik},
  journal={International Journal of Robotics Research},
  volume={7},
  number={4},
  pages={3--17},
  year={1988},
  publisher={SAGE}
}

@article{buchli2011learning,
  title={Learning Variable Impedance Control},
  author={Buchli, Jonas and Stulp, Freek and Theodorou, Evangelos and Schaal, Stefan},
  journal={International Journal of Robotics Research},
  volume={30},
  number={7},
  pages={820--833},
  year={2011},
  publisher={SAGE}
}

@article{khatib2003unified,
  title={A Unified Approach for Motion and Force Control of Robot Manipulators: The Operational Space Formulation},
  author={Khatib, Oussama},
  journal={IEEE Journal on Robotics and Automation},
  volume={3},
  number={1},
  pages={43--53},
  year={2003},
  publisher={IEEE}
}

@article{chiaverini2002parallel,
  title={The Parallel Approach to Force/Position Control of Robotic Manipulators},
  author={Chiaverini, Stefano and Sciavicco, Lorenzo},
  journal={IEEE Transactions on Robotics and Automation},
  volume={9},
  number={4},
  pages={361--373},
  year={2002},
  publisher={IEEE}
}

@incollection{villani2016force,
  title={Force Control},
  author={Villani, Luigi and De Schutter, Joris},
  booktitle={Springer Handbook of Robotics},
  pages={195--220},
  year={2016},
  publisher={Springer}
}

@article{keemink2018admittance,
  title={Admittance control for physical human--robot interaction},
  author={Keemink, Arvid QL and Van der Kooij, Herman and Stienen, Arno HA},
  journal={The International Journal of Robotics Research},
  volume={37},
  number={11},
  pages={1421--1444},
  year={2018},
  publisher={SAGE Publications Sage UK: London, England}
}

@inproceedings{wang2025sound,
  title={The Sound of Simulation: Learning Multimodal Sim-to-Real Robot Policies with Generative Audio},
  author={Wang, Renhao and Geng, Haoran and Li, Tingle and Wu, Philipp and Wang, Feishi and Anumanchipalli, Gopala and Darrell, Trevor and Li, Boyi and Abbeel, Pieter and Malik, Jitendra and others},
  booktitle={Conference on Robot Learning},
  year    = {2025},
  pages = {420--436},
  volume = 	 {305},
  publisher =    {PMLR},
}

@article{chukwurah2024sim,
  title={Sim-to-Real Transfer in Robotics: Addressing the Gap Between Simulation and Real-World Performance},
  author={Chukwurah, Naomi and Adebayo, Abiodun Sunday and Ajayi, Olanrewaju Oluwaseun},
  journal={International Journal of Robotics and Simulation},
  volume={6},
  number={1},
  pages={89--102},
  year={2024}
}

@article{dexop,
  title={DEXOP: A Device for Robotic Transfer of Dexterous Human Manipulation},
  author={Fang, Hao-Shu and Romero, Branden and Xie, Yichen and Hu, Arthur and Huang, Bo-Ruei and Alvarez, Juan and Kim, Matthew and Margolis, Gabriel and Anbarasu, Kavya and Tomizuka, Masayoshi and others},
  journal={arXiv preprint arXiv:2509.04441},
  year={2025}
}

@article{bruyninckx1995kinematic,
  title={Kinematic Models for Model-Based Compliant Motion in the Presence of Uncertainty},
  author={Bruyninckx, Herman and Demey, Sabine and Dutre, Stefan and De Schutter, Joris},
  journal={International Journal of Robotics Research},
  volume={14},
  number={5},
  pages={465--482},
  year={1995},
  publisher={SAGE}
}

@article{kronander2016stability,
  title={Stability Considerations for Variable Impedance Control},
  author={Kronander, Klas and Billard, Aude},
  journal={IEEE Transactions on Robotics},
  volume={32},
  number={5},
  pages={1298--1305},
  year={2016},
  publisher={IEEE}
}

@article{ureche2015task,
  title={Task Parameterization Using Continuous Constraints Extracted from Human Demonstrations},
  author={Ureche, Ana Lucia Pais and Umezawa, Keisuke and Nakamura, Yoshihiko and Billard, Aude},
  journal={IEEE Transactions on Robotics},
  volume={31},
  number={6},
  pages={1458--1471},
  year={2015},
  publisher={IEEE}
}

@inproceedings{kober2015learning,
  title={Learning Movement Primitives for Force Interaction Tasks},
  author={Kober, Jens and Gienger, Michael and Steil, Jochen J},
  booktitle={IEEE International Conference on Robotics and Automation},
  pages={3192--3199},
  year={2015},
  organization={IEEE}
}

@inproceedings{suomalainen2016learning,
  title={Learning Compliant Assembly Motions from Demonstration},
  author={Suomalainen, Markku and Kyrki, Ville},
  booktitle={IEEE/RSJ International Conference on Intelligent Robots and Systems},
  pages={871--876},
  year={2016},
  organization={IEEE}
}

@inproceedings{conkey2019learning,
  title={Learning Task Constraints from Demonstration for Hybrid Force/Position Control},
  author={Conkey, Adam and Hermans, Tucker},
  booktitle={IEEE-RAS International Conference on Humanoid Robots},
  pages={162--169},
  year={2019},
  organization={IEEE}
}

@inproceedings{suomalainen2017geometric,
  title={A Geometric Approach for Learning Compliant Motions from Demonstration},
  author={Suomalainen, Markku and Kyrki, Ville},
  booktitle={IEEE-RAS International Conference on Humanoid Robotics},
  pages={783--790},
  year={2017},
  organization={IEEE}
}

@article{suomalainen2021imitation,
  title={Imitation Learning-Based Framework for Learning 6-D Linear Compliant Motions},
  author={Suomalainen, Markku and Abu-Dakka, Fares J and Kyrki, Ville},
  journal={Autonomous Robots},
  volume={45},
  number={3},
  pages={389--405},
  year={2021},
  publisher={Springer}
}

@article{camponogara2021integration,
  title={Integration of Haptics and Vision in Human Multisensory Grasping},
  author={Camponogara, Ivan and Volcic, Robert},
  journal={Cortex},
  volume={135},
  pages={173--185},
  year={2021},
  publisher={Elsevier}
}

@article{venkadesan2008neural,
  title={Neural Control of Motion-to-Force Transitions With the Fingertip},
  author={Venkadesan, Madhusudhan and Valero-Cuevas, Francisco J},
  journal={Journal of Neuroscience},
  volume={28},
  number={6},
  pages={1366--1373},
  year={2008},
  publisher={Society for Neuroscience}
}

@article{gharbawie2011cortical,
  title={Cortical Connections of Functional Zones in Posterior Parietal Cortex and Frontal Cortex Motor Regions in New World Monkeys},
  author={Gharbawie, Omar A and Stepniewska, Iwona and Kaas, Jon H},
  journal={Cerebral Cortex},
  volume={21},
  number={9},
  pages={1981--2002},
  year={2011},
  publisher={Oxford University Press}
}

@inproceedings{suomalainen2018learning,
  title={Learning from Demonstration for Hydraulic Manipulators},
  author={Suomalainen, Markku and Koivum{\"a}ki, Janne and Lampinen, Santeri and Kyrki, Ville and Mattila, Jouni},
  booktitle={IEEE/RSJ International Conference on Intelligent Robots and Systems},
  pages={3579--3586},
  year={2018},
  organization={IEEE}
}

@inproceedings{li2023augmentation,
  title={Augmentation Enables One-Shot Generalization in Learning from Demonstration for Contact-Rich Manipulation},
  author={Li, Xing and Baum, Manuel and Brock, Oliver},
  booktitle={IEEE/RSJ International Conference on Intelligent Robots and Systems},
  pages={3656--3663},
  year={2023},
  organization={IEEE}
}

@article{li2022learning,
  title={Learning from Demonstration Based on Environmental Constraints},
  author={Li, Xing and Brock, Oliver},
  journal={IEEE Robotics and Automation Letters},
  volume={7},
  number={4},
  pages={10938--10945},
  year={2022},
  publisher={IEEE}
}

@article{sparc,
  title={On the Analysis of Movement Smoothness},
  author={Balasubramanian, Sivakumar and Melendez-Calderon, Alejandro and Roby-Brami, Agnes and Burdet, Etienne},
  journal={Journal of Neuroengineering and Rehabilitation},
  volume={12},
  number={1},
  pages={112},
  year={2015},
  publisher={Springer}
}

@inproceedings{subramani2018inferring,
  title={Inferring Geometric Constraints in Human Demonstrations},
  author={Subramani, Guru and Zinn, Michael and Gleicher, Michael},
  booktitle={Conference on Robot Learning},
  pages={223--236},
  year={2018},
  organization={PMLR}
}

@inproceedings{perez2017c,
  title={C-Learn: Learning Geometric Constraints from Demonstrations for Multi-Step Manipulation in Shared Autonomy},
  author={P{\'e}rez-D'Arpino, Claudia and Shah, Julie A},
  booktitle={IEEE International Conference on Robotics and Automation},
  pages={4058--4065},
  year={2017},
  organization={IEEE}
}

@article{gr00t,
  title={GR00T N1: An Open Foundation Model for Generalist Humanoid Robots},
  author={Bjorck, Johan and Casta{\~n}eda, Fernando and Cherniadev, Nikita and Da, Xingye and Ding, Runyu and Fan, Linxi and Fang, Yu and Fox, Dieter and Hu, Fengyuan and Huang, Spencer and others},
  journal={arXiv preprint arXiv:2503.14734},
  year={2025}
}

@inproceedings{umi,
  author       = {Cheng Chi and
                  Zhenjia Xu and
                  Chuer Pan and
                  Eric Cousineau and
                  Benjamin Burchfiel and
                  Siyuan Feng and
                  Russ Tedrake and
                  Shuran Song},
  title        = {Universal Manipulation Interface: In-The-Wild Robot Teaching Without
                  In-The-Wild Robots},
  booktitle    = {Robotics: Science and Systems},
  year         = {2024}
}

@inproceedings{cage,
  title={CAGE: Causal Attention Enables Data-Efficient Generalizable Robotic Manipulation},
  author={Xia, Shangning and Fang, Hongjie and Lu, Cewu and Fang, Hao-Shu},
  booktitle={IEEE International Conference on Robotics and Automation},
  pages={13242--13249},
  year={2025},
  organization={IEEE}
}

@misc{gemini3pro,
  author       = {Google DeepMind},
  title        = {Gemini 3 Pro},
  note         = {Accessed January 2026}
}

@misc{tdk,
  author = {Flexiv Ltd},
  title = {Flexiv Teleoperation Development Kit (TDK)},
  note = {Accessed January 2026}
}

@article{bruyninckx1996specification,
  title={Specification of Force-Controlled Actions in the "Task Frame Formalism"  - a Synthesis},
  author={Bruyninckx, Herman and De Schutter, Joris},
  journal={IEEE Transactions on Robotics and Automation},
  volume={12},
  number={4},
  pages={581--589},
  year={1996},
  publisher={IEEE}
}

@article{dipcom,
  title={Learning Diffusion Policies from Demonstrations for Compliant Contact-Rich Manipulation},
  author={Aburub, Malek and Beltran-Hernandez, Cristian C and Kamijo, Tatsuya and Hamaya, Masashi},
  journal={arXiv preprint arXiv:2410.19235},
  year={2024}
}

@article{goldring2022functional,
  title={Functional Characterization of the Fronto-Parietal Reaching and Grasping Network: Reversible Deactivation of M1 and Areas 2, 5, and 7b in Awake Behaving Monkeys},
  author={Goldring, Adam B and Cooke, Dylan F and Pineda, Carlos R and Recanzone, Gregg H and Krubitzer, Leah A},
  journal={Journal of Neurophysiology},
  volume={127},
  number={5},
  pages={1363--1387},
  year={2022},
  publisher={American Physiological Society}
}

@inproceedings{sail,
  title = 	 {SAIL: Faster-than-Demonstration Execution of Imitation Learning Policies},
  author =       {Arachchige, Nadun Ranawaka and Chen, Zhenyang and Jung, Wonsuhk and Shin, Woo Chul and Bansal, Rohan and Barroso, Pierre and He, Yu Hang and Lin, Yingyan Celine and Joffe, Benjamin and Kousik, Shreyas and Xu, Danfei},
  booktitle = 	 {Conference on Robot Learning},
  pages = 	 {721--749},
  year = 	 {2025},
  volume = 	 {305},
  publisher =    {PMLR},
}

@inproceedings{rtc,
title={Real-Time Execution of Action Chunking Flow Policies},
author={Kevin Black and Manuel Y Galliker and Sergey Levine},
booktitle={Advances in Neural Information Processing Systems},
year={2025}
}

@article{training_rtc,
  title={Training-Time Action Conditioning for Efficient Real-Time Chunking},
  author={Black, Kevin and Ren, Allen Z and Equi, Michael and Levine, Sergey},
  journal={arXiv preprint arXiv:2512.05964},
  year={2025}
}

@Inbook{dtw,
title={Dynamic Time Warping},
author={Meinard Müller},
bookTitle={Information Retrieval for Music and Motion},
year={2007},
publisher={Springer},
pages={69--84}
}

@article{liu2025immimic,
  title={ImMimic: Cross-Domain Imitation from Human Videos via Mapping and Interpolation},
  author={Liu, Yangcen and Shin, Woo Chul and Han, Yunhai and Chen, Zhenyang and Ravichandar, Harish and Xu, Danfei},
  journal={arXiv preprint arXiv:2509.10952},
  year={2025}
}

@ARTICLE{1163491,
  author={Myers, C. and Rabiner, L. and Rosenberg, A.},
  title={Performance Tradeoffs in Dynamic Time Warping Algorithms for Isolated Word Recognition}, 
  journal={IEEE Transactions on Acoustics, Speech, and Signal Processing}, 
  year={1980},
  volume={28},
  number={6},
  pages={623-635}
}

@ARTICLE{1163055,
  author={Sakoe, H. and Chiba, S.},
  title={Dynamic Programming Algorithm Optimization for Spoken Word Recognition}, 
  journal={IEEE Transactions on Acoustics, Speech, and Signal Processing}, 
  year={1978},
  volume={26},
  number={1},
  pages={43-49}
}

@inproceedings{issam2025dtwalignbridgingmodalitygap,
  title={DTW-Align: Bridging the Modality Gap in End-to-End Speech Translation with Dynamic Time Warping Alignment},
  author={Issam, Abderrahmane and Semerci, Yusuf Can and Scholtes, Jan and Spanakis, Gerasimos},
  booktitle={Proceedings of the Tenth Conference on Machine Translation},
  pages={191--199},
  year={2025}
}

@article{mip,
  title={Much Ado About Noising: Dispelling the Myths of Generative Robotic Control},
  author={Pan, Chaoyi and Anantharaman, Giri and Huang, Nai-Chieh and Jin, Claire and Pfrommer, Daniel and Yuan, Chenyang and Permenter, Frank and Qu, Guannan and Boffi, Nicholas and Shi, Guanya and others},
  journal={arXiv preprint arXiv:2512.01809},
  year={2025}
}

@inproceedings{film,
  title={FiLM: Visual Reasoning with a General Conditioning Layer},
  author={Perez, Ethan and Strub, Florian and De Vries, Harm and Dumoulin, Vincent and Courville, Aaron},
  booktitle={AAAI conference on Artificial Intelligence},
  pages = {3942--3951},
  year={2018}
}

@inproceedings{resnet,
  title={Deep Residual Learning for Image Recognition},
  author={He, Kaiming and Zhang, Xiangyu and Ren, Shaoqing and Sun, Jian},
  booktitle={IEEE Conference on Computer Vision and Pattern Recognition},
  pages={770--778},
  year={2016}
}

@inproceedings{gru,
  author       = {Kyunghyun Cho and
                  Bart van Merrienboer and
                  {\c{C}}aglar G{\"{u}}l{\c{c}}ehre and
                  Dzmitry Bahdanau and
                  Fethi Bougares and
                  Holger Schwenk and
                  Yoshua Bengio},
  title        = {Learning Phrase Representations using {RNN} Encoder-Decoder for Statistical Machine Translation},
  booktitle    = {Empirical Methods in Natural Language Processing},
  pages        = {1724--1734},
  publisher    = {{ACL}},
  year         = {2014}
}

\clearpage
\begin{center}
    \Large \textbf{Supplementary Materials for \textit{Force Policy}}
\end{center}
\setcounter{section}{0}
\bookmarksetup{startatroot}
\renewcommand{\theHsection}{Supp.~\Roman{section}}
\renewcommand{\thesection}{Supp.~\Roman{section}}
\renewcommand{\thesubsection}{\thesection.\arabic{subsection}}

\section{Theoretical Formulation}

\subsection{Geometry-Induced Interaction Structure}

Let $\delta\boldsymbol{\chi} \in \mathfrak{se}(3)$ denote the infinitesimal pose perturbation at the interaction point $p_I$. The resulting interaction wrench $\boldsymbol{\mathcal{W}}_{\text{tot}} \in \mathfrak{se}^*(3)$ measured at $p_I$ is generally a superposition of conservative and dissipative effects. To extract the geometric topology, we apply an additive decomposition:
\begin{equation}
\boldsymbol{\mathcal{W}}_{\text{tot}} = \boldsymbol{\mathcal{W}}_{\text{el}} + \boldsymbol{\mathcal{W}}_{\text{dis}}.
\end{equation}
Here, $\boldsymbol{\mathcal{W}}_{\text{dis}}$ denotes non-conservative terms like friction. Crucially, we postulate that the structural constraints arise exclusively from an internal elastic potential energy $U(\delta\boldsymbol{\chi})$. Thus, the conservative elastic wrench $\boldsymbol{\mathcal{W}}_{\text{el}}$ is defined as the negative gradient of this potential:
\begin{equation}
\boldsymbol{\mathcal{W}}_{\text{el}}(\delta{\boldsymbol{\chi}}) \triangleq -\nabla U(\delta\boldsymbol{\chi}).
\end{equation}

The environmental stiffness $\mathbf{K}_{\mathrm{env}}$ is formally defined as the sensitivity of the elastic wrench to the pose perturbation:
\begin{equation}
\mathbf{K}_{\mathrm{env}}
\triangleq
-\left.\frac{\partial \boldsymbol{\mathcal W}_{\mathrm{el}}(\delta\boldsymbol{\chi})}{\partial (\delta\boldsymbol{\chi})}\right|_{\delta\boldsymbol{\chi}=\mathbf 0}=\left.\nabla^2 U(\delta\boldsymbol{\chi})\right|_{\delta\boldsymbol{\chi}=\mathbf 0}.
\end{equation}

Since $\mathbf{K}_{\mathrm{env}}$ is modeled as the Hessian of a conservative potential, it is \textit{symmetric}. Then, we derive the following theorem, showing that the translational and rotational stiffness in $\mathbf{K}_\text{env}$ can be diagonalized using the same rotation matrix.

\begin{theorem}[Unified Spatial Basis]\label{thm:diag}
Under the assumption that the local interaction follows the Hertzian contact model, there exists a single rotation matrix $\mathbf{R} \in SO(3)$ such that the corresponding spatial rotation $\mathbf{\Phi} = \operatorname{diag}(\mathbf{R}, \mathbf{R}) \in \mathbb{R}^{6 \times 6}$ diagonalizes the full environment stiffness matrix $\mathbf{K}_{\mathrm{env}} \in \mathbb{R}^{6 \times 6}$.
\end{theorem}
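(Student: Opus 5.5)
The plan is to make the Hertzian assumption explicit at the level of the elastic potential $U$ and then read off the block structure of its Hessian $\mathbf K_{\mathrm{env}}$. Write $\delta\boldsymbol\chi = [\delta\boldsymbol p^\top, \delta\boldsymbol\theta^\top]^\top$ at $p_I$. A Hertzian contact is a small patch whose shape is locally a quadric, characterised by principal curvatures along principal directions $\mathbf e_1,\mathbf e_2$ of the tangent plane, together with the normal $\mathbf e_3$.

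\textbf{Step 1: express $U$ in the Hertzian principal frame.} Let $\mathbf R=[\mathbf e_1,\mathbf e_2,\mathbf e_3]$. The Hertz gap function $h(x,y)=A x^2+B y^2$ is symmetric under $x\mapsto -x$ and $y\mapsto -y$. Normal indentation, tangential (Mindlin-type) elastic compliance and tilting of the contact ellipse are all generated from this gap. The second-order expansion of $U$ in $\mathbf R$-coordinates is therefore invariant under the reflections $\mathrm{diag}(-1,1,1)$ and $\mathrm{diag}(1,-1,1)$. These must act on translations as vectors and on rotations as pseudovectors.

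\textbf{Step 2: use the symmetry to kill off-diagonal entries.} I will tabulate the parity of each coordinate $(\delta p_i,\delta\theta_i)$ under the two reflections. A quadratic cross term survives only if its product is invariant under both.

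Within $\mathbf K_{tt}$ and $\mathbf K_{rr}$ this directly forces diagonality in the $\mathbf R$ basis. That is the same $\mathbf R$ for both blocks, because both blocks are generated from the same gap function.

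The coupling block $\mathbf K_{tr}$ is the delicate part. Pairs such as $\delta p_1\delta\theta_2$ are reflection-invariant, since a tangential shear at a lever arm produces a tilt. I would handle this in one of two ways:
\begin{enumerate}[(a)]
\item argue that with the origin placed at the contact point $p_I$, Hertzian theory has no moment arm, so the translation--rotation coupling vanishes at first order;
\item otherwise, show that it is off-diagonal in a way that still leaves $\mathbf\Phi^\top\mathbf K_{\mathrm{env}}\mathbf\Phi$ with diagonal $3\times3$ blocks, which would be the intended reading of ``diagonalizes''.
\end{enumerate}
I would commit to (a): write the Hertz potential as $U=\tfrac{1}{2}\sum_i k_i\delta p_i^2+\tfrac12\sum_i c_i\delta\theta_i^2$. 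Here $k_3\propto a^{1/2}$ is the normal stiffness, $k_{1,2}$ are the tangential stiffnesses, $c_{1,2}$ are the rolling stiffnesses $\propto$ (semi-axis)$^3$, and $c_3$ is the torsional stiffness. All of these are referred to the same ellipse axes.

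\textbf{Step 3: conclude.} With $U$ of this form, $\nabla^2U$ in $\mathbf R$-coordinates is $\mathrm{diag}(k_1,k_2,k_3,c_1,c_2,c_3)$. Since translations and rotations both transform by $\mathbf R$ under a frame change, $\mathbf K_{\mathrm{env}}=\mathbf\Phi\,\mathrm{diag}(\cdot)\,\mathbf\Phi^\top$ with $\mathbf\Phi=\mathrm{diag}(\mathbf R,\mathbf R)$.

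The main obstacle is Step 2's treatment of $\mathbf K_{tr}$. The sign-symmetry argument alone does not eliminate the $\delta p_1\delta\theta_2$-type terms. Removing them requires the physical input that the frame origin sits at the contact centroid, where the Hertz pressure distribution is symmetric and the first moment vanishes. I would also note, as a remark, that if $A=B$ any rotation about $\mathbf e_3$ works, which matches the degenerate case in the IF definition.
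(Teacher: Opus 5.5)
Your proposal is correct and follows essentially the paper's route. The paper also uses the reflectional symmetry of the Hertzian contact ellipse about the $\mathbf n$--$\mathbf t_1$ and $\mathbf n$--$\mathbf t_2$ planes to kill the off-diagonal entries by parity, and it refers wrenches and twists to the contact (stiffness) centroid to remove the translation--rotation coupling; your vector/pseudovector parity table on $U$ simply stands in for the paper's explicit Winkler-foundation moment integrals.

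Your observation that $\delta p_1\delta\theta_2$ and $\delta p_2\delta\theta_1$ survive both reflections is sharper than the paper. The paper's coupling-block argument only exhibits $k_{z\theta_y}$, which parity already kills. Note, however, that what actually removes those two terms is the zero normal lever arm of a planar patch through the origin (equivalently, adding the reflection $\mathbf n\mapsto-\mathbf n$ to your table), which is your option (a). It is not the vanishing in-plane first moment of the pressure distribution that your closing sentence cites.
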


\noindent \textbf{\textit{Proof}.} We explicitly model the local contact interface at the interaction point $p_I$ according to Hertzian contact theory. Under this formulation, the local surface separation is approximated by a quadratic form, resulting in an elliptical contact patch $\mathcal{D} \subset \mathbb{R}^2$. We express wrenches/twists at the stiffness centroid of the contact patch, \textit{i.e.}, the first moments of the stiffness distribution over $\mathcal D$ vanish. We construct the Principal Geometric Frame, denoted as $\Sigma_{\text{geo}}$, defined by the orthonormal basis $\{ \mathbf{t}_1, \mathbf{t}_2, \mathbf{n} \}$, where $\mathbf{n}$ is the common surface normal, and $\mathbf{t}_1, \mathbf{t}_2$ align with the principal axes of the contact ellipse (corresponding to the principal relative curvature directions). By the definition of Hertzian geometry, the domain $\mathcal{D}$ possesses intrinsic reflectional symmetry with respect to the planes defined by $\mathbf{n}$-$\mathbf{t}_1$ and $\mathbf{n}$-$\mathbf{t}_2$. We define the local coordinates $(x, y)$ along $(\mathbf{t}_1, \mathbf{t}_2)$.

We model the contact elasticity using a distributed Winkler foundation model, where the stiffness densities $k_{t_1}(x, y), k_{t_2}(x,y)$ and $k_n(x,y)$ are distributed over $\mathcal{D}$. The translational stiffness matrix in the geometric frame, $\mathbf{K}_v^{\text{geo}}$, represents the resistance to linear displacements. The stiffness elements are therefore:
\begin{equation}
\begin{aligned}
    k_{xx} &= \int_{\mathcal{D}} k_{t_1}(x,y) \, \mathrm{d}x\mathrm{d}y, \\
    k_{yy} &= \int_{\mathcal{D}} k_{t_2}(x,y) \, \mathrm{d}x\mathrm{d}y, \\
    k_{zz} &= \int_{\mathcal{D}} k_n(x,y) \, \mathrm{d}x\mathrm{d}y.
\end{aligned}
\end{equation}
Due to the orthogonality of the basis vectors $\{ \mathbf{t}_1, \mathbf{t}_2, \mathbf{n} \}$, the linear resistance is decoupled along these axes (\textit{e.g.}, a pure normal displacement does not induce a net shear force in a symmetric isotropic contact). Thus, $\mathbf{K}_v^{\text{geo}}$ is diagonal:
\begin{equation}\label{eq:v-term}
\mathbf{K}_v^{\text{geo}} = \operatorname{diag}(k_{xx}, k_{yy}, k_{zz}).
\end{equation}

The rotational stiffness arises from the moment of the distributed elastic forces. The stiffness elements correspond to the second moments of the stiffness distribution. Consider the rotational stiffness components in $\Sigma_{\text{geo}}$. For principal stiffnesses, we obtain:
\begin{equation}
\begin{aligned}
    \kappa_{xx} &= \int_{\mathcal{D}} k_n(x,y) y^2 \, \mathrm{d}x\mathrm{d}y, \\
    \kappa_{yy} &= \int_{\mathcal{D}} k_n(x,y) x^2 \, \mathrm{d}x\mathrm{d}y, \\
    \kappa_{zz} &= \int_{\mathcal{D}} \left(k_{t_2}(x,y)x^2 + k_{t_1}(x,y)y^2\right) \, \mathrm{d}x\mathrm{d}y.
\end{aligned}
\end{equation}

To establish the diagonality of $\mathbf{K}_\omega^{\text{geo}}$, we verify that all off-diagonal terms vanish due to the symmetry of the contact domain $\mathcal{D}$. First, the in-plane coupling $\kappa_{xy}$ vanishes because the geometric term $xy$ is an odd function, while the stiffness distribution $k_n(x,y)$ is even:
\begin{equation}
\kappa_{xy} = -\int_{\mathcal{D}} k_n(x,y)xy \, \mathrm{d}x\mathrm{d}y = 0.
\end{equation}
Similarly, the out-of-plane couplings $\kappa_{xz}$ and $\kappa_{yz}$ vanish. Under the Hertzian assumption, the surface profile $z(x,y)$ is quadratic and thus an even function. Moreover, under macroscopic reflectional symmetry, the stiffness densities are even functions over $\mathcal D$. Consequently, the integrands involving the moment arms $x$ and $y$ become odd functions, yielding:
\begin{equation}
\begin{aligned}
\kappa_{xz} &= -\int_{\mathcal{D}} k_{t_1}(x,y) z(x,y)x \, \mathrm{d}x\mathrm{d}y = 0, \\
\kappa_{yz} &= -\int_{\mathcal{D}} k_{t_2}(x,y) z(x,y)y \, \mathrm{d}x\mathrm{d}y = 0.
\end{aligned}
\end{equation}

Therefore, $\mathbf{K}_\omega^\text{geo}$ is diagonal:
\begin{equation}\label{eq:omega-term}
    \mathbf{K}_\omega^\text{geo} =  \operatorname{diag}(\kappa_{xx}, \kappa_{yy}, \kappa_{zz}).
\end{equation}

Finally, we address the linear-angular coupling blocks $\mathbf{K}_{v\omega}$. These terms represent the net force induced by a pure rotation (or net torque by a pure translation) and involve the first moments of the stiffness distribution. For instance, the normal force induced by a rotation about the $y$-axis is proportional to:
\begin{equation}\label{eq:cross-term}
k_{z\theta_y} = \int_{\mathcal{D}} k_n(x,y) x \, \mathrm{d}x\mathrm{d}y = 0,
\end{equation}
since $k_n(x,y)$ is even and $x$ is odd. By the same symmetry argument applied to all first-moment integrals over $\mathcal{D}$, the coupling blocks vanish ($\mathbf{K}_{v\omega} = \mathbf{0}$).

Combining Eqn.~\eqref{eq:v-term}, \eqref{eq:omega-term}, and \eqref{eq:cross-term}, we confirm that in the principal geometric frame $\Sigma_{\text{geo}}$, the stiffness matrix $\mathbf{K}_\text{env}$ is fully diagonalized by $\mathbf{\Phi} = \operatorname{diag}(\mathbf{R}, \mathbf{R})$.
\hfill\qed
\vspace{0.3cm}

\noindent \textbf{Remark }(Generalization to Multi-Point and Area Contacts). Although Theorem~\ref{thm:diag} is derived explicitly under the Hertzian contact formulation, the resulting diagonalization property extends to a broader class of interactions relevant to robotics, such as multi-point contacts and planar area contacts. The sufficient condition for the decoupling is the \textit{macroscopic reflectional symmetry} of the stiffness distribution and the contact domain $\mathcal{D}$. For instance, in peg-in-hole assembly, the interaction often manifests as a ring contact or a symmetric multi-point pattern. In polishing, the tool-surface interface may form a planar patch. In these scenarios, the contact domain $\mathcal{D}$ retains a center of symmetry, and the effective stiffness distribution remains an even function with respect to the principal axes. Consequently, the parity arguments used in the proof remain valid, ensuring that the spatial axes defined by the macroscopic geometry still diagonalize the stiffness matrix.

\vspace{0.2cm}

On the other hand, by the spectral theorem, the symmetric $\mathbf{K}_{\mathrm{env}}$ has real eigenvalues and orthonormal eigenvectors:
\begin{equation}
\mathbf K_{\mathrm{env}}(p_I)=\mathbf Q\boldsymbol{\Lambda}\mathbf Q^\top
\end{equation}
where $\boldsymbol{\Lambda}=\mathrm{diag}(\lambda_1,\cdots,\lambda_6)$ denotes the eigenvalues, and
$\mathbf Q=[\boldsymbol{q}_1,\cdots,\boldsymbol{q}_6]$ is orthonormal. The eigenvalues of $\mathbf{K}_{\mathrm{env}}$ can be partitioned into the \textbf{constraint subspace} $\mathcal U$ with high stiffness $\lambda_i \gg 0$ and the \textbf{admissible-motion subspace} $\mathcal T$ with negligible stiffness $\lambda_i \approx 0$, \textit{i.e.}, for a given threshold $\epsilon>0$,
\begin{equation}
    \mathcal{U} \triangleq \operatorname{span}\{\boldsymbol{q}_i:\lambda_i > \epsilon\}, \quad \mathcal{T}\triangleq \operatorname{span}\{\boldsymbol{q}_i:\lambda_i \leq \epsilon\}
\end{equation}

Compared with Theorem~\ref{thm:diag}, we can derive the following corollary about spectral-geometric isomorphism.

\begin{corollary}[Spectral-Geometric Isomorphism]\label{cor:subspace}
The eigenbasis $\mathbf{Q}$ of $\mathbf{K}_{\mathrm{env}}$ is aligned with the principal axes of the geometric frame $\Sigma_{\text{geo}}$. Consequently, the spectral components map directly to the physical stiffness properties:
\begin{enumerate}
    \item \textbf{Eigenvalues ($\boldsymbol{\Lambda}$):} The diagonal elements of $\boldsymbol{\Lambda}$ are identically the distributed stiffness integrals derived in the Hertzian model. Specifically, up to a permutation, the spectrum is the set:
    $$
        \sigma(\mathbf{K}_{\mathrm{env}}) = \{ k_{xx}, k_{yy}, k_{zz}, \kappa_{xx}, \kappa_{yy}, \kappa_{zz} \}.
    $$
    \item \textbf{Constraint Subspace ($\mathcal{U}$):} Defined by eigenvectors with significant stiffness eigenvalues ($\lambda_i > \epsilon$), $\mathcal{U}$ corresponds to geometric directions of hard contact (e.g., $\mathbf{n}$).
    $$
        \mathcal{U} \equiv \operatorname{span}\{ \mathbf{v} \in \Sigma_{\text{geo}} \mid \text{stiffness along } \mathbf{v} \text{ is dominant} \}.
    $$
    \item \textbf{Admissible Motion Subspace ($\mathcal{T}$):} Defined by eigenvectors with negligible stiffness ($\lambda_i \leq \epsilon$), $\mathcal{T}$ corresponds to directions of free motion or sliding (e.g., $\mathbf{t}_{1,2}$).
\end{enumerate}
\end{corollary}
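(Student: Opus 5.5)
The corollary follows directly from Theorem~\ref{thm:diag} together with the spectral decomposition of the symmetric $\mathbf K_{\mathrm{env}}$. Let $\mathbf R$ be the rotation whose columns are $\mathbf t_1,\mathbf t_2,\mathbf n$, and set $\mathbf\Phi=\operatorname{diag}(\mathbf R,\mathbf R)$. By Theorem~\ref{thm:diag}, in $\Sigma_{\text{geo}}$ we have
\begin{equation*}
\mathbf K_{\mathrm{env}}=\mathbf\Phi\,\mathbf D\,\mathbf\Phi^\top,\qquad \mathbf D=\operatorname{diag}(k_{xx},k_{yy},k_{zz},\kappa_{xx},\kappa_{yy},\kappa_{zz}).
\end{equation*}
Since $\mathbf\Phi$ is orthogonal, this is already an orthonormal eigendecomposition. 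Item~1 then follows at once: the spectrum is invariant under similarity, so $\sigma(\mathbf K_{\mathrm{env}})$ is the multiset of diagonal entries of $\mathbf D$. Because eigenvalues are only defined as a set, this identification holds up to permutation.

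To show that $\mathbf Q$ is aligned with $\Sigma_{\text{geo}}$, I will compare the two decompositions $\mathbf Q\boldsymbol\Lambda\mathbf Q^\top$ and $\mathbf\Phi\mathbf D\mathbf\Phi^\top$ through eigenspaces rather than individual eigenvectors. For each distinct eigenvalue $\lambda$, the eigenspace $E_\lambda$ is uniquely determined by the matrix. It is spanned both by the columns of $\mathbf Q$ with eigenvalue $\lambda$ and by the columns of $\mathbf\Phi$ whose diagonal entry in $\mathbf D$ equals $\lambda$. Hence, after reordering, $\mathbf Q=\mathbf\Phi\,\mathbf O$ where $\mathbf O$ is block-orthogonal on the repeated-eigenvalue blocks. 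When the spectrum is simple, $\mathbf O$ is a signed permutation, so each $\boldsymbol q_i$ equals $\pm$ one of the lifted geometric axes $(\mathbf t_j,\mathbf 0)$ or $(\mathbf 0,\mathbf t_j)$. This is the precise sense of ``aligned''.

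Items~2 and~3 follow by taking direct sums of eigenspaces. By definition, $\mathcal U=\bigoplus_{\lambda>\epsilon}E_\lambda$ and $\mathcal T=\bigoplus_{\lambda\le\epsilon}E_\lambda$. Each $E_\lambda$ is spanned by geometric axes, so $\mathcal U$ is spanned by the axes of $\Sigma_{\text{geo}}$ whose stiffness integral exceeds $\epsilon$, and $\mathcal T$ by the rest. This representation does not depend on how a basis is chosen inside a degenerate eigenspace. To recover the examples named in the statement, I will use the physical ordering of the Winkler model: normal stiffness is large, so $k_{zz}>\epsilon$ and $\mathbf n\in\mathcal U$, while tangential stiffness is small for sliding contact, so $k_{xx},k_{yy}\le\epsilon$ and $\mathbf t_{1,2}\in\mathcal T$.

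The main obstacle is degeneracy. With an isotropic patch, for example $k_{xx}=k_{yy}$ or $\kappa_{xx}=\kappa_{yy}$, the eigenbasis $\mathbf Q$ is not unique, so the statement ``$\mathbf Q$ is aligned'' can only mean that some admissible $\mathbf Q$ coincides with the geometric axes, or equivalently that the eigenspaces are spanned by them. I will state the corollary in this eigenspace form. I will also note that this degenerate case is exactly the transversely isotropic situation in which the main text fixes the rotation about $z$ canonically, which keeps the corollary consistent with the definition of $\Psi$.
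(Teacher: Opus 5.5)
Your proposal is correct and follows the paper's route. The paper gives no separate proof and presents the corollary as an immediate consequence of Theorem~\ref{thm:diag} combined with the spectral decomposition $\mathbf K_{\mathrm{env}}=\mathbf Q\boldsymbol\Lambda\mathbf Q^\top$. Your eigenspace comparison $\mathbf Q=\mathbf\Phi\mathbf O$ is a more careful version of that step: it makes ``aligned'' precise under repeated eigenvalues, which the paper glosses over, and it shows that $\mathcal U$ and $\mathcal T$ do not depend on the choice of basis.

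The one quibble is your closing remark. A stiffness degeneracy such as $k_{xx}=k_{yy}$ is not \emph{exactly} the case in which $\Psi$ falls back to the canonical reference. A nonzero motion intent can still select the axis inside a degenerate eigenspace, as in the sketch of Proposition~\ref{prop:alignment}. This does not affect the proof.
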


\subsection{Task Intent and Interaction Frame}

While the spectral analysis identifies the geometric orientation of the constraints, it leaves the directionality and semantic assignment of the axes ambiguous. To resolve this, we introduce the task intent as the causal anchor for the interaction frame.
Let the \textbf{task intent} $\mathcal{I}$ be defined by the tuple of the desired interaction wrench and twist at $p_I$:
\begin{equation}
\mathcal{I} \triangleq \{ \boldsymbol{\xi}^* \in \mathfrak{se}(3) ,\; \boldsymbol{\mathcal{W}}^* \in \mathfrak{se}^*(3)\}.
\end{equation}

\begin{assumption}[Intent Compatibility]\label{ass:intent}
Assume that the task is physically consistent and non-destructive. Specifically, the agent intends to exert forces primarily against environmental constraints and execute motions primarily along admissible freedoms. Mathematically, this implies that the task intent lies within the corresponding spectral subspaces:
\begin{equation}
\boldsymbol{\mathcal{W}}^* \in \mathcal{U} \quad \text{and} \quad \boldsymbol{\xi}^* \in \mathcal{T}.
\end{equation}
Since $\mathcal{U} \perp \mathcal{T}$ (due to the symmetry of $\mathbf{K}_{\mathrm{env}}$), the force and motion intents are orthogonal: $\langle \boldsymbol{\mathcal{W}}^*, \boldsymbol{\xi}^* \rangle = 0$. Further, we assume that for anisotropic constraints, \textbf{meaningful} task intent naturally aligns with the principal geometric axes. For isotropic constraints like spherical contact, any intent direction within the subspace is valid.
\end{assumption}

To resolve the directional ambiguity and handle co-axial degeneracies, we construct the \textbf{Interaction Frame (IF) $\Sigma(p_I)$} by anchoring the spectral axes to the task intent. We pre-define a reference vector $\mathbf{v}_{\text{ref}}$ like the current end-effector $x$-axis to resolve ambiguities. Next, for the intended wrench and twist, they admit a well-defined screw-axis \textit{direction}~\cite{screw_theory}, represented as $\mathbf{n}_{\boldsymbol{\mathcal{W}}^*}$ and $\mathbf{n}_{\boldsymbol{\xi}^*}$, respectively. Notice that we only use the axis \textit{direction} for frame construction, and the axis location is irrelevant here. The basis vectors $\{ \mathbf{x}_{\Sigma}, \mathbf{y}_{\Sigma}, \mathbf{z}_{\Sigma} \}$ are derived via the following prioritized scheme:

\begin{itemize}
\item \textbf{Primary Axis Assignment}: The frame definition prioritizes the wrench intent. (1) If the wrench intent is non-negligible, it defines the \textbf{constraint axis}: $\mathbf{z}_{\Sigma} \triangleq \mathbf{n}_{\boldsymbol{\mathcal{W}}^*}$; (2) otherwise, the twist intent takes precedence to define the \textbf{motion axis}: $\mathbf{x}_{\Sigma} \triangleq \mathbf{n}_{\boldsymbol{\xi}^*}$.

\item \textbf{Secondary Axis with Fallback}: The remaining axis is determined by projecting the secondary intent onto the subspace orthogonal to the primary axis. If the secondary intent is \textit{degenerate}, \textit{negligible}, or \textit{collinear with the primary axis}, it fails to define a unique orthogonal direction. In such cases, the definition falls back to the reference vector $\mathbf{v}_{\text{ref}}$. For instance, if $\mathbf{z}_{\Sigma}$ is the primary axis, the motion axis $\mathbf{x}_{\Sigma}$ is computed as $\mathbf{p}/\left\| \mathbf{p} \right\|$, where vector $\mathbf{p}$ selects the most informative source:
$$
\mathbf{p}= 
\begin{cases} 
(\mathbf{I} - \mathbf{z}_{\Sigma}\mathbf{z}_{\Sigma}^\top)\mathbf{n}_{\boldsymbol{\xi}^*} & \text{if } \|\boldsymbol{\xi}^*\| > \epsilon_\xi \text{ and } |\mathbf{z}_{\Sigma} \cdot \mathbf{n}_{\boldsymbol{\xi}^*}| < 1 - \epsilon_\parallel, \\
(\mathbf{I} - \mathbf{z}_{\Sigma}\mathbf{z}_{\Sigma}^\top)\mathbf{v}_{\text{ref}} & \text{otherwise (fallback)}.
\end{cases}
$$
where $\epsilon_\xi$ and $\epsilon_\parallel$ are thresholds for twist and collinearity.

\item \textbf{Completion}: The final axis $\mathbf{y}_{\Sigma}$ is determined via the right-hand rule to complete the orthonormal basis.

\end{itemize}

Thus, we show that the constructed IF can also diagonalizes the environmental stiffness:

\begin{proposition}[Intent Alignment]\label{prop:alignment}Under Assumption~\ref{ass:intent}, the Interaction Frame $\Sigma(p_I)$ is spatially co-axial with the Principal Geometric Frame $\Sigma_{\text{geo}}$. Consequently, the matrix $\mathbf{\Phi}_{\Sigma} = \operatorname{diag}(\mathbf{R}_{\Sigma}, \mathbf{R}_{\Sigma})$ formed by the basis of $\Sigma(p_I)$ diagonalizes the environmental stiffness $\mathbf{K}_{\mathrm{env}}$.
\end{proposition}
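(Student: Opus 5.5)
We reduce the claim to showing that each basis vector of $\Sigma(p_I)$ coincides, up to sign, with one of the principal axes $\{\mathbf t_1,\mathbf t_2,\mathbf n\}$ of $\Sigma_{\text{geo}}$. Once this holds, $\mathbf R_\Sigma = \mathbf R\,\mathbf P\,\mathbf D$, where $\mathbf R$ is the rotation from Theorem~\ref{thm:diag}, $\mathbf P$ is a permutation and $\mathbf D$ is a diagonal sign matrix. Then
\[
\mathbf\Phi_\Sigma^\top \mathbf K_{\mathrm{env}}\mathbf\Phi_\Sigma=\operatorname{diag}(\mathbf P\mathbf D,\mathbf P\mathbf D)^\top\big(\mathbf\Phi^\top\mathbf K_{\mathrm{env}}\mathbf\Phi\big)\operatorname{diag}(\mathbf P\mathbf D,\mathbf P\mathbf D).
\]
The middle factor is diagonal by Theorem~\ref{thm:diag}, and conjugating a diagonal matrix by a signed permutation keeps it diagonal. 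The diagonal entries are the spectrum listed in Corollary~\ref{cor:subspace}. So the work is the axis-matching step.

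\textbf{Primary axis.} Suppose the wrench intent is non-negligible. By Assumption~\ref{ass:intent}, $\boldsymbol{\mathcal W}^*\in\mathcal U$. Corollary~\ref{cor:subspace} says $\mathcal U$ is spanned by eigenvectors aligned with $\Sigma_{\text{geo}}$. For anisotropic constraints, the alignment clause of Assumption~\ref{ass:intent} makes the screw-axis direction $\mathbf n_{\boldsymbol{\mathcal W}^*}$ a principal axis, so $\mathbf z_\Sigma\in\{\pm\mathbf t_1,\pm\mathbf t_2,\pm\mathbf n\}$. Because $\mathbf\Phi$ has the block form $\operatorname{diag}(\mathbf R,\mathbf R)$, a principal direction of the force part or of the moment part is the same spatial axis. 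This is what lets us pass from a 6D eigenvector to a 3D axis direction. The twist-primary case ($\mathbf x_\Sigma=\mathbf n_{\boldsymbol\xi^*}$, with $\boldsymbol\xi^*\in\mathcal T$) is handled the same way.

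\textbf{Secondary axis.} In the non-fallback branch, $\mathbf n_{\boldsymbol\xi^*}$ is itself a principal axis by the same argument. Since $\boldsymbol{\mathcal W}^*\perp\boldsymbol\xi^*$ and the axis is not collinear with $\mathbf z_\Sigma$, it is a different principal axis. Distinct principal axes are orthogonal, so it already lies in the plane orthogonal to $\mathbf z_\Sigma$; the projection leaves it unchanged, and $\mathbf x_\Sigma$ is a principal axis. The completion $\mathbf y_\Sigma=\mathbf z_\Sigma\times\mathbf x_\Sigma$ is then the third principal axis up to sign.

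\textbf{Main obstacle: the fallback branch.} Here $\mathbf x_\Sigma$ comes from projecting $\mathbf v_{\text{ref}}$, which is in general not a principal axis. The fallback fires in three situations, and the plan is to show each is harmless.

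\begin{itemize}
\item \emph{Negligible motion, or $\mathbf n_{\boldsymbol\xi^*}$ collinear with $\mathbf z_\Sigma$ (e.g.\ screw driving).} Appeal to the isotropy clause of Assumption~\ref{ass:intent}: the intent is only meaningful as stated if the constraint is transversely isotropic about $\mathbf z_\Sigma$, i.e.\ $k_{xx}=k_{yy}$ and $\kappa_{xx}=\kappa_{yy}$. Then $\mathbf K_{\mathrm{env}}$ restricted to the plane orthogonal to $\mathbf z_\Sigma$ is a multiple of the identity, in both its translational and rotational blocks. Hence any orthonormal pair in that plane diagonalizes it, including the projected $\mathbf v_{\text{ref}}$.
\item \emph{Anisotropic constraint with no informative motion.} Here I would argue that "co-axial" is meant up to this eigenspace degeneracy, or else require that $\mathbf v_{\text{ref}}$ be chosen along a principal axis.
\end{itemize}

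I expect making this degenerate case rigorous to be the delicate part. It relies on reading "isotropic" as equality of the paired eigenvalues, which Theorem~\ref{thm:diag} supplies through its explicit integrals for $\mathcal D$ with circular symmetry.
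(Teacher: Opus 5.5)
Your route is the paper's: its proof is only a sketch. It asserts that Assumption~\ref{ass:intent} places the intents on principal axes, and that wherever an eigenspace is multi-dimensional, rotational symmetry makes any intent-selected direction a valid eigenvector. Three of your steps make that sketch precise: the signed-permutation reduction, the remark that the block form $\operatorname{diag}(\mathbf R,\mathbf R)$ lets a 6D eigenvector be read as a 3D axis, and the requirement that isotropy hold in \emph{both} blocks ($k_{xx}=k_{yy}$ and $\kappa_{xx}=\kappa_{yy}$). The sketch itself never mentions $\mathbf v_{\text{ref}}$. One correction in your secondary-axis step: $\langle\boldsymbol{\mathcal W}^*,\boldsymbol\xi^*\rangle=0$ is a reciprocity (power) condition. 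It does not make the 3D screw-axis directions orthogonal; screw driving has $\mathbf n_{\boldsymbol{\mathcal W}^*}=\mathbf n_{\boldsymbol\xi^*}$ at zero power. What you actually use is that non-collinear principal directions are orthogonal, or, inside a degenerate eigenplane, that the projection stays in that eigenplane.

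The proof is still incomplete exactly where you flag it, and your first remedy cannot work. For a transversely anisotropic contact there is no degeneracy to quotient by, and the conclusion actually fails under Assumption~\ref{ass:intent} as literally written. Take a non-circular elliptical patch with uniform $k_n$, so that $\kappa_{xx}=\int_{\mathcal D}k_n y^2\neq\int_{\mathcal D}k_n x^2=\kappa_{yy}$. Add static pressing along $\mathbf n$, and let $\mathbf v_{\text{ref}}$ sit at $45^\circ$ to $\mathbf t_1$. The fallback then gives $\mathbf x_\Sigma=(\mathbf t_1+\mathbf t_2)/\sqrt2$ and $\mathbf y_\Sigma=(\mathbf t_2-\mathbf t_1)/\sqrt2$. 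The $(x,y)$ entry of the rotational block of $\mathbf{\Phi}_\Sigma^\top\mathbf K_{\mathrm{env}}\mathbf{\Phi}_\Sigma$ is $(\kappa_{yy}-\kappa_{xx})/2\neq0$.

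So an extra hypothesis is unavoidable. The one the paper relies on appears not in the proof but in the main text, immediately after the definition of $\Psi$. There, whenever the projected motion intent degenerates (static holding, screw driving), the interaction is declared ``effectively transversely isotropic in the constraint plane.'' Read the isotropy clause of Assumption~\ref{ass:intent} as covering every situation in which the fallback fires. Then your first bullet is the entire fallback argument, and your second bullet becomes an excluded case rather than an open one. Requiring $\mathbf v_{\text{ref}}$ to be a principal direction would also work, but it is not what the paper assumes.

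There is a second hole of the same kind. In the twist-primary branch the wrench intent is negligible by definition, so the secondary axis there is \emph{always} produced by the fallback. ``Handled the same way'' therefore covers only $\mathbf x_\Sigma$. For the other two axes you need either $\mathbf K_{\mathrm{env}}=\mathbf 0$ (no contact, so diagonality is trivial) or the same transverse-isotropy hypothesis about $\mathbf x_\Sigma$.
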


\noindent\textbf{\textit{Proof (Sketch)}.} Under the assumption, the task intent naturally aligns with the environment's structural topology: wrenches are exerted against constraints ($\mathcal{U}$), and motions occur along admissible freedoms ($\mathcal{T}$). In scenarios where these subspaces are multi-dimensional (\textit{e.g.}, the isotropic radial constraint in peg-in-hole assembly), the stiffness distribution exhibits rotational symmetry, implying that any direction selected by the intent within that subspace constitutes a mathematically valid principal axis. Consequently, the IF constructed from the task intent does not impose an arbitrary structure; rather, it instantiates a specific, physically valid eigenbasis that simultaneously diagonalizes the environmental stiffness matrix and resolves the sign and gauge ambiguities inherent in purely geometric analysis.\hfill\qed
\vspace{0.3cm}

\begin{theorem}[Co-axial Twist to an Applied Wrench]\label{thm:force-to-motion}
Let $\Sigma(p_I)=\{\mathbf e_1,\dots,\mathbf e_6\}$ be the Interaction Frame (IF) constructed above, and let
$\mathbf{\Phi}_\Sigma=\operatorname{diag}(\mathbf R_\Sigma,\mathbf R_\Sigma)$ be the corresponding spatial rotation.
In the IF, the elastic environment stiffness is diagonal,
$\mathbf K_{\mathrm{env}}^\Sigma=\mathbf{\Phi}_\Sigma^\top \mathbf K_{\mathrm{env}} \mathbf{\Phi}_\Sigma
=\operatorname{diag}(k_1,\dots,k_6)$.
Then, for any applied wrench $\boldsymbol{\mathcal W}^*$ aligned with a constraint-axis $\mathbf e_j\in\mathcal U$,
the induced elastic compliance is co-axial: the resulting pose perturbation $\delta\boldsymbol{\chi}_c$ and the
corresponding parasitic twist $\boldsymbol{\xi}_c$ are strictly collinear with $\mathbf e_j$.
\end{theorem}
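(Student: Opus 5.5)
The argument works in IF coordinates. Write the elastic response there, invert the diagonal stiffness restricted to the constraint subspace, and then differentiate in time to pass from the static perturbation to the twist.

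\textbf{Step 1: the static perturbation.} By Proposition~\ref{prop:alignment}, $\mathbf K_{\mathrm{env}}^\Sigma=\mathbf\Phi_\Sigma^\top\mathbf K_{\mathrm{env}}\mathbf\Phi_\Sigma=\operatorname{diag}(k_1,\dots,k_6)$. The applied wrench $\boldsymbol{\mathcal W}^*$ must be balanced by the elastic wrench. From the definition $\mathbf K_{\mathrm{env}}=-\partial\boldsymbol{\mathcal W}_{\mathrm{el}}/\partial(\delta\boldsymbol\chi)$, the linearized equilibrium is
\begin{equation*}
\mathbf K_{\mathrm{env}}\,\delta\boldsymbol\chi_c=\boldsymbol{\mathcal W}^*.
\end{equation*}
We take the wrench and twist to be expressed at the same point $p_I$, so the change of frame is a pure rotation by $\mathbf\Phi_\Sigma$ with no adjoint translation terms. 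In IF coordinates, with $\boldsymbol{\mathcal W}^{*\Sigma}=\mathbf\Phi_\Sigma^\top\boldsymbol{\mathcal W}^*$ and $\delta\boldsymbol\chi_c^\Sigma=\mathbf\Phi_\Sigma^\top\delta\boldsymbol\chi_c$, the equation becomes
\begin{equation*}
k_i\,(\delta\boldsymbol\chi_c^\Sigma)_i=(\boldsymbol{\mathcal W}^{*\Sigma})_i,\qquad i=1,\dots,6.
\end{equation*}
Since $\boldsymbol{\mathcal W}^*=w\,\mathbf e_j$, the right-hand side is $w\,\delta_{ij}$.

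\textbf{Step 2: the delicate point, singular directions.} The full stiffness is not invertible, because the eigenvalues on $\mathcal T$ are $k_i\approx 0$. For $i=j$ we have $\mathbf e_j\in\mathcal U$, so $k_j>\epsilon>0$ and $(\delta\boldsymbol\chi_c^\Sigma)_j=w/k_j$. For $i\neq j$ with $k_i>0$, the equation forces $(\delta\boldsymbol\chi_c^\Sigma)_i=0$. For $i\neq j$ with $k_i\approx0$, the equation $0\cdot(\delta\boldsymbol\chi_c^\Sigma)_i=0$ leaves the component undetermined. I would resolve this by the definition of the parasitic twist: it is the compliance response caused by $\boldsymbol{\mathcal W}^*$, i.e.\ the minimum-norm (pseudo-inverse) solution $\delta\boldsymbol\chi_c=\mathbf K_{\mathrm{env}}^{+}\boldsymbol{\mathcal W}^*$. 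Any motion along $\mathcal T$ is attributed to the intent $\boldsymbol\xi^*$, not to $\boldsymbol\xi_c$, consistent with the decomposition in \S\ref{sec:recovering_if}. Equivalently, one can restrict the equilibrium equation to $\mathcal U$, where $\mathbf K_{\mathrm{env}}$ is invertible. Either way,
\begin{equation*}
\delta\boldsymbol\chi_c=\frac{w}{k_j}\,\mathbf e_j,
\end{equation*}
which is collinear with $\mathbf e_j$.

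\textbf{Step 3: the twist.} The parasitic twist is the rate of change of the perturbation under a quasi-static change of wrench magnitude, $\boldsymbol\xi_c=\tfrac{d}{dt}\delta\boldsymbol\chi_c$. This holds because $\delta\boldsymbol\chi_c$ lives in the tangent space $\mathfrak{se}(3)$ at the identity, so to first order its derivative is the body twist. Since the frame $\Sigma(p_I)$ and the axis $\mathbf e_j$ are fixed over the infinitesimal interval,
\begin{equation*}
\boldsymbol\xi_c=\frac{\dot w}{k_j}\,\mathbf e_j,
\end{equation*}
which is again collinear with $\mathbf e_j$. This also gives $\boldsymbol{\mathcal W}^{*\top}\boldsymbol\xi_c=w\dot w/k_j$, the structural residual in $\mathcal U$, matching the interpretation in \S\ref{sec:recovering_if}.
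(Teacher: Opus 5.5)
Your proof is correct and follows the paper's route. You work in IF coordinates where $\mathbf K_{\mathrm{env}}$ is diagonal, solve the scalar equation $k_j\,\delta\chi_j=\beta$ along the constraint axis to get $\delta\boldsymbol\chi_c=(\beta/k_j)\mathbf e_j$, and then pass collinearity on to $\boldsymbol\xi_c$. You are more careful than the paper in two places: you explicitly fix the undetermined components along zero-stiffness directions in $\mathcal T$ by taking the minimum-norm (pseudo-inverse) solution, where the paper silently exhibits the particular solution, and you obtain $\boldsymbol\xi_c\parallel\mathbf e_j$ by time-differentiating in a fixed frame, where the paper only assumes the control loop preserves directionality.
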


\noindent\textbf{\textit{Proof}.} 
By Proposition~\ref{prop:alignment}, the basis vectors of the Interaction Frame align with the principal axes of $\Sigma_{\text{geo}}$. Thus, invoking Theorem~\ref{thm:diag}, the stiffness matrix $\mathbf{K}_{\mathrm{env}}$ is diagonal in this frame:
\begin{equation}
\mathbf K_{\mathrm{env}}^\Sigma=\operatorname{diag}(k_1,\dots,k_6),
\end{equation}
where $k_i$ is the stiffness eigenvalue associated with $\mathbf e_i$.

For small perturbations, the elastic constitutive relation is
\begin{equation}
\boldsymbol{\mathcal W}_{\mathrm{el}}=\mathbf K_{\mathrm{env}}^\Sigma\,\delta\boldsymbol{\chi}.
\end{equation}
Consider an applied wrench aligned with the $j$-th constraint basis vector:
$\boldsymbol{\mathcal W}^*=\beta\,\mathbf e_j$ with $\mathbf e_j\in\mathcal U$ and $\beta\in\mathbb R$.
We seek the elastic compliance $\delta\boldsymbol{\chi}_c$ satisfying
$\mathbf K_{\mathrm{env}}^\Sigma\,\delta\boldsymbol{\chi}_c=\boldsymbol{\mathcal W}^*$.
Since $\mathbf e_j\in\mathcal U$, by definition its associated stiffness satisfies $k_j>0$, hence the scalar equation
$k_j\,\delta\chi_j=\beta$ admits the unique solution $\delta\chi_j=\beta/k_j$.
Therefore,
\begin{equation}
\delta\boldsymbol{\chi}_c=\left(\frac{\beta}{k_j}\right)\mathbf e_j,
\end{equation}
and substituting back verifies
\begin{equation}
\mathbf K_{\mathrm{env}}^\Sigma\,\delta\boldsymbol{\chi}_c
=\left(\frac{\beta}{k_j}\right)\mathbf K_{\mathrm{env}}^\Sigma \mathbf e_j
=\left(\frac{\beta}{k_j}\right)(k_j\mathbf e_j)
=\beta\mathbf e_j
=\boldsymbol{\mathcal W}^*.
\end{equation}
Thus, $\delta\boldsymbol{\chi}_c$ is strictly collinear with the applied wrench. Assuming the control loop or integration preserves directionality, the resulting twist $\boldsymbol{\xi}_c$ aligns with $\delta \boldsymbol{\chi}_c$, thus $\boldsymbol{\xi}_c\parallel \mathbf{e}_j$.\hfill\qed
\vspace{0.3cm}

\begin{proposition}[Co-axial Wrench to an Applied Twist]\label{prop:motion-to-force}
In the IF $\Sigma(p_I)$, consider an intended twist $\boldsymbol{\xi}^*=\alpha\,\mathbf e_i$ along a principal axis $\mathbf e_i\in\mathcal T$. 
If the local dissipative mechanism is isotropic and the contact patch (or its macroscopic stiffness/friction distribution) is reflectionally symmetric with respect to the plane spanned by $\mathbf e_i$ and the contact normal, then the induced dissipative wrench must be co-axial:
\begin{equation}
\boldsymbol{\mathcal W}_c = \boldsymbol{\mathcal W}_{\mathrm{dis}}(\boldsymbol{\xi}^*) \parallel \boldsymbol{\xi}^*,
\qquad 
\boldsymbol{\mathcal W}_c^\top \boldsymbol{\xi}^* \le 0 .
\end{equation}
\end{proposition}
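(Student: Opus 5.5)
The plan is to mirror the parity argument of Theorem~\ref{thm:diag}, applied now to the dissipative wrench rather than the elastic one. I will model the dissipative interaction as a distributed traction over the contact patch $\mathcal D$, expressed in the IF coordinates $(x,y)$ with normal $z$. I will take $\mathbf e_i$ to be a tangential translational direction, say $\mathbf t_1$ (the $x$-axis), and treat the rotational case, e.g.\ $\boldsymbol\omega$ about the normal, analogously.

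At each point $(x,y)\in\mathcal D$, the local sliding velocity induced by $\boldsymbol\xi^*=\alpha\mathbf e_i$ is $\mathbf u(x,y)=\alpha\mathbf t_1$. For the rotational case it would be $\boldsymbol\omega\times\mathbf r$ instead. Isotropy of the local dissipative mechanism means the traction has the form $\mathbf w(x,y)=-c(x,y,\|\mathbf u\|)\,\mathbf u/\|\mathbf u\|$ with $c\ge 0$. This covers both Coulomb friction, where $c=\mu p_n$, and viscous friction, where $c\propto\|\mathbf u\|$. The net dissipative wrench is then
\[
\boldsymbol{\mathcal W}_c=\int_{\mathcal D}\begin{bmatrix}\mathbf w\\ \mathbf r\times\mathbf w\end{bmatrix}\mathrm dx\,\mathrm dy .
\]

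The key steps, in order, are as follows.
\begin{enumerate}
\item \emph{Force part.} For pure translation along $\mathbf t_1$, every pointwise traction is parallel to $\mathbf t_1$, so the net force is parallel to $\mathbf e_i$.
\item \emph{Moment part.} Write $\mathbf r=(x,y,z(x,y))$, using the quadratic Hertzian profile as in Theorem~\ref{thm:diag}. The moment $\mathbf r\times\mathbf t_1$ has components proportional to $z$ (the $y$-moment) and to $-y$ (the $z$-moment). The reflection $y\mapsto -y$ across the plane spanned by $\mathbf e_i$ and the normal leaves $c$ and $z$ even and makes $y$ odd, so the $z$-moment vanishes.
\item \emph{Out-of-plane moment.} The $y$-moment term $\int c\,z$ does not vanish by parity, because it is even. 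I will remove it as Theorem~\ref{thm:diag} does, by expressing the wrench at the stiffness centroid and invoking the same macroscopic-symmetry assumption. If that is not enough, I will treat it as a higher-order term in the curvature, $z=O(x^2+y^2)$, which is negligible for small patches.
\item \emph{Rotational intent.} For rotation about the normal, $\mathbf u$ is tangential and circulating. Symmetry under both reflections cancels the net force and the tangential moments, leaving only the $z$-moment.
\item \emph{Sign.} Pointwise, $\mathbf w\cdot\mathbf u=-c\|\mathbf u\|\le 0$. Integrating gives $\boldsymbol{\mathcal W}_c^\top\boldsymbol\xi^*=\int\mathbf w\cdot\mathbf u\le 0$, since power is additive over the patch.
\end{enumerate}

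I expect the main obstacle to be step~3, the out-of-plane moment produced by friction acting at height $z$. The stated hypothesis gives only a single reflection symmetry, and that kills only the odd terms. I will first try to shift the reference point along the normal so that the moment about it vanishes, using the centroid convention adopted in Theorem~\ref{thm:diag}. I will note explicitly that this relies on the wrench being expressed at that point. Otherwise I will state the collinearity as holding to leading order in patch size.
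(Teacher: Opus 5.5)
Your argument is sound once the convention you flag in step 3 is made explicit, but it takes a different route from the paper.

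\textbf{The paper's route.} The paper argues purely by symmetry selection. The reflection across the $\mathbf e_i$--$\mathbf n$ plane fixes $\boldsymbol\xi^*$, so no lateral direction can be preferred and $\boldsymbol{\mathcal W}_c$ must lie along $\mathbf e_i$. The sign then follows because dissipation opposes motion. This is short and independent of any friction model, but it proves less than it claims. Under $y\mapsto -y$, the polar components $f_x,f_z$ and the pseudovector component $m_y$ are all invariant. Symmetry alone therefore forces only $f_y=m_x=m_z=0$.

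\textbf{Your route.} Your distributed-traction computation shows what removes the remaining components. First, $f_z=0$ because isotropic friction is tangential, which is a modelling assumption rather than a symmetry. Second, $m_y=-\int_{\mathcal D}c\,z$ genuinely survives parity, exactly as you anticipate. Your identity $\boldsymbol{\mathcal W}_c^\top\boldsymbol\xi^*=\int_{\mathcal D}\mathbf w\cdot\mathbf u\le 0$ is also a more concrete justification of the sign than the paper's one-line appeal.

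\textbf{Refinements.} Three points are worth fixing.

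First, the reference point that kills $m_y$ sits at the friction-weighted height $h=\int c\,z/\int c$ along $\mathbf n$. In general this is not the stiffness centroid of Theorem~\ref{thm:diag}; they agree, e.g., when $c\propto k_{t_1}$. So state it as a separate convention rather than inheriting it. Alternatively, note that $\mathbf f\cdot\mathbf m=0$. The wrench is then a zero-pitch screw whose axis is parallel to $\mathbf e_i$ and merely offset by $h$ along $\mathbf n$. That gives exact co-axiality in the axis-direction sense the paper itself adopts when constructing the IF (``the axis location is irrelevant'').

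Second, your spin-about-$\mathbf n$ case needs both reflections, or central symmetry. That is more than the stated hypothesis provides, since the plane spanned by $\mathbf e_i$ and $\mathbf n$ degenerates there, but it is available in the Hertzian setting.

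Third, do not extend ``analogously'' to rotations about a tangential axis. There the reflection reverses the pseudovector twist, and parity no longer excludes a lateral force along the other tangent.
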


\noindent\textbf{\textit{Proof (Sketch)}.} Under the stated symmetries, there is no distinguished lateral direction orthogonal to $\mathbf e_i$ that can bias the dissipative response. 
Hence any lateral component of $\boldsymbol{\mathcal W}_c$ would break reflectional symmetry and is therefore forbidden; the only admissible direction is the axis of motion $\mathbf e_i$. 
Moreover, dissipation opposes motion, yielding $\boldsymbol{\mathcal W}_c^\top \boldsymbol{\xi}^*\le 0$.
\hfill\qed \vspace{0.3cm}

Therefore, the intended twist $\boldsymbol{\xi}^*$ may induce a parasitic wrench $\boldsymbol{\mathcal{W}}_c$ along its direction (Proposition~\ref{prop:motion-to-force}), and the intended wrench $\boldsymbol{\mathcal W}^*$ may induce a parasitic twist $\boldsymbol{\xi}_c$ along its direction (Theorem~\ref{thm:force-to-motion}). Since $\boldsymbol{\mathcal{W}}_c \parallel \boldsymbol{\xi}^* \perp \boldsymbol{\mathcal{W}}^* \parallel \boldsymbol{\xi}_c$,
\begin{equation}
    \boldsymbol{P} = (\boldsymbol{\mathcal{W}}^* + \boldsymbol{\mathcal{W}}_c)^\top (\boldsymbol{\xi}^* + \boldsymbol{\xi}_c) = \boldsymbol{\mathcal{W}}_c^\top \boldsymbol{\xi}^* + \boldsymbol{\mathcal{W}}^{*\top} \boldsymbol{\xi}_c,
\end{equation}
which establishes the foundation of the approximation strategy described in the paper.

\section{Interaction Structure Discovery}

\begin{figure}[b]
    \centering
    \includegraphics[width=\linewidth]{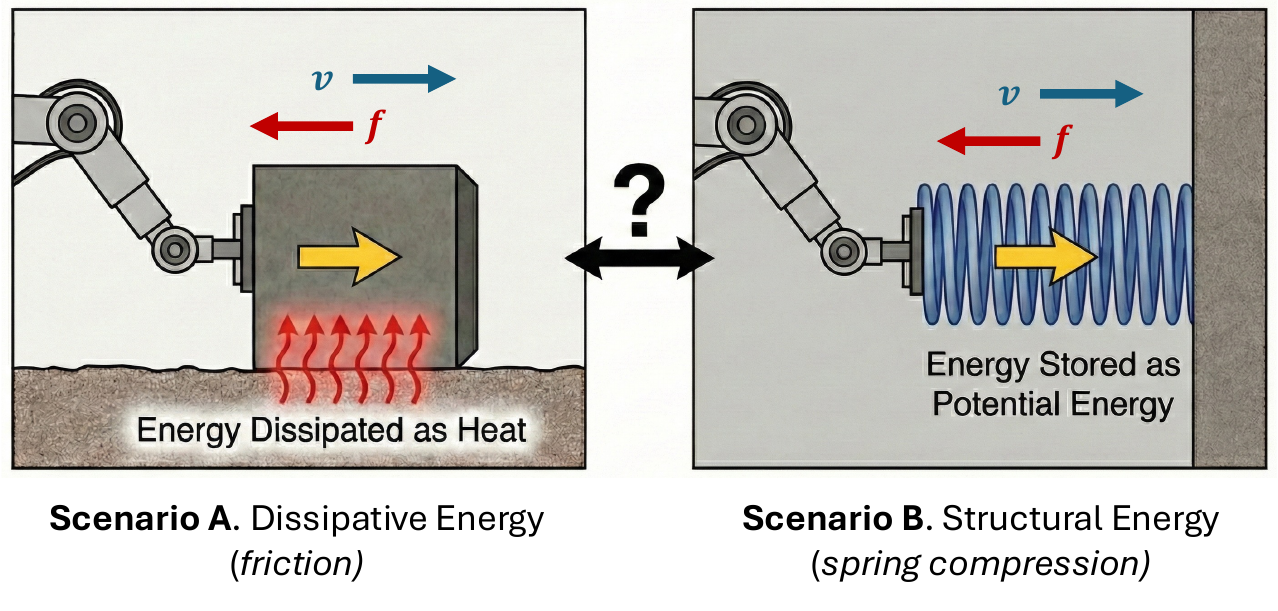}
    \caption{\textbf{Sensor Signal Ambiguity.} Similar sensor signals (wrench and twist) can lead to different interaction types and structures.}
    \label{fig:high-level} \vspace{-0.3cm}
\end{figure}

\begin{figure*}
    \centering
    \includegraphics[width=\linewidth]{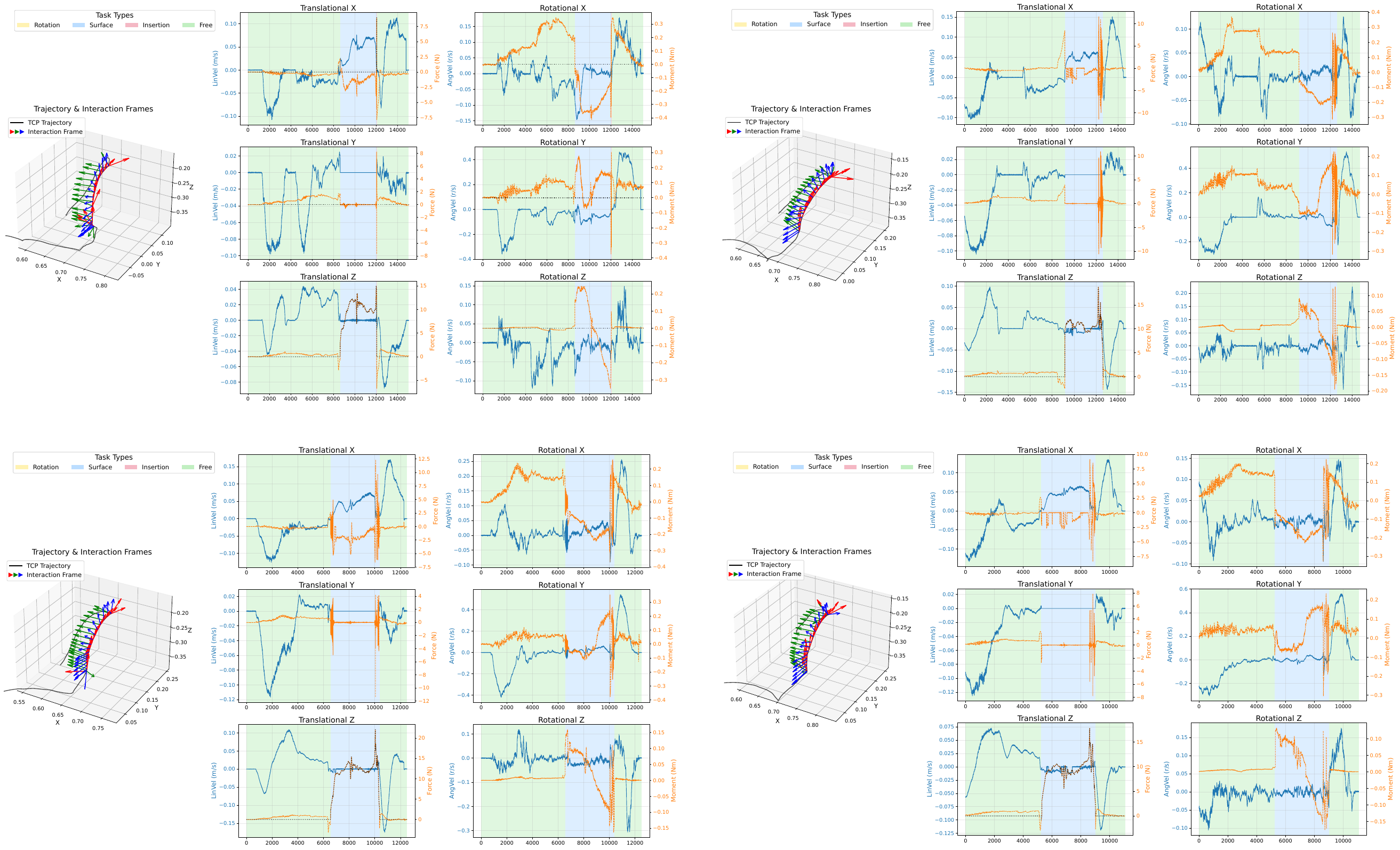}
    \caption{\textbf{Visualization of the Interaction Frame and the Task Mode on the \textit{Push and Flip} Task.}}
    \label{fig:vis-flip}
\end{figure*}

\begin{figure*}
    \centering
    \includegraphics[width=\linewidth]{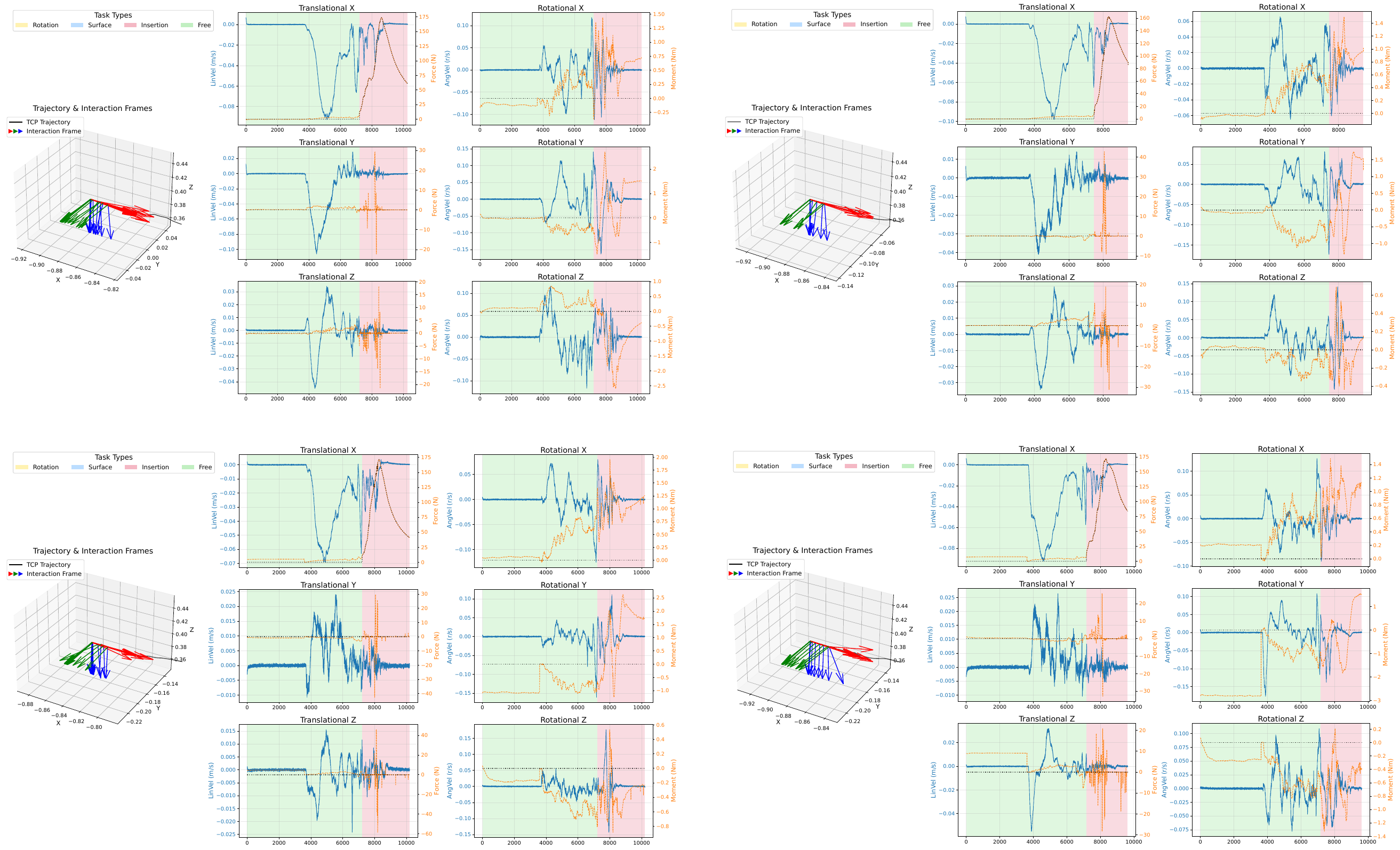}
    \caption{\textbf{Visualization of the Interaction Frame and the Task Mode on the \textit{Plug in EV Charger} Task.}}
    \label{fig:vis-charger}
\end{figure*}

\begin{figure*}
    \centering
    \includegraphics[width=\linewidth]{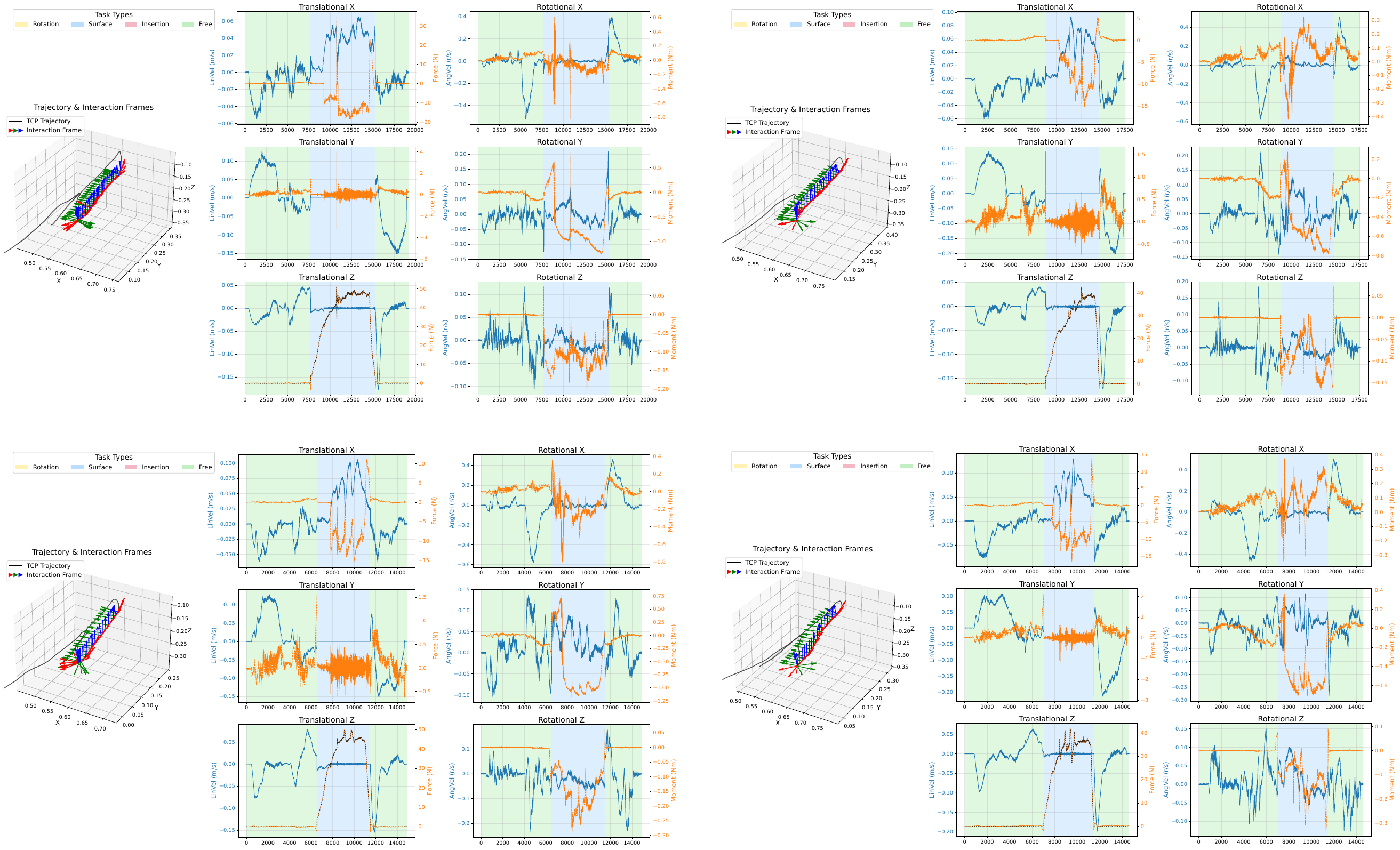}
    \caption{\textbf{Visualization of the Interaction Frame and the Task Mode on the \textit{Scrape off Sticker} Task.}}
    \label{fig:vis-scrape}\vspace{-0.3cm}
\end{figure*}

\subsection{High-Level Knowledge}

To distinguish between dissipative and structural residuals as the dominant power source, our method leverages high-level knowledge. Addressing potential doubts regarding the necessity of this prior information, we posit that twist and wrench signals are inherently ambiguous and cannot, on their own, reveal the nature of the power source.

\textbf{Justification.} From the perspective of sensor observation, the instantaneous power input $P = \boldsymbol{\mathcal W}^\top \boldsymbol{\xi}$ represents the energy transfer from the robot to the environment. However, this scalar value does not reveal the destination of the energy.
For instance, as shown in Fig.~\ref{fig:high-level}, a robot pushing a heavy block against friction (dissipative) and compressing a stiff spring (structural) can exhibit identical force-velocity profiles and positive work. Without high-level knowledge (e.g., object properties or task semantics) to characterize the environment's admittance, the raw signals cannot distinguish whether the energy is being dissipated as heat or stored as potential energy.

\begin{figure}[t]
    \centering
    \includegraphics[width=\linewidth]{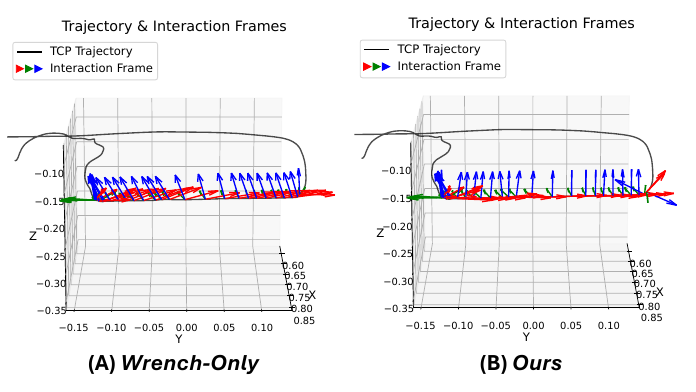}
    \caption{\textbf{Visualization of Different IF Recovery Methods on the \textit{Scrape Off Sticker} Task.} Wrench-only IF recovery methods couple friction directions, whereas our IF recovery method better aligns with the ground-truth (the world $z$-axis).}
    \label{fig:vis-cmp-scrape}\vspace{-0.4cm}
\end{figure}

\subsection{Visualization and Analysis}

To qualitatively validate the physical consistency of our approach, we visualize the recovered Interaction Frames (IF) across multiple expert demonstrations for three contact-rich tasks: \textit{\textbf{Push and Flip}} (Fig.~\ref{fig:vis-flip}), \textit{\textbf{Plug in EV Charger}} (Fig.~\ref{fig:vis-charger}), and \textit{\textbf{Scrape off Sticker}} (Fig.~\ref{fig:vis-scrape}). As illustrated, our IF recovery method consistently aligns the frame axes with the underlying task geometry across varying demonstrations, verifying that the recovered spectral axes are topologically stable.

A key advantage of our proposed method is its adaptivity to different types of task. In the \textit{Scrape off the Sticker} task (Fig.~\ref{fig:vis-cmp-scrape}), we explicitly compare our IF recovery method against a baseline constructed directly from raw interaction wrenches~\cite{forcemimic}, where the $z$-axis is aligned with the total measured force vector. In this task, significant tangential friction is generated during the scraping motion. For the wrench-based baseline, this friction acts as a parasitic wrench, causing the total force vector to deviate from the surface normal. Consequently, the baseline recovers a tilted frame that is coupled with the motion direction rather than the surface geometry, leading to substantially worse policy learning results.

\section{Force Policy}

In this section, we provide the detailed architecture and implementation specifications of our policy framework, comprising the global vision policy $\Pi_\text{global}$ and the local force policy $\Pi_\text{local}$.

\textbf{Global Vision Policy ($\Pi_\text{global}$).} We instantiate the global policy using RISE-2~\cite{rise2}. RISE-2 acts as a generalizable trajectory generator operating on global 3D point cloud observations. For our specific pipeline, we first train RISE-2 following its official implementation and freeze it during local force policy training. Let $I_{t'}$ denote the global observation at a low-frequency time step $t'$. The global policy processes this input to generate a latent action embedding, which is typically used to condition its internal diffusion head. In our framework, we intercept this embedding to serve as the global visual feature, denoted as $\phi(I_{t'}) \in \mathbb{R}^{512}$. This feature encapsulates high-level intent and geometry, guiding the local policy.

\textbf{Local Force Policy ($\Pi_\text{local}$).} The local policy $\Pi_\text{local}$ operates at a high frequency to handle contact-rich interactions. It consists of a multi-modal encoder, a feature fusion module, and a dual-head action decoder. The input to $\Pi_\text{local}$ includes a wrist-mounted camera image $I_t^{(w)} \in \mathbb{R}^{H \times W \times 3}$ and a history of end-effector poses $s_t \in \mathbb{R}^{T \times 9}$ (represented in continuous rotation~\cite{rot6d} format) and wrenches $\mathcal{W}_t \in \mathbb{R}^{T_o\times 6}$.

\begin{itemize}
\item \textbf{Vision Encoder.} We process the wrist image $I_t$ using a lightweight ResNet-18~\cite{resnet} backbone. To integrate the global context, we employ Feature-wise Linear Modulation (FiLM)~\cite{film}. The global feature $\phi(I_{t'})$ is projected to generate scale $\gamma$ and shift $\beta$ parameters, which modulate the intermediate feature maps of the ResNet:
$$
\text{FiLM}(x_i \mid \phi(I_{t'})) = \gamma_i(\phi(I_{t'})) \cdot x_i + \beta_i(\phi(I_{t'})),
$$
where $x_i$ represents the feature map at the $i$-th block. The final spatial features are pooled and projected to a visual embedding $z_{\text{vis}} \in \mathbb{R}^{128}$.

\item \textbf{Proprioceptive Encoder:} The proprioceptive history, consisting of end-effector poses and wrenches, is encoded using a Gated Recurrent Unit (GRU)~\cite{gru}. To incorporate the global context, we apply FiLM conditioning~\cite{film} to the linear projections at both the input and output of the GRU. Specifically, the global feature $\phi(I_{t'})$ modulates the proprioceptive features before they enter the recurrent unit and again after they exit. The final hidden state serves as the proprioceptive embedding $z_{\text{prop}} \in \mathbb{R}^{128}$.

\item \textbf{Adaptive Gated Fusion.} To effectively integrate the local sensory feedback with the high-level global intent, we employ a tri-modal adaptive gating mechanism. This allows the policy to dynamically weigh the importance of wrist vision, proprioception, and global context depending on the interaction phase. We first project the global feature $\phi(I_{t'})$ to the local embedding dimension via a linear layer $W_p$. The fusion weights and the final embedding are computed as:
$$\begin{aligned}
z_{\text{global}} &= W_p \phi(I_{t'}), \\
[\boldsymbol{\alpha}_1, \boldsymbol{\alpha}_2, \boldsymbol{\alpha}_3] &= \text{softmax}\left( W_g [z_{\text{vis}}; z_{\text{prop}}; z_{\text{global}}] + b_g \right), \\
z_{\text{fused}} &= \boldsymbol{\alpha}_1 \odot z_{\text{vis}} + \boldsymbol{\alpha}_2 \odot z_{\text{prop}} + \boldsymbol{\alpha}_3 \odot z_{\text{global}},
\end{aligned}$$
where the softmax is applied across the modality dimension, ensuring the gates sum to one.

\item \textbf{Interaction Structure Head.} Multi-layer perceptrons with three dense layers (sizes [128, 64, $D_{out}$]) are used to predict the task-oriented structural parameters. These head output: (1) the interaction frame pose relative to the end-effector pose $\Sigma_\text{IF} \in \mathbb{R}^9$ (also represented in continuous rotation format), (2) the reference wrench $\mathcal{W}_\text{ref} \in \mathbb{R}^6$, and (3) a binary selection mask $S\in\{0,1\}^6$ indicating the active control mode.

\item \textbf{Action Head.} We utilize a MIP head~\cite{mip} to generate precise control commands at high frequency. Conditioned on the fused feature $z_{\text{fused}}$, this head predicts a sequence of 50Hz robot actions in an action chunking~\cite{act} format to ensure temporal consistency. Specifically, at each time step $t$, the head outputs a chunk of $T_a$ future relative actions. At the execution frequency of 50Hz, the policy executes the first action of the chunk and discards the rest, following a receding horizon control strategy.
\end{itemize}

\textbf{Training.} 
The entire local force policy is trained end-to-end. We optimize the model using the AdamW optimizer with a learning rate of $10^{-4}$ and a cosine annealing schedule. The total loss function $\mathcal{L}$ is a weighted sum of the diffusion-based action loss and the auxiliary interaction structure losses:
$$
\mathcal{L} = \lambda_{\text{act}} \mathcal{L}_{\text{MIP}} + \lambda_{\text{frame}} \mathcal{L}_{\text{frame}} + \lambda_{\text{wrench}} \mathcal{L}_{\text{wrench}} + \lambda_{\text{mask}} \mathcal{L}_{\text{mask}},
$$
where $\mathcal{L}_{\text{MIP}}$ is the action prediction MSE loss for the MIP head, and $\mathcal{L}_{\text{frame}}$, $\mathcal{L}_{\text{wrench}}$, and $\mathcal{L}_{\text{mask}}$ correspond to the regression and classification losses for the interaction frame, reference wrench, and selection mask, respectively. We empirically set the weights $\lambda_\text{act} = \lambda_\text{frame} = \lambda_\text{wrench} = 1.0$ and $\lambda_\text{mask}=0.1$ to balance the task objectives.

Our local force policy can execute inference at 50Hz, and we let it output 50Hz control commands and directly takes over the controller if the router switches to fast.

\section{Dual-Policy Asynchronous Scheduler}

\subsection{Chunk Alignment with DTW}

Let $\mathbf{P} = \{p_0, p_1, \ldots, p_{N-1}\}$ denote the trajectory sequence predicted by the model, and $\mathbf{H} = \{h_{-M+1}, \ldots, h_0\}$ denote the robot's recently executed historical trajectory, where $h_0$ represents the current position. Our objective is to find the optimal starting index $k^*$ such that execution beginning from $p_{k^*}$ achieves a smooth transition with the current state.

We define the frame-wise cost function as:
\begin{equation}
\begin{aligned}
c(p_i, h_j) = & \; \; \; \;w_{\text{pos}} \|p_i^{\text{pos}} - h_j^{\text{pos}}\|\\ &+ w_{\text{ori}} d_{\text{ang}}(p_i^{\text{ori}}, h_j^{\text{ori}})\\ &+ w_\text{vel} \|\dot{p}_i - \dot{h}_j\|,
\end{aligned}
\end{equation}
where $w_{\text{pos}}$, $w_{\text{ori}}$, and $w_\text{vel}$ are the weights for position, orientation, and velocity, respectively, and $\dot{p}_i$ and $\dot{h}_j$ are velocities estimated via finite differences. We apply the Dynamic Time Warping (DTW) algorithm~\cite{dtw} to establish alignment between the beginning of the predicted trajectory and the end of the historical trajectory. By backtracking along the optimal path, we identify the optimal frame index $k^*$ corresponding to the current robot position $h_0$. Additionally, we compensate for the latency of the DTW computation itself:
\begin{equation}
k^* \leftarrow k^* + \left\lceil \frac{t_{\text{DTW}}}{\Delta t} \right\rceil.
\end{equation}
where $\Delta t$ denotes the scheduler execution period. In practice, the DTW procedure requires approximately 20ms to compute the optimal transition point, which corresponds to dropping an additional 1-2 steps at 50 Hz to compensate for DTW latency. Executing the trajectory from index $k^*$ ensures smooth continuity with the current state in both position and velocity, thereby avoiding abrupt transitions.

\subsection{Broader Impact on General Policies}

During experiments, we find that the asynchronous scheduler not only improves the smoothness of \textbf{\textit{Force Policy}}, but also enhances the performance of vision-only policies in contact-rich tasks through its waypoint dropout strategy and acceleration-continuous interpolation. In particular, the asynchronous scheduler leads to more stable contact behaviors for vision-only policies. We evaluate this effect using RISE-2~\cite{rise2} and $\pi_{0.5}$~\cite{pi05} as representative vision-only visuomotor and vision-language-action models, respectively. Their performance is compared against conventional sequential inference and execution with temporal ensemble~\cite{act}, as well as our asynchronous scheduler with DTW-based waypoint dropout during inference, on the \textbf{\textit{Push and Flip}} task.

\begin{table}[h]
\vspace{-0.1cm}
    \centering
    \begin{tabular}{ccrrr}
    \toprule
        \multirow{2}{*}{\textbf{Policy}} & \multirow{2}{*}{\textbf{w/ Scheduler?}} & \multicolumn{2}{c}{\textbf{Success Rate} $\uparrow$} & \multirow{2}{*}{\textbf{Avg.} $d$ (cm) $\downarrow$} \\ \cmidrule(lr){3-4}
        & & \multicolumn{1}{c}{push} & \multicolumn{1}{c}{flip} & \\
        \midrule
        \multirow{2}{*}{RISE-2~\cite{rise2}} & & \textbf{100.0}\% & 42.5\% & 0.86\\
         & \checkmark & \textbf{100.0}\% & \textbf{62.5}\% & \textbf{0.00} \\
        \midrule
        \multirow{2}{*}{$\pi_{0.5}$~\cite{pi05}} & & 80.0\% & 52.5\% & \textbf{0.00}\\
         & \checkmark & \textbf{95.0}\% & \textbf{77.5}\% & \textbf{0.00}\\
        \bottomrule
    \end{tabular}
    \caption{\textbf{Evaluation of Schedulers for Vision-Only Policies on the \textit{Push and Flip} Task.} Asynchronous scheduler improves the performance and make the contact more stable.}
    \label{tab:scheduler-eval}\vspace{-0.2cm}
\end{table}

The experimental results demonstrate a significant positive impact. Specifically, as shown in Table~\ref{tab:scheduler-eval}, the asynchronous scheduler consistently boosts the success rates and reduces control error across different policy architectures. For RISE-2, while the pushing success rate remains perfect, the flipping success rate sees a substantial improvement from 42.5\% to 62.5\%, and the average displacement error drops to zero. Similarly, for the VLA model $\pi_{0.5}$, applying our scheduler yields improvements in both sub-tasks, raising the flipping success rate to 77.5\%. 

We attribute these gains primarily to the \textbf{DTW-based waypoint dropout mechanism} in preventing contact loss. Vision-based policies often suffer from temporal inconsistencies between action chunks~\cite{sail,rtc}, occasionally predicting erroneous ``retreating'' motions (backward fluctuations relative to the task progress). In contact-rich manipulation like flipping, such retreating actions can cause the end-effector to momentarily detach from the object surface, leading to a complete loss of the established contact state and subsequent failure. By aligning the predicted chunk with the current execution progress via DTW and dropping these inconsistent preceding waypoints, our scheduler ensures a monotonic and continuous interaction profile, thereby maintaining stable contact throughout the maneuver. This highlights the potential of our scheduler as a plug-and-play module for enhancing the physical robustness of general visuomotor policies.

\section{Experiment Details}

\subsection{Task Design}

We carefully designed an evaluation suite comprising three distinct tasks that span two fundamental categories of contact-rich manipulation: \textbf{surface polishing} and \textbf{peg-in-hole insertion}. While many prior works limit their evaluation to a single interaction category, our benchmark covers both continuous surface tracking and geometric constrained assembly. Furthermore, we upgraded the task designs to elevate their difficulty, thereby offering a more rigorous assessment of each policy's capabilities. The task descriptions are shown as follows.

\begin{itemize}
    \item \textbf{\textit{Push and Flip}}. Inspired by~\cite{kober2015learning} but with slight modifications, this task requires the robot to push an object until it contacts a wall (formed by a large heavy object), and then flip the box upright using both the ground and the wall. The task poses 3 key challenges: \textbf{(1)} correctly sensing contact and switching to the flipping phase; switching too early loses contact, while switching too late pushes the heavy object; \textbf{(2)} exerting forces along changing directions during flipping; and \textbf{(3)} avoiding applying excessive forces that would push the heavy object.

    \item \textbf{\textit{Plug in EV Charger}}. Inspired by~\cite{tavla} but with modifications, this task requires the robot to plug an EV charger into a socket. Unlike the original task in~\cite{tavla}, our fast-charging socket has tighter tolerances and requires sufficiently large insertion forces (approximately 160N) to achieve full engagement. The task poses 3 key challenges: \textbf{(1)} precisely aligning the plug with the socket under tight tolerances; \textbf{(2)} sensing incorrect contact with the socket surface and adjusting the insertion direction based on force feedback; and \textbf{(3)} exerting sufficiently large forces to complete insertion without causing jamming.

    \item \textbf{\textit{Scrape off Sticker}}. This task is self-designed and requires the robot to scrape stickers off a surface. It is divided into an easy and a hard version. In the easy version, the robot only needs to scrape the side without glue, so merely establishing contact is sufficient. In the hard version, the robot must scrape the side with glue, requiring the application of sufficient force (approximately 35N) to remove the sticker. The task poses 3 key challenges: \textbf{(1)} maintaining stable contact while moving along the surface, \textbf{(2)} sensing and adjusting forces to successfully remove stickers with glue, and \textbf{(3)} avoiding excessive forces that could damage the surface or the object.

\end{itemize}

\subsection{Data Collection}

Recently, a growing body of teleoperation systems incorporate haptic feedback~\cite{rdp, zhou2025admittance, kamijo2024learning, bazhenov2025echo, satsevich2025prometheus}. However, many implementations provide haptics through visual cues or vibrotactile signals, rather than delivering physically faithful force feedback. Alternatives include kinesthetic teaching~\cite{acp}, handheld interfaces~\cite{umi, forcemimic, choi2026wild}, and exoskeleton-based systems~\cite{airexo, rise2, dexop}. For contact-rich manipulation, high-quality demonstrations typically require both accurate robot state sensing and a global-view camera to support the global vision policy. We therefore adopt arm-to-arm teleoperation~\cite{tdk}, which mirrors the interaction forces measured on the follower arm back to the operator's controller arm. This design enables high-fidelity force feedback and substantially improves demonstration quality. For example, in \textbf{\textit{Push and Flip}}, the operator maintains a contact force of roughly 10N-20N to prevent overloading; in \textbf{\textit{Scrape off Sticker}}, the operator regulates the contact force within 35N--50N to fully remove the sticker. Then, we collect 50 high-quality demonstrations for each task as the training data. The RGB-D images are recorded at 15Hz, and the robot states (including end-effector pose, end-effector velocity, and force/torque signals) are recorded at 1000Hz.
\subsection{Baselines}

In this work, we compare our method against several baselines, including vision-only visuomotor policies~\cite{rise2}, vision-language-action (VLA) models~\cite{pi05}, force-aware visuomotor policies~\cite{foar, rdp}, and force-aware VLA models~\cite{forcevla, tavla}.

\begin{itemize}

\item \textbf{\textit{RISE-2}}~\cite{rise2}, a generalizable visuomotor policy that fuses sparse 3D geometric features with dense 2D semantic features for action prediction.

\item $\boldsymbol{\pi}_{0.5}$~\cite{pi05}, a VLA model exhibiting open-world generalization capabilities, co-trained on heterogeneous tasks across large-scale robotic datasets~\cite{oxe, rt1, rh20t}.

\item \textbf{\textit{FoAR}}~\cite{foar}, a force-aware visuomotor policy based on RISE~\cite{rise} that selectively integrates force features utilizing a future contact predictor.

\item \textbf{\textit{RDP}}~\cite{rdp}, a slow-fast force-aware reactive policy based on Diffusion Policy (DP)~\cite{dp}. It uses a fast force-aware tokenizer for latent action encoding/decoding and utilizes a latent diffusion policy to predict latent action chunks.

\item \textbf{\textit{ForceVLA}}~\cite{forcevla}, a force-aware VLA model based on $\pi_0$~\cite{pi0} that incorporates a mixture-of-experts (MoE) architecture~\cite{moe} for vision-language-force fusion.

\item \textbf{\textit{TA-VLA}}~\cite{tavla}, a force-aware VLA model based on $\pi_0$~\cite{pi0} that leverages observed torque sequences as input and predicts future torque as an auxiliary supervision.
\end{itemize}

To ensure a fair comparison, we utilize 6-DoF force/torque signals measured at the end-effector as the force input/output for all baselines. All other hyperparameters adhere to the specifications in their respective official implementations.

\subsection{Metrics}

We use stage-wise, accumulated success rates as the primary metric for each task. The partial successes (recorded as 0.5) for the ``flip'' stage of \textbf{\textit{Push and Flip}} and the ``plug in'' stage of \textbf{\textit{Plug in EV Charger}} are demonstrated in Fig.~\ref{fig:metric}.

\begin{figure}[h]
    \centering
    \includegraphics[width=0.8\linewidth]{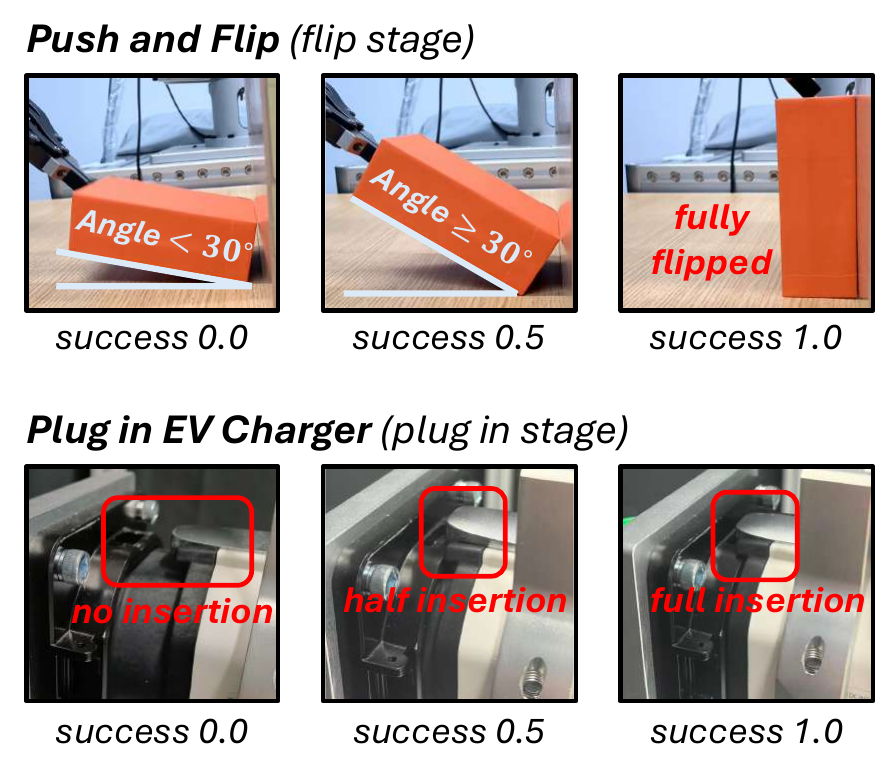}
    \caption{\textbf{Evaluation Metrics Explanation of Partial Success.}}
    \label{fig:metric}
\end{figure}

\begin{figure*}
    \centering
    \includegraphics[width=\linewidth]{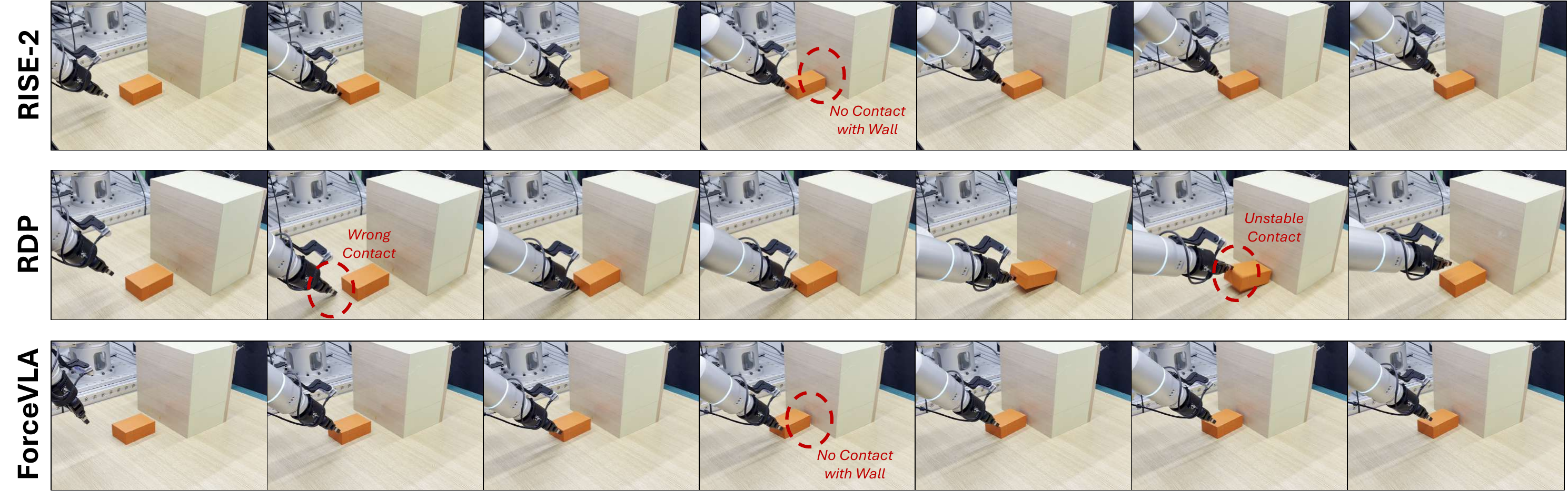}
    \caption{\textbf{Failure Analyses on the \textit{Push and Flip} Task.} Most failures arise from attempting to flip the object without first establishing contact with the wall. Force-aware policies using position control often exhibit unstable or unintended contacts.}
    \label{fig:failure-flip}
\end{figure*}

\begin{figure*}
    \centering
    \includegraphics[width=\linewidth]{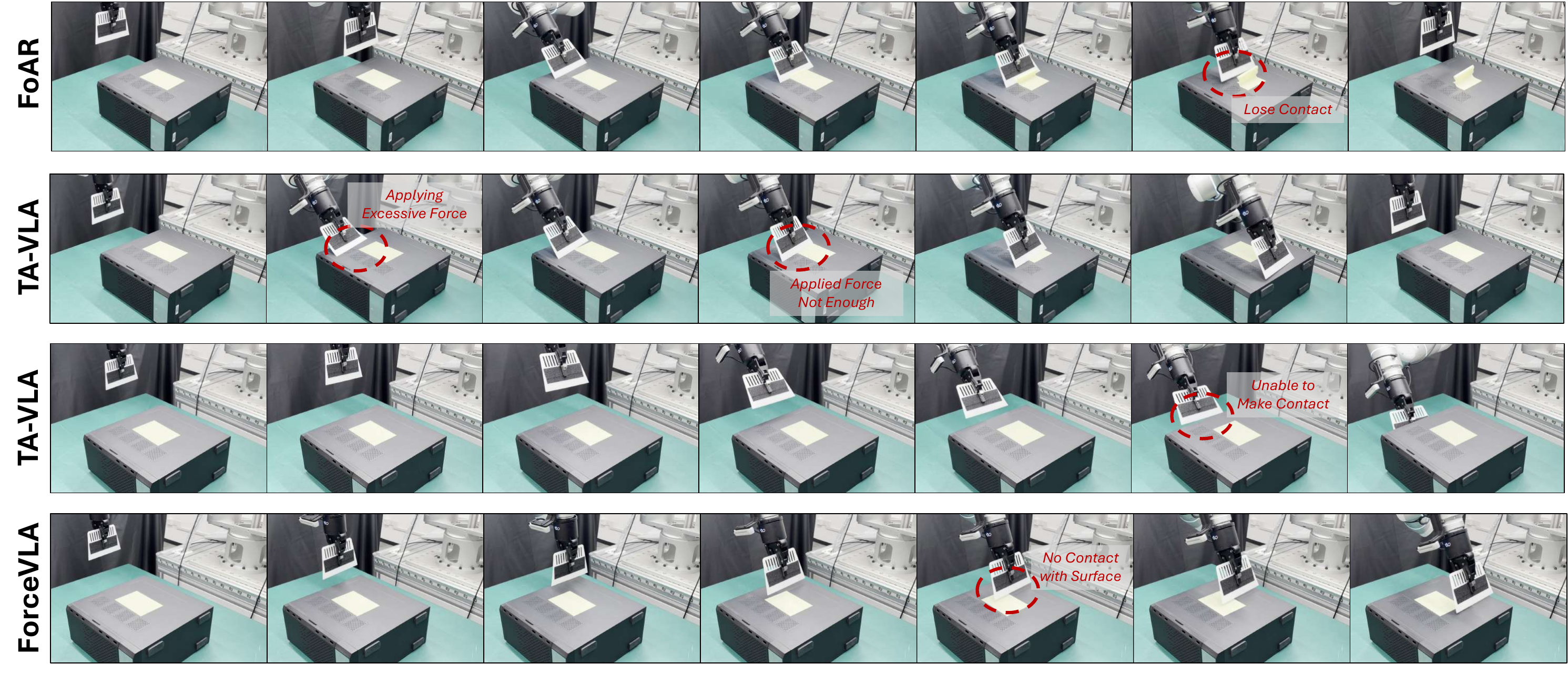}
    \caption{\textbf{Failure Analyses on the \textit{Scrape off Sticker} (Hard) Task.} Force-aware VLA models typically fail to establish stable contact with the surface. All baselines usually cannot exert sufficient and stable force to scrape off the sticker.}
    \label{fig:failure-shovel}\vspace{-0.3cm}
\end{figure*}

\subsection{Failure Analyses}

We summarize typical failure cases on \textbf{\textit{Push and Flip}} in Fig.~\ref{fig:failure-flip}. Most failures occur when the policy attempts to flip the object before establishing contact with the wall. This issue is especially pronounced for vision-only policies due to the absence of force feedback. Notably, many force-aware baselines exhibit the same failure mode: it is difficult to learn the critical ``contact-then-transition'' logic from demonstrations alone, as the transition happens within roughly 50ms in human executions. RDP shows an additional failure pattern. When its predicted latent action chunk corresponds to pushing, but it unexpectedly pushes the box to reach the wall, the resulting out-of-distribution force signals can destabilize the fast tokenizer, producing erratic actions that induce incorrect rotations and ultimately lead to failure. Finally, force-aware methods that rely on position control often struggle to maintain stable contact during the flip, causing the box to slip or drop and preventing task completion. On the contrary, our \textbf{\textit{Force Policy}} utilizes force control to establish and maintain contact, effectively mitigating the issues that occur in baselines.

\begin{figure*}
    \centering
    \includegraphics[width=\linewidth]{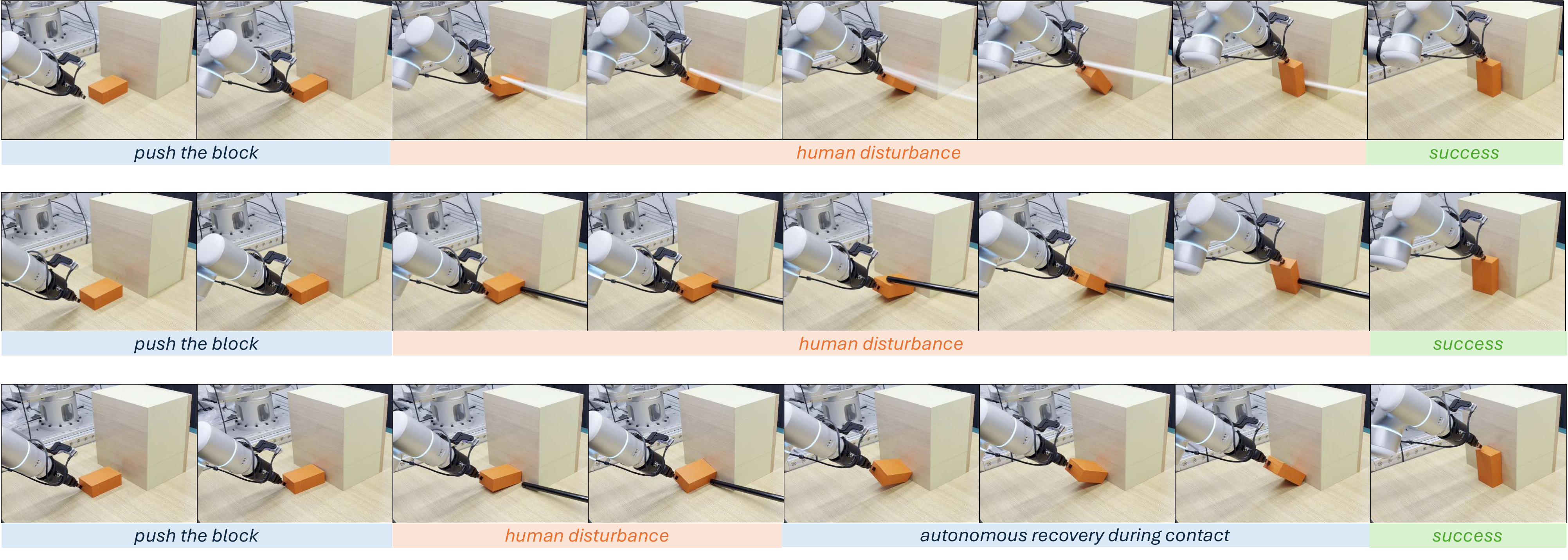}
    \caption{\textbf{Contact Robustness of \textit{Force Policy} under Disturbances in the \textit{Push and Flip} Task.} \textbf{\textit{Force Policy}} is robust under human disturbances and can perform autonomous recovery during the contact phase.}
    \label{fig:disturbance}\vspace{-0.3cm}
\end{figure*}

For \textbf{\textit{Plug in EV Charger}}, most approaches fail to apply sufficient force to fully seat the connector in the socket. This task is particularly challenging because the required insertion force is close to the robotic arm's torque limits, and small motion errors or interaction frame estimation inaccuracies can trigger the hardware protection mechanisms and terminate execution. Our \textbf{\textit{Force Policy}} can also encounter this failure mode. Addressing it likely requires finer-grained, more accurate force regulation (especially near saturation), which we leave as an important direction for future work.

Typical failure cases on \textbf{\textit{Scrape off Sticker}} (Hard) are shown in Fig.~\ref{fig:failure-shovel}. Force-aware VLA models such as TA-VLA and ForceVLA often fail to establish contact with the surface, suggesting that noisy force measurements can hinder vision-guided global motion, which is consistent with the observation in~\cite{foar}. Across baselines, another common failure mode is insufficient and unstable contact force: policies frequently lose contact and revert to smaller forces, leaving the sticker partially intact. In contrast, \textbf{\textit{Force Policy}} leverages explicit force regulation, using predicted control parameters to maintain the desired contact force in the desired direction, which yields a substantial performance improvement.

\subsection{Generalization Evaluation}

To rigorously evaluate the robustness of our learned policy, we curated a diverse set of unseen test objects for the \textit{Push and Flip} task. These objects were specifically selected to introduce significant domain shifts relative to the training distribution, probing the policy's ability to adapt to variations in physical properties and visual appearance. A complete catalog of these objects is visualized in Fig.~\ref{fig:gen-object}. The test set is categorized along three primary axes of variation:

\begin{itemize} 
\item \textbf{Geometric Variation.} The objects feature distinct geometric profiles, ranging from standard cuboids to irregular shapes and cylinders (\textit{e.g.}, cylindrical cup). Variations in aspect ratio and edge curvature challenge the policy's ability to locate stable contact points and execute precise flipping motions without prior shape knowledge.

\item \textbf{Stiffness and Material Compliance.} Variations in object stiffness are critical for evaluating physical adaptation for force-aware policies. The test set includes rigid objects like the hard wooden block and deformable objects like the soft sponge. These materials have different force-deformation properties, requiring the policy to dynamically adjust its commands to maintain effective interaction without crushing soft objects or slipping on hard ones.

\item \textbf{Visual Appearance.} To test visual generalization, we included objects with colors and textures that were not present in the training dataset. This includes objects with complex textures (\textit{e.g.}, square box) and different colors (\textit{e.g.}, purple box), ensuring that the policy is robust to perceptual noise.
\end{itemize}

\begin{figure}[h]
    \centering
    \includegraphics[width=0.75\linewidth]{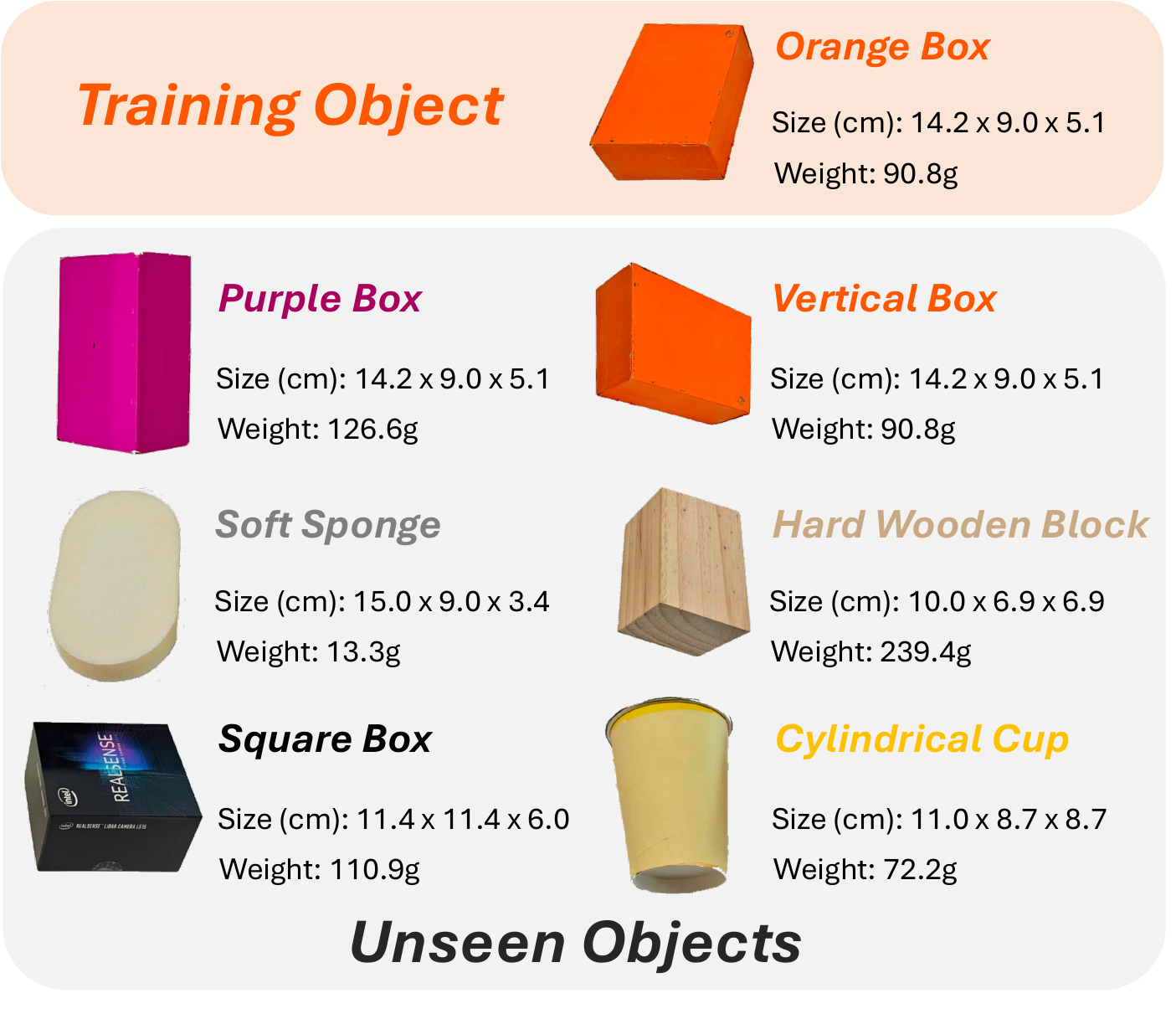}
    \caption{\textbf{Objects in Generalization Evaluation.} Unseen objects of different colors, geometries, and stiffnesses are selected to evaluate the generalization ability of the policies.}
    \label{fig:gen-object}\vspace{-0.3cm}
\end{figure}

As demonstrated in the experiments, our method successfully generalizes across these categories, attributing its success to the explicit modeling of the interaction frame and the adaptive fusion of proprioceptive feedback, which compensates for visual ambiguities and geometric uncertainties.

\begin{figure*}
    \centering
    \includegraphics[width=\linewidth]{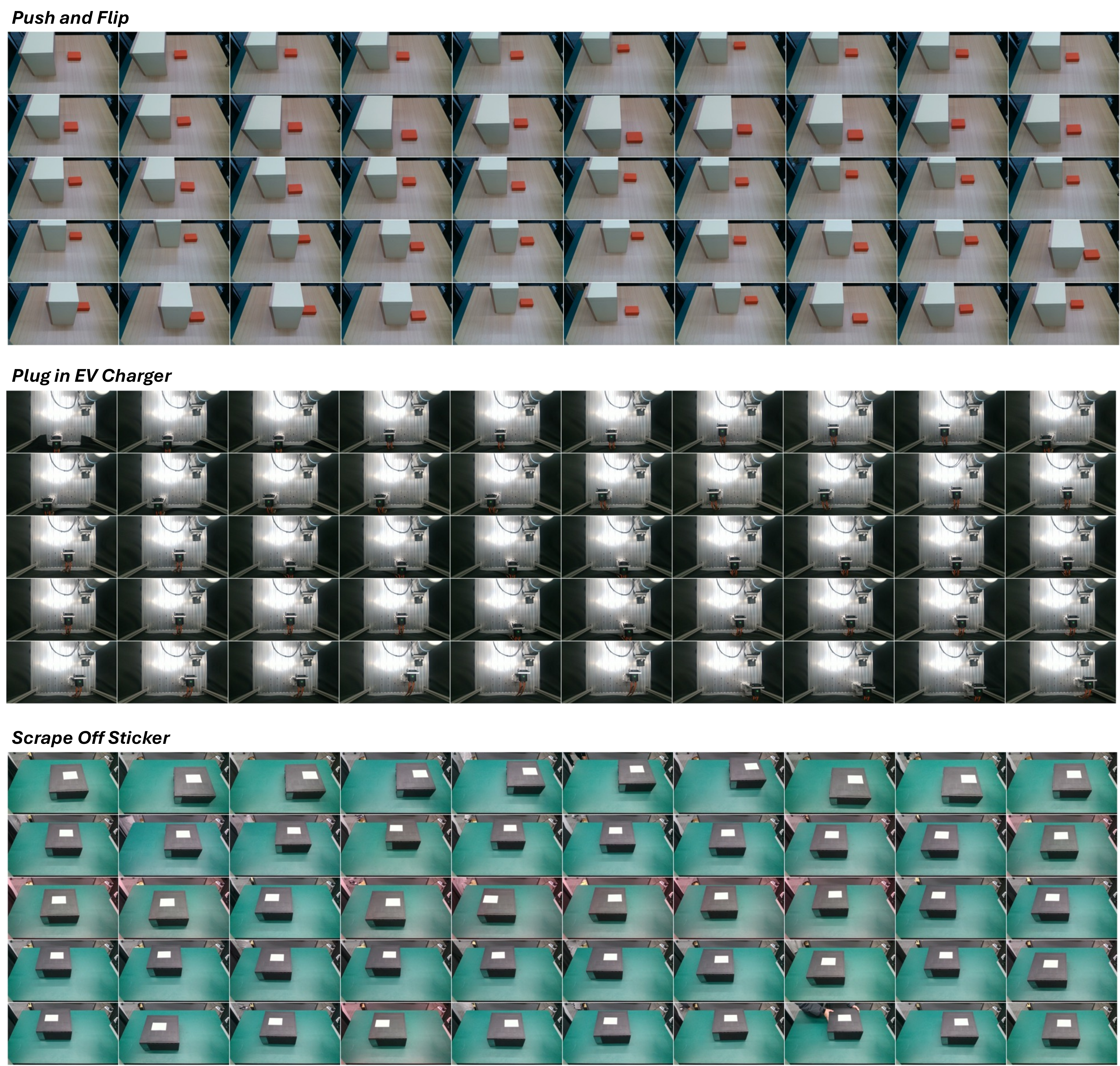}
    \caption{\textbf{Demonstration Overview for Each Task.} We visualize the initial configurations from 50 demonstrations for each task.}
    \label{fig:demo-scene}
\end{figure*}

\begin{figure*}
    \centering
    \includegraphics[width=\linewidth]{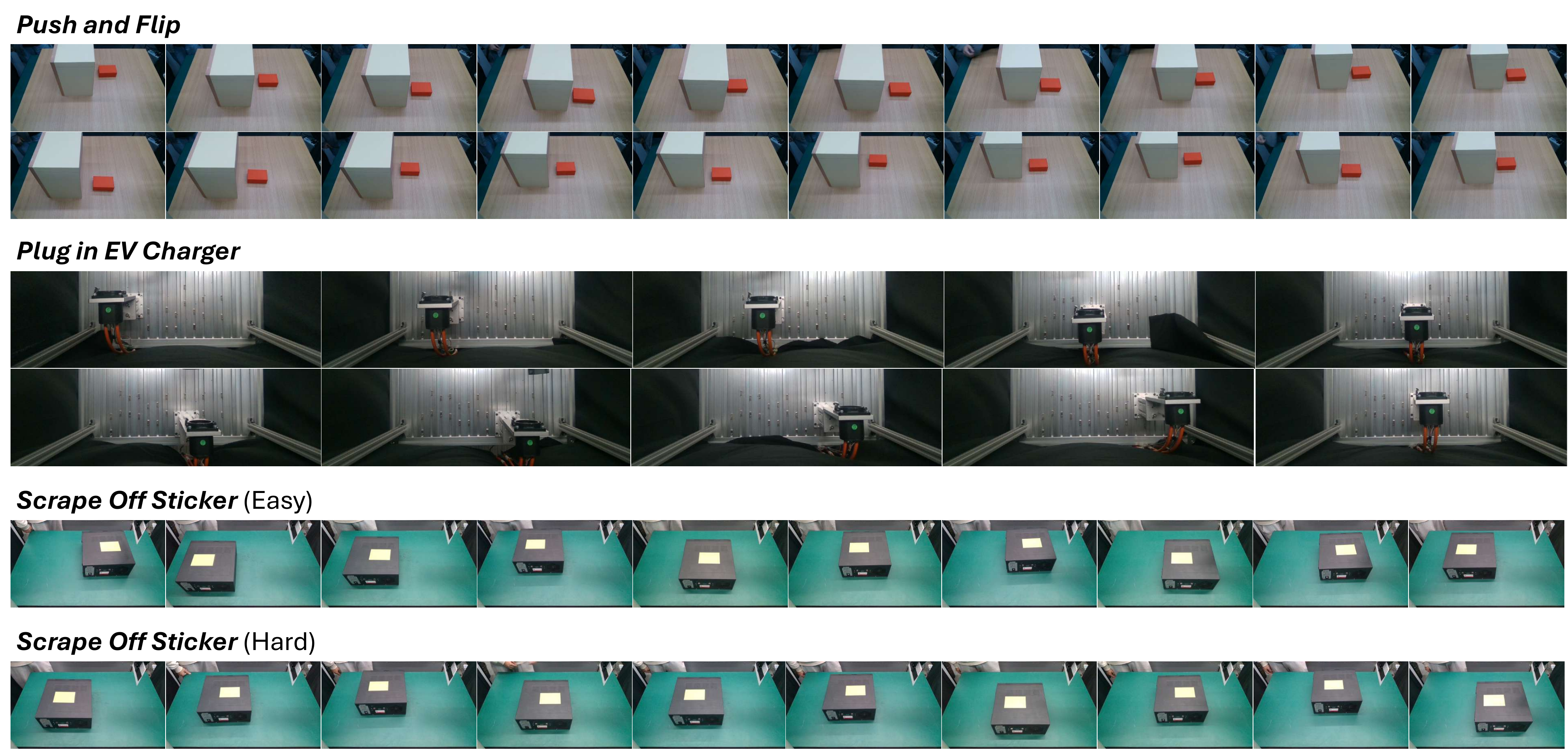}
    \caption{\textbf{Evaluation Configurations for Each Task.} These configurations are randomly initialized beforehand.}
    \label{fig:eval-scene}
\end{figure*}

\subsection{Contact Robustness}

To evaluate the policy's stability in unstructured physical interactions, we introduced unexpected human disturbances during the contact maintenance phase in the \textbf{\textit{Push and Flip}} task. As depicted in Fig.~\ref{fig:disturbance}, the proposed Force Policy demonstrates significant robustness against unmodeled disturbances. A key characteristic of the policy is the explicit regulation of the interaction wrench. When the human operator physically interferes with the robot, the policy adapts the end-effector pose to maintain the target wrench $\boldsymbol{\mathcal{W}}^*$. 

This mechanism is critical for ensuring contact stability. \textbf{\textit{Force Policy}} prioritizes the stability of the contact force over geometric strictness in an adaptive manner. By dynamically adjusting the robot's pose to satisfy the force constraint, the system effectively absorbs the disturbance. Consequently, the contact force remains smooth and constant despite external disturbances, verifying the policy's ability to maintain stable physical interaction under uncertainty.

\subsection{Demonstration Overview}

We visualize the initial configurations of all 50 demonstrations for each task in Fig.~\ref{fig:demo-scene}. The objects are randomly placed within the workspace during demonstration collection.

\subsection{Evaluation Overview}

We visualize the evaluation configurations for each task in Fig.~\ref{fig:eval-scene}. These configurations are randomly initialized before any policy is trained or tested, following previous practices in~\cite{cage, umi} for fair comparisons.

\subsection{Results with Confidence Interval}

Final-stage results with 95\% confidence intervals are reported in Tab.~\ref{tab:result_final_only_compact}. The results show that \textbf{\textit{Force Policy}} significantly outperforms baselines.

\begin{table}[htbp]
    \centering
    \caption{\textbf{Final-Stage Success Rates in Evaluations}.}
    \label{tab:result_final_only_compact}
    \footnotesize
    \setlength{\tabcolsep}{3pt}
    \begin{tabular}{l cccc}
    \toprule
        \textbf{Policy} 
         & \textbf{\textit{Flip}} 
         & \textbf{\textit{Charger}} 
         & \textbf{\textit{Scrape}} (Easy)
         & \textbf{\textit{Scrape}} (Hard) \\
    \midrule
         RISE-2      & 42.5 $\pm$ 21.8 & 0.0 $\pm$ 0.0 & 80.0 $\pm$ 30.2 & 10.0 $\pm$ 22.6 \\
         $\pi_{0.5}$ & 52.5 $\pm$ 17.8 & 0.0 $\pm$ 0.0 & 70.0 $\pm$ 34.6 & 20.0 $\pm$ 30.2 \\ \midrule
         RDP         & 57.5 $\pm$ 19.0 & 5.0 $\pm$ 11.3 & 70.0 $\pm$ 34.6 & 20.0 $\pm$ 30.2 \\
         FoAR        & 60.0 $\pm$ 23.5 & 10.0 $\pm$ 15.1 & 40.0 $\pm$ 36.9 & 20.0 $\pm$ 30.2 \\
         ForceVLA    & 30.0 $\pm$ 19.2 & 0.0 $\pm$ 0.0 & 0.0 $\pm$ 0.0 & 0.0 $\pm$ 0.0 \\
         TA-VLA      & 62.5 $\pm$ 22.6 & 0.0 $\pm$ 0.0 & 50.0 $\pm$ 37.7 & 10.0 $\pm$ 22.6 \\
    \midrule
         \rowcolor[HTML]{f2f2f2} Force Policy 
         & \textbf{95.0 $\pm$ 7.2} 
         & \textbf{65.0 $\pm$ 24.1} 
         & \textbf{100.0 $\pm$ 0.0} 
         & \textbf{90.0 $\pm$ 22.6} \\
    \bottomrule
    \multicolumn{5}{l}{\footnotesize \textbf{Note}: All values in \%. $\pm$ denotes 95\% CI. Clip to [0, 100]\%.}
    \end{tabular}
\end{table}

\end{document}